\documentclass[11pt]{article} 

\usepackage[letterpaper,margin=1in]{geometry}

\usepackage[letterpaper,margin=1in]{geometry}

\def\colorful{1}
\ifnum\colorful=1
\newcommand{\new}[1]{{\color{red} #1}}

\else
\newcommand{\new}[1]{{#1}}

\fi

\newif\ifred
\redtrue

\usepackage[T1]{fontenc}
\usepackage{microtype}

\usepackage{titlesec}
\titleformat*{\paragraph}{\bfseries}

\usepackage{amsthm,amsmath,amssymb,amsfonts,amssymb,mathtools}
\usepackage{amsmath,amssymb,amsfonts,amssymb,mathtools}
\usepackage{xcolor}
\usepackage{empheq}
\usepackage{dsfont}
\usepackage{mathrsfs}
\usepackage{graphicx}
\usepackage{enumitem}
\usepackage{aliascnt} 
\usepackage{xspace}
\usepackage{bbm}

\usepackage{caption}
\usepackage{tikz}
\usetikzlibrary{patterns}
\usetikzlibrary{patterns}
\usetikzlibrary{arrows,shapes,automata,backgrounds,petri,positioning}
\usetikzlibrary{shadows}
\usetikzlibrary{calc}
\usetikzlibrary{spy}
\usetikzlibrary{angles, quotes}
\usetikzlibrary{matrix}
\usepackage{pgf,pgfplots}
\usepackage{pgfmath,pgffor}
\pgfplotsset{compat=1.17}

\usepackage{framed}
 \usepackage{algorithm}
\usepackage[noend]{algpseudocode}

\definecolor[named]{ACMBlue}{cmyk}{1,0.1,0,0.1}
\definecolor[named]{ACMYellow}{cmyk}{0,0.16,1,0}
\definecolor[named]{ACMOrange}{cmyk}{0,0.42,1,0.01}
\definecolor[named]{ACMRed}{cmyk}{0,0.90,0.86,0}
\definecolor[named]{ACMLightBlue}{cmyk}{0.49,0.01,0,0}
\definecolor[named]{ACMGreen}{cmyk}{0.20,0,1,0.19}
\definecolor[named]{ACMPurple}{cmyk}{0.55,1,0,0.15}
\definecolor[named]{ACMDarkBlue}{cmyk}{1,0.58,0,0.21}

\usepackage[colorlinks,citecolor=blue,linkcolor=magenta,bookmarks=true]{hyperref}
 \usepackage[nameinlink]{cleveref}
\creflabelformat{ineq}{#2{\upshape(#1)}#3}
\crefname{sub}{Subsection}{Subsection}
\creflabelformat{Subsection}{#2{\upshape(#1)}#3}
\crefname{sdp}{SDP}{SDP}
\creflabelformat{sdp}{#2{\upshape(#1)}#3}
\crefname{lp}{LP}{LP}
\creflabelformat{lp}{#2{\upshape(#1)}#3}
\usepackage{aliascnt}
\usepackage{cleveref}
\crefname{ineq}{Inequality}{Inequality}
\creflabelformat{ineq}{#2{\upshape(#1)}#3}
\crefname{sub}{Subsection}{Subsection}
\creflabelformat{Subsection}{#2{\upshape(#1)}#3}
\crefname{sdp}{SDP}{SDP}
\creflabelformat{sdp}{#2{\upshape(#1)}#3}
\crefname{lp}{LP}{LP}
\creflabelformat{lp}{#2{\upshape(#1)}#3}



\newtheorem{theorem}{Theorem}[section]

\newtheorem{lemma}{Lemma}[section]

\newtheorem{informal theorem}[theorem]{Theorem (informal statement)}

 \newtheorem{claim}{Claim}[section]
\newtheorem{fact}{Fact}[section]

\newtheorem{definition}{Definition}[section]

\newtheorem{conjecture}{Conjecture}[section]

\newcommand{\lp}{\left}
\newcommand{\rp}{\right}
\newcommand\norm[1]{\left\| #1 \right\|}

\DeclareMathOperator*{\pr}{\mathbf{Pr}}
\DeclareMathOperator*{\E}{\mathbf{E}}
\newcommand{\proj}{\mathrm{proj}}

\DeclareMathOperator*{\argmin}{argmin}

\newcommand{\err}{\mathrm{err}}

\newcommand{\B}{\mathbb{B}} 
\renewcommand{\H}{\mathcal H }

\newcommand{\R}{\mathbb{R}}
\newcommand{\s}{\mathbb{S}}

\newcommand{\N}{\mathbb{N}}
\newcommand{\eps}{\epsilon}
\newcommand{\dtv}{d_{\mathrm TV}}
\newcommand{\poly}{\mathrm{poly}}

\newcommand{\var}{\mathbf{Var}}

\newcommand{\sign}{\mathrm{sign}}

\newcommand{\Ind}{\mathds{1}}

\newcommand{\matr[1]}{\boldsymbol{#1}}

\newcommand{\esp}{\eps}

\newcommand{\opt}{\mathrm{opt}}

\newcommand{\abs}[1]{\lp| #1 \rp|}
\newcommand{\A}{\mathcal{A}}

\newcommand{\card}[1]{|#1|}

\renewcommand\Pr{\pr}

\title{Efficiently Learning Drifting Halfspaces with Massart Noise}

\author{
Mingchen Ma\thanks{Supported by NSF Award CCF-2144298 (CAREER).}\\
University of Wisconsin-Madison\\
\texttt{mingchen@cs.wisc.edu}
\and
Guyang Cao\thanks{Supported by NSF CAREER Award CCF-2440563.}\\
University of Wisconsin-Madison\\
\texttt{gcao8@wisc.edu}
\and
Jelena Diakonikolas\thanks{Supported in part by the Air Force Office of Scientific Research under award number FA9550-24-1-0076, NSF CAREER Award CCF-2440563, and NSF MFAI Award DMS-2502282.}\\
University of Wisconsin-Madison\\
\texttt{Jelena@cs.wisc.edu}
\and
Ilias Diakonikolas\thanks{Supported by NSF Medium Award CCF-2107079, 
ONR Award number N00014-25-1-2268, and an H.I. Romnes Faculty Fellowship.}\\
University of Wisconsin-Madison\\
\texttt{ilias@cs.wisc.edu}
}

\begin{document}

\maketitle

\begin{abstract}
We study the problem of learning a drifting concept in the 
presence of Massart noise. In this framework, 
an online learner has access to a history 
of independent samples whose labels are noisy versions 
of a target concept that may change from round to round. 
The goal is to output, in each round, a hypothesis with 
small prediction error.
We study the complexity of this learning 
problem for the fundamental class of margin-separable 
linear classifiers (halfspaces). 
On the positive side, we give a 
computationally efficient learner achieving error 
$\eta + \tilde O(\Delta^{1/3}/\gamma)$, 
where $\eta$ upper bounds the Massart noise rate, 
$\Delta$ is the drift rate, and $\gamma$ is the margin. 
Interestingly, in the realizable setting, 
an  adaptation of our techniques 
yields an efficient learner with an improved error 
rate over prior work. 
On the lower-bound side, we provide formal evidence of an 
information-computation tradeoff, strongly suggesting that 
our algorithm's performance is essentially optimal.
Specifically, 
while the information-theoretically optimal error 
scales with $\Delta^{1/2}$, we prove 
that $\Delta^{1/3}$-scaling is unavoidable for 
low-degree polynomial tests, 
even in the special case of random classification noise. 
\end{abstract}

\setcounter{page}{0}

\thispagestyle{empty}

\newpage

    \section{Introduction}

Statistical learning theory~\cite{Vapnik:98} typically 
focuses on settings 
in which the training set consists of i.i.d.\ samples 
from a fixed unknown distribution, and the performance 
of the trained model is evaluated on the same distribution.  
This modeling assumption does not capture 
a range of machine learning applications, 
where the training data is drawn from a non-stationary distribution 
that drifts over time, and the deployed model needs to perform well 
given data drawn from a different distribution.
Examples include financial prediction, analysis of consumer preferences, 
weather forecast, and more recently language models. 
Specifically, in language model training~\cite{arakelyan2023exploring}, 
the models are trained over static snapshots of text 
collected from websites that are frequently changing. 
When the models are deployed, they need to work with timely news, 
facts, or programming APIs. 
To address such settings, we require learning models and associated 
algorithms that succeed in the presence of distribution drift.

A prototypical statistical model to formalize such non-static 
settings is that of binary classification  with a drift~\cite{helmbold1991tracking,helmbold1994tracking,bartlett1992learning}. 
For concreteness, we formally define this model below.

\begin{definition}[Learning under Distribution Drift]\label{def drift}
Let $\mathcal{C}$ be a hypothesis class of $\{\pm 1\}$-valued functions over
some domain $X$. For $t=1,2,\dots,$ let $D^{(t)}$ 
be a distribution over $X \times \{\pm 1\}$ such that 
$\dtv(D^{(t)},D^{(t+1)})\le \Delta$, where $\dtv$ is the total variation distance
and $\Delta \in (0,1)$ is the drift rate. 
The error of a hypothesis $h: X \to \{\pm 1\}$ 
with respect to $D^{(t)}$ is 
$\err^{(t)}(h):= \Pr_{(x^{(t)},y^{(t)}) \sim D^{(t)}} [h(x^{(t)}) \neq y^{(t)}]$. 
A learning algorithm $\A$ is given 
a sequence of examples $\{ (x^{(t)},y^{(t)})\}$, 
for $t=1, 2, \dots$, where $(x^{(t)},y^{(t)}) \sim D^{(t)}$, 
and works in an online fashion. Specifically, 
given $\{(x^{(i)},y^{(i)})\}_{i=1}^{t-1}$, $\A$ outputs a hypothesis 
$\hat{h}^{(t)}: X \to \{\pm 1\}$ and 
then receives an example $(x^{(t)},y^{(t)})$ drawn from $D^{(t)}$. 
The goal of the learner is to minimize $\err^{(t)}(\hat{h}^{(t)})$, 
as compared to $\opt_t:= \min_{h \in \mathcal{C}} \err^{(t)}(h)$.
\end{definition}



This model 
has been extensively studied over the past decades 
(see Section~\ref{ssec:related}). 
Much of the early work focused on determining the 
optimal error rate for general hypothesis classes. 
Specifically, for any class $\mathcal{C}$ of VC-dimension 
$d$, \cite{helmbold1991tracking,bartlett1992learning,
helmbold1994tracking,BarveLong:97, Long:99} establish 
the optimal error rate of \(\Theta((d\Delta)^{1/2})\) for the realizable case
(i.e., when the labels in each round are consistent with a function in $\mathcal{C}$)
and the excess error rate of \(\Theta((d\Delta)^{1/3})\) 
in the agnostic setting. 
A key idea underlying these works is the following:
When $\dtv(D^{(t)},D^{(t+1)})$ is small, 
by carefully selecting a number of past samples that 
balance the estimation error and the error due to the distribution drift,
empirical risk minimization (ERM) over the selected sample gives a 
hypothesis with statistically optimal error.

Unfortunately, the aforementioned statistically optimal 
approach leads to an exponential time algorithm 
in general---even in the realizable setting,  
where each $x^{(t)}$ is labeled 
according to a hypothesis $h^{(t)} \in \mathcal{C}$.
More broadly, despite extensive investigation, 
the {\em computational complexity} of learning with distribution drift 
remains poorly understood, even for basic hypothesis classes. 
Of course, if the non-drift version of the problem (corresponding to $\Delta=0$)
is computationally intractable, the hardness is inherited in the drift setting.
Hence, it is natural to focus on concept classes and models of label noise  
for which algorithmic results are known in the non-drift setting. 

Here we study the complexity of drift learning 
for the class of linear classifiers or halfspaces. 
A halfspace is any function of the form $h_w(x) = \sign(w\cdot x)$, where 
$w$ is a weight vector and $\sign(t)$ is equal to $1$ for $t \geq 0$ and $-1$ otherwise. 
Let $\H := \{h_w(x) = \sign(w\cdot x) \mid w \in \s^{d-1}\}$ denote the class of halfspaces 
on $\R^d$, where $\s^{d-1}$ is the unit ball. 

Prior algorithmic results for drifting halfspaces  
are restricted to the realizable setting and 
either achieve highly suboptimal error rates \cite{helmbold1994tracking} 
or require very strong distributional assumptions~\cite{crammer2010regret,hanneke2015learning}.
Departing from this setup, 
we aim to develop \emph{efficient}\footnote{Henceforth, ``efficient'' refers to polynomial-time algorithms in parameters describing the problem ($d, 1/\gamma, 1/\Delta$). } algorithms
under minimal distributional assumptions 
and in the presence of label noise. 
More specifically, our only assumption on the distribution of examples 
will be the existence of margin $\gamma > 0$ with respect to the target halfspace. 
Regarding the noise model, we assume that the labels in each
round are corrupted by Massart noise~\cite{Massart2006}, 
one of the classical semi-random noise models in the literature for 
which efficient learners 
are known in the non-drift setting~\cite{diakonikolas2019distribution}\footnote{In the agnostic model, 
known hardness results rule out
efficient algorithms even with a margin assumption~\cite{Daniely16}.}. 

In the Massart model, the label of example $x$ can be flipped
with probability $\eta(x) \leq \eta <1/2$, independently across
examples. The noise function $\eta(x)$ is assumed to be bounded and 
is \emph{unknown} to the learner. The special case of Massart noise 
where $\eta(x) = \eta$ for all $x$ 
is known as Random Classification Noise (RCN)~\cite{AL88}.

In the drift setting, we allow a different source of Massart 
noise in each round. Specifically, let $h^{(t)}_w(x)$ be the target function (halfspace) 
and let $\eta_t(x): \s^{d-1} \to [0,\eta]$. For $t \ge 1$, we say that 
a distribution $D^{(t)}$ is realized by $(h^{(t)}_w,\eta_t(x))$ 
if for every $x$, 
$\eta_t(x)=\Pr_{(x,y)\sim D^{(t)}}[h^{(t)}_w(x) \neq y \mid x].$ 
Under \Cref{def drift}, we will restrict every $D^{(t)}$ 
to be realized by some unknown $(h^{(t)}_w,\eta_t(x))$. (The case of RCN
corresponds to $\eta_t(x) \equiv \eta_t \leq \eta$.)

\subsection{Our Results} \label{ssec:results}
Our work provides the first results for drifting concept classes under Massart noise,  in terms of both statistical and  computational complexity. 
For statistical complexity, in \Cref{sec information}, we show that for a general hypothesis class with VC dimension $d$, the statistically optimal excess error with Massart noise---achieved by an ERM---is $\tilde\Theta(\sqrt{d\Delta})$ 
(and $\tilde\Theta(\sqrt{\Delta}/\gamma)$ for $\gamma$-margin halfspaces)---as opposed to $ \Theta((d\Delta)^{1/3})$, which applies to adversarial label noise.\footnote{The 
``tilde'' notation hides logarithmic factors in the argument.} 

We then characterize the error that can be achieved by  \emph{efficient} learning algorithms for learning $\gamma$-margin drifting halfspaces with Massart noise. 
Our main algorithmic result is the following:
\begin{theorem}[Main Algorithmic Result]\label{th massart drift}
    Consider the problem of learning drifting $\gamma$-margin halfspaces 
    with $\eta$-Massart noise. There is an algorithm $\A$ that runs in $\poly(d,1/\gamma,1/\Delta)$ time and outputs a hypothesis $\hat{h}^{(T)}$ such that 
    for any time step $T=\tilde\Omega(\Delta^{-2/3})$, with 
     probability at least $9/10$, 
     $\err^{(T)}(\hat{h}^{(T)}) \le \eta+ \tilde{O}(\Delta^{1/3}/\gamma)$.
\end{theorem}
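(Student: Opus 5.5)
The plan is to combine a sliding-window reduction with an efficient Massart halfspace learner (e.g.\ the one of \cite{diakonikolas2019distribution}) whose error guarantee degrades gracefully under a small total-variation perturbation of the sample distribution.

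\textbf{Step 1: sliding window.} At round $T$, take the most recent $m$ samples, $t\in[T-m,T-1]$. By the triangle inequality, $\dtv(D^{(t)},D^{(T)})\le m\Delta$ for each of them. Hence the mixture $\bar D=\frac1m\sum_t D^{(t)}$ is within $m\Delta$ of $D^{(T)}$. This single bound controls both the comparison of errors and the deviation of the labels from a single $\gamma$-margin Massart instance with target $h^{(T)}_w$. Concretely, $\bar D$ is $m\Delta$-close in total variation to a distribution realized by $(h_w^{(T)},\eta_T)$. The choice of $m$ is made at the end.

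\textbf{Step 2: robust Massart learner.} Run an efficient learner on these $m$ samples. The natural candidate is the LeakyReLU / iterative-partitioning approach for Massart halfspaces with margin. For $\lambda\approx\eta$, it minimizes the convex surrogate
\[
L_\lambda(w)=\E\bigl[\mathrm{LeakyReLU}_\lambda(-y\,w\cdot x)\bigr]
\]
on regions where the current hypothesis is bad. What must be shown is that this learner tolerates an $\epsilon_0$-fraction of adversarial (TV) contamination at cost $O(\epsilon_0/\gamma)$ in error. The surrogate is bounded and Lipschitz on the unit ball, so $\epsilon_0$ TV distance perturbs $L_\lambda$ by $O(\epsilon_0)$. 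In the analysis, a region is certified to have error above $\eta+\epsilon$ through the negativity of $L_\lambda(w^*)$ scaled by the margin $\gamma$. The perturbation therefore costs only $O(\epsilon_0/\gamma)$ additive error, provided each region kept has probability mass at least about $\epsilon$.

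\textbf{Step 3: statistical error and balancing.} With $m$ samples, uniform convergence for the surrogate over margin-bounded linear functions gives estimation error $\tilde O(1/(\gamma\sqrt m))$. The total error is therefore $\eta+\tilde O(m\Delta/\gamma+1/(\gamma\sqrt m))$. This accounting is too loose, since it gives $\Delta^{1/3}$ only if the partitioning adds a further $1/\epsilon$ factor to the sample need.

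The correct accounting is as follows. Each partition region must have mass at least $\epsilon$. Inside a region, the contamination fraction becomes $m\Delta/\epsilon$ and the effective sample size is $\epsilon m$. Requiring $m\Delta/\epsilon\lesssim\epsilon\gamma$ and $1/\sqrt{\epsilon m}\lesssim\epsilon\gamma$ balances at $m\approx\Delta^{-2/3}$ and $\epsilon\approx\Delta^{1/3}/\gamma$, up to logarithmic factors. This matches both the claimed bound and the requirement $T=\tilde\Omega(\Delta^{-2/3})$, which guarantees that enough history exists. Failure probabilities across the $\tilde O(1/\epsilon)$ partition rounds are union-bounded to reach $9/10$. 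The running time is polynomial, since each round is a convex program on $m$ points.

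\textbf{Main obstacle.} The hardest step is Step 2: proving that the Massart partitioning algorithm is robust to TV contamination of size $m\Delta$ localized to small regions. Contamination concentrated inside a region of mass $\epsilon$ could fool the certificate for that region. I would first try to handle this by discarding regions of mass below $\epsilon$ and bounding the contamination relative to region mass by $m\Delta/\epsilon$, as in the accounting above. Since the drifting labels differ from the $D^{(T)}$ labels on only a small measure set, I expect this relative bound to suffice for the Lipschitz perturbation argument of the surrogate.
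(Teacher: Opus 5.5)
There is a genuine gap. It goes beyond the unproven robustness claim in Step 2 that you flag: your own per-region accounting does not yield the claimed rate. The constraints $m\Delta/\epsilon\lesssim\epsilon\gamma$ and $1/\sqrt{\epsilon m}\lesssim\epsilon\gamma$ read $m\lesssim\epsilon^2\gamma/\Delta$ and $m\gtrsim 1/(\epsilon^3\gamma^2)$. These are compatible only when $\epsilon\gtrsim\Delta^{1/5}\gamma^{-3/5}$ (with $m\approx\Delta^{-3/5}\gamma^{-1/5}$), not at $\epsilon\approx\Delta^{1/3}/\gamma$.

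Concretely, take your claimed point $m=\Delta^{-2/3}$, $\epsilon=\Delta^{1/3}/\gamma$. The in-region contamination is $m\Delta/\epsilon=\gamma$, against a certificate of size $\epsilon\gamma=\Delta^{1/3}$. The in-region sampling error is $\gamma^{1/2}\Delta^{1/6}$. Both exceed $\Delta^{1/3}$ whenever the target bound is nontrivial, i.e.\ $\gamma>\Delta^{1/3}$. So even granting the per-region robustness that Step 3 relies on, the partitioning route gives at best $\eta+\tilde O(\Delta^{1/5}\gamma^{-3/5})$. That is strictly worse than $\eta+\tilde O(\Delta^{1/3}/\gamma)$ throughout the nontrivial regime. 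The obstruction is structural: a learner that certifies progress by conditioning on regions of mass $\epsilon$ amplifies the $m\Delta$ drift contamination by a factor $1/\epsilon$.

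The accounting you dismissed as ``too loose'', $\eta+\tilde O(m\Delta/\gamma+1/(\gamma\sqrt m))$, is in fact the right target. It balances at $m=\Delta^{-2/3}$ and gives $\Delta^{1/3}/\gamma$ with no extra factor. What is missing is a learner whose drift cost is global rather than per region.

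The paper gets one by abandoning partitioning. Within each epoch it runs single-pass projected online gradient descent with the reweighted LeakyReLU gradient $g(w;x,y)=((1-2\eta)\sign(w\cdot x)-y)\,x/\max\{|w\cdot x|,\gamma\}$, using one fresh sample from the current $D^{(t+i)}$ per step. The reweighting makes the certificate pointwise in $x$; for instance, $\E_{y\mid x}[g\cdot w]=2(\err^{(i)}(w,x)-\eta)$ whenever $|w\cdot x|\ge\gamma$.

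The key lemma then gives $\E_{y\mid x}[g\cdot(w-w^*)]\ge 2(\err^{(i)}(w,x)-\eta)-F_i$, where $w^*$ is the target $m$ steps later and $\E_x F_i=O(m\Delta/\gamma)$. The reason is that every piece of $F_i$ is controlled by quantities that $\dtv(D^{(i)},D^{(i+m)})\le m\Delta$ bounds globally:
\begin{itemize}
\item the disagreement region of $w^*$ and $(w^*)^{(i)}$;
\item the decrease of the noise function;
\item the $\gamma$-margin violations of $w^*$ under $D^{(i)}$.
\end{itemize}
Online regret with step size $\gamma/\sqrt T$, a martingale bound on the label-noise terms, and hold-out selection of the best iterate then give exactly $1/(\gamma\sqrt T)+T\Delta/\gamma$. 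To rescue your plan you would need a global robustness statement of this kind for your chosen learner. Your per-region analysis of the partitioning algorithm cannot supply it.
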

To the best of our knowledge, this is the first non-trivial error guarantee 
achieved by an efficient learning algorithm in the presence of label noise. 
We note that, for the special case of RCN, an adaptation of our algorithm 
gives an error of $\opt_t+\tilde O(\Delta^{1/3}/\gamma)$. 
As another important implication, 
we obtain an efficient algorithm with an even smaller error $\tilde O(\sqrt{\Delta}\gamma^{-3/2})$ 
for the realizable setting---improving 
upon the $\tilde O(\sqrt{\Delta}\gamma^{-2})$ error achieved 
by the algorithm in~\cite{helmbold1994tracking}. 

Interestingly, the error achieved by our noise-tolerant drift algorithm 
scales with $\Delta^{1/3}$, 
which is qualitatively worse than the statistically optimal error 
(that scales with $\Delta^{1/2}$). 
We complement our upper bound by providing formal evidence that 
the scaling with $\Delta^{1/3}$ is unavoidable for efficient learning algorithms, 
even in the special case of RCN. Specifically,  
we establish a lower bound against the class of 
low-degree polynomial tests~\cite{Hopkins-thesis, KWB19, wein2025computational}---one of the most well-studied computational models 
in the context of statistical-computational tradeoffs that captures a broad class of learning algorithms. 
Specifically, we show:

\begin{theorem}[Informal Implication of \Cref{th ldp}] \label{thm:lb-inf}
    Consider the problem of learning drifting $\gamma$-margin halfspaces with $(1/3)$-RCN. For $T \ge \gamma^{-1/6}\Delta^{-2/3}$, there is a family of instances such that
    under the low-degree hardness conjecture, there is no polynomial time algorithm that achieves error $\opt_T+\Delta^{1/3}\gamma^{-1/6}$ error for every instance in the family.

\end{theorem}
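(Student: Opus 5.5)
We will derive the statement from a low-degree hardness result for a hypothesis-testing problem (the formal \Cref{th ldp}), and then argue that any learner with small error would solve that testing problem. The construction hides a short-lived, weak bias that efficient tests cannot see. Fix a time horizon $T$. The last $m$ rounds form a window, with $m$ to be chosen. For most of the history, the target is a fixed halfspace with a large margin that the learner can learn. In the window, the target drifts to a new halfspace $h_v$. Here $v$ is a hidden direction, and $h_v$ disagrees with the old target on a region of mass about $\epsilon = m\Delta$. This respects the drift constraint, since we move mass at most $\Delta$ per round.

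\textbf{Step 1 (embedding a planted problem).} Inside the disagreement region we place a non-Gaussian component (NGCA-type) construction. The marginal on $x$ is a standard Gaussian in all directions orthogonal to $v$. Along $v$ it is a discrete, $\gamma$-separated distribution whose first $k \approx 1/\gamma^{c}$ moments match those of a Gaussian. Under $(1/3)$-RCN, the conditional label distribution given $x$ has bias about $1/3$ on this region. We arrange the construction so that the null hypothesis, in which labels on the region are pure noise independent of $v$, and the planted hypothesis differ only through a degree-$>k$ signal. The scale $\gamma^{-1/6}$ in the statement should come from the number of matched moments achievable at margin $\gamma$.

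\textbf{Step 2 (low-degree bound).} We bound the low-degree likelihood ratio norm for distinguishing the planted distribution from the null, given the $m$ window samples (past samples carry no information about $v$). By standard NGCA low-degree computations, this norm stays $1+o(1)$ at degree $\mathrm{polylog}$ as long as $m\epsilon^2$ is below the threshold determined by the moment matching. In that regime, in each window sample only an $\epsilon$ fraction is informative, and its signal is suppressed. This is the step we expect to be the main obstacle: we must track how the $\epsilon$-mixture weight and the noise interact in the Hermite expansion so that the advantage scales like $m\epsilon^2$ rather than $m\epsilon$. We will first try to write the planted distribution as a mixture $(1-\epsilon)\,\mathrm{null}+\epsilon\,\mathrm{NGCA}$ and bound each Hermite coefficient by $\epsilon$ times the NGCA coefficient.

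\textbf{Step 3 (parameter balance and reduction).} Take $\epsilon = m\Delta$. The hardness condition $m\epsilon^2 \lesssim 1$ (up to $\gamma$ factors) gives $m\approx \Delta^{-2/3}$ and hence $\epsilon\approx\Delta^{1/3}$, with $\gamma^{-1/6}$ factors tracked throughout. The condition $T\ge\gamma^{-1/6}\Delta^{-2/3}$ ensures the window fits in the history.

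Finally, suppose a learner achieves error $\opt_T+\Delta^{1/3}\gamma^{-1/6}$ on every instance of the family. Then its hypothesis must agree with $h_v$ on a constant fraction of the disagreement region, where the old target errs by roughly $\epsilon/3$ more. Testing the learner's error on fresh samples from the region therefore distinguishes planted from null with constant advantage, using a polynomial-time statistic. Such a test is approximable by low-degree polynomials, which contradicts Step 2 under the low-degree hardness conjecture.
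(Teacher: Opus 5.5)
The gap is in Steps 1--2: the moment-matching mechanism is not available for RCN, and it is not what produces the $\Delta^{1/3}\gamma^{-1/6}$ rate. Whatever marginal you use, under RCN you have $\mathrm{Cov}(y,v\cdot x)=(1-2\eta_j)\,\mathrm{Cov}(h_v(x),v\cdot x)>0$, because $h_v$ is a nonconstant nondecreasing function of $v\cdot x$. This covariance is $0$ under the null. So planted and null always differ at degree one in $x$ and one in $y$, and no choice of the distribution along $v$ pushes the difference above degree $k$. Consistently with this, margin halfspaces with RCN are efficiently learnable without drift, so an NGCA-type $d^{\Omega(k)}$ bound cannot hold.

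Worse, as you describe it, the window raises $\Pr[y=-1]$ by about $\Delta j/3$ at step $j$. The degree-one statistic $\sum_i y_i$ over the window therefore succeeds once $m^{3/2}\Delta\gtrsim 1$, i.e.\ $m\epsilon^2\gtrsim 1$. That is exactly the threshold you write in Step 3. It can only give hardness at excess error $\approx\Delta^{1/3}$, which is weaker than the statement. The hoped-for $\gamma^{-1/6}$ has no source in your argument.

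The missing idea is that the hardness is a weak-signal phenomenon (too few informative samples) at the lowest degree, with both one-variable marginals neutralized. The paper's construction is as follows:
\begin{itemize}
\item $x$ is uniform on $\{\pm1\}^d$ with $d=\Theta(1/\gamma)$ odd, so the halfspaces used have margin $\Theta(1/d)$.
\item Null labels are independent of $x$ with $\Pr[y=-1]=\eta=1/3$.
\item At window step $j$ the target is $\sign(v\cdot x+t_j)$ with $\Pr[h=-1]=\Delta j$, and the noise rate is lowered to $(\eta-\Delta j)/(1-2\Delta j)$ so that $\Pr[y=-1]=\eta$ exactly.
\item $v$ is uniform over $2^{d^c}$ sign vectors with $|u\cdot v|\le d^{1/2+c}$.
\end{itemize}
The low-degree norm then factors over samples as $\sum_{\emptyset\neq A\subseteq[m],\,|A|\le k}\prod_{j\in A}I_j(u,v)$ (\Cref{lm pair correlation}). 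A direct computation shows $I_j(u,v)$ is a constant multiple of $\E[(\Ind\{h_u=-1\}-\Delta j)(\Ind\{h_v=-1\}-\Delta j)]$. The hypercube halfspace-correlation fact bounds this by $(\Delta j)^2|u\cdot v|/d\approx(\Delta j)^2d^{-1/2}$, up to $d^{c}$ and logarithmic factors (\Cref{lm pairwise}). The norm stays bounded while $\sum_j I_j\approx m(m\Delta)^2/\sqrt d\lesssim 1$. This gives $m\approx\gamma^{-1/6}\Delta^{-2/3}$ and excess error $m\Delta\approx\gamma^{-1/6}\Delta^{1/3}$. So the $\gamma^{-1/6}$ comes from the $\sqrt d$ due to near-orthogonality of the hidden directions, not from a count of matched moments.

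Your final reduction matches the paper's: a good learner must output $-1$ on mass $\approx m\Delta$ under $H_1$ but almost never under $H_0$, which is checkable on fresh unlabeled points. You do not need to approximate that test by low-degree polynomials, since the conjecture itself turns low-degree failure into polynomial-time failure.
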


An intuitive interpretation of Theorem~\ref{thm:lb-inf} is that efficient algorithms
for our problem require excess error $\Omega(\Delta^{1/3})$, matching the guarantee
of our algorithm. Taken together, these results illustrate the following 
phenomenon for bounded label noise (Massart, RCN) settings 
in terms of scaling with drift $\Delta$: while information-theoretic rates place 
these label noise models in the same category as the realizable setting---scaling 
with $\Delta^{1/2}$---computationally efficient algorithms need to pay the higher 
$\Delta^{1/3}$ price---the information-theoretic limit of the agnostic setting. 

\subsection{Related Work} \label{ssec:related}


\paragraph{Learning with Distribution Drift} 
As already discussed, the early line of work on learning drifting concepts focused on establishing optimal excess error bounds 
for a general hypothesis class with VC-dimension $d$, 
under the classical assumption that the TV-distance between consecutive distributions is upper bounded by \(\Delta\) \cite{helmbold1991tracking,bartlett1992learning,helmbold1994tracking,BarveLong:97, Long:99}.
In recent years, variants of this problem have been studied \cite{hanneke2019nonstationary,han2024model,mazzetto2023adaptive,baby2025adaptive} to develop methods 
that do not depend on the prior knowledge of $\Delta$. 
Importantly, all algorithms in these works 
rely on ERM  
and are thus not computationally efficient in general.

The aforementioned progress notwithstanding, 
the computational complexity of learning with drift is poorly understood. 
Early work~\cite{helmbold1994tracking} 
gives an LP-based efficient algorithm for realizable halfspaces 
on $\R^d$ with error $\tilde O(d\sqrt{\Delta})$  
(a bound that is suboptimal by a $\sqrt{d}$ factor).  
More recent works focused on developing efficient learners  
under different error metrics \cite{MohriMunozMedina2012Drifting} 
or under strong assumptions on the marginal distribution---namely that each 
$D^{(t)}_x$ is the a uniform distribution over the unit sphere. 
Specifically, \cite{crammer2010regret} gave an efficient algorithm 
with error $O((d\Delta)^{1/4})$ that was later improved to $O(\sqrt{d\Delta})$ 
by \cite{hanneke2015learning}. To the best of our knowledge, no efficient algorithms under the distribution drift settings were  known to tolerate label noise.


\paragraph{Halfspace Learning with Label Noise}
Designing efficient learners for halfspaces in the presence of label noise 
has been a central problem in learning theory that has seen substantial progress in the past decades. In the non-drift version of the problem, when the marginal distribution is sufficiently structured (e.g., uniform over the unit sphere or Gaussian), 
efficient agnostic learners that can tolerate adversarial label noise are known~\cite{awasthi2017power,diakonikolas2018learning,diakonikolas2022learning,diakonikolas2024active}. For more general distributions (e.g., in the presence of margin), 
known hardness results \cite{Daniely16} rule out polynomial-time agnostic learners. 
In the presence of Massart noise, recent work  \cite{diakonikolas2019distribution,chen2020classification,diakonikolas2024near,chandrasekaran2024learning,kontonis2024active} has developed efficient learners achieving error $\eta+\eps$, 
a guarantee 
shown to be optimal for efficient algorithms \cite{diakonikolas2022near,nasser2022optimal,DKMR22}.


\paragraph{Low-Degree Polynomial Hardness}
Proving lower bounds against restricted algorithmic families such as SQ algorithms \cite{Kearns:98} or low-degree polynomial estimators \cite{Hopkins-thesis,brennan2021statistical}
is a popular method of studying the computational complexity of learning problems. 
Low-degree polynomial estimators (defined in \Cref{sec ldp hardness}) 
provide a natural computational model for our online learning setting. 
For recent progress on low-degree polynomial hardness, we refer the reader to \cite{wein2025computational}.

\subsection{Notation} \label{ssec:notation}

For a halfspace $h_w(x) = \sign(w\cdot x)$ and $x \in \s^{d-1}$, we define $\err^{(i)}(w,x):=\Pr_{(x,y)\sim D^{(i)}}[h_w(x)\neq y \mid x]$.
For $\gamma \in (0,1)$, an example $x \in \s^{d-1}$ is said to have $\gamma$-margin with respect to a halfspace $h_w(x)$ if $\abs{w\cdot x} \ge \gamma$. A distribution $D$ is said to have $\gamma$-margin with respect to $h_w(x)$ if $\Pr_{x\sim D_x}[\abs{w\cdot x} < \gamma] = 0$.
For $a,b \in \R$, we denote by $\phi(a,b) = \sign(a)\sign(b) \in \{\pm 1\}$. We use $O_\delta(\cdot)$ notation to hide a $\log(1/\delta)$ factor and use $\tilde O(\cdot)$ to hide the polylogarithmic factors in the  argument inside the big-Oh.





\section{Information-Theoretic Baselines under Massart Noise}\label{sec information}
The literature on learning drifting concepts with label noise 
predominantly focuses on the agnostic setting (adversarial label noise), where for every hypothesis class with VC dimension $d$, the 
optimal excess error is $\Theta\left((d\Delta)^{1/3}\right)$
\cite{Long:99}.
However, agnostic learning 
is known to be  computationally hard for natural hypothesis and 
distribution classes, even without distribution shift 
(i.e., for $\Delta = 0$). Since our focus in this work 
is on computationally efficient algorithms, we consider structured 
label noise, for which we establish different excess error baselines. 

We first show that in the presence of Massart noise, a smaller excess error rate of $\tilde O(\sqrt{d\Delta})$ can be obtained (see \Cref{app th info-upper} for the proof). This result, summarized in the following theorem, establishes an \emph{information theoretic} upper bound; however, 
the algorithm itself is not polynomial-time. 
We highlight that the stated upper bound 
applies to any Boolean concept class 
(not just halfspaces) and the general case of Massart noise. 

\begin{theorem}[Information-Theoretic Upper Bound]\label{th info-upper-rate}
Consider the problem of learning drifting binary concepts. Let $\H: X \to \{\pm 1\}$ be a hypothesis class with VC dimension $d$. Suppose that for every $i>0$, $D^{(i)}$ is realized by $((h^*)^{(i)}, \eta_i(x))$ for some $(h^*)^{(i)} \in \H$ and $\eta_i$ that satisfies the $\eta$-Massart noise condition. There is an algorithm $\A$  such that for every $t = \tilde\Omega ( d/((1-2\eta)\Delta )$, $\A$ outputs a hypothesis $\hat{h}^{(t)}$ such that with probability at least $9/10$ the error of $\hat{h}^{(t)}$ is at most $\opt_t+\Tilde{O}((d\Delta/(1-2\eta))^{1/2})$.
\end{theorem}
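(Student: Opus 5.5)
The plan is to run ERM over a sliding window: at time $t$, output $\hat h^{(t)} \in \argmin_{h\in\H} \sum_{i=t-m}^{t-1} \Ind\{h(x^{(i)})\neq y^{(i)}\}$ for a window length $m$ chosen later. The key observation is that Massart noise gives a Bernstein-type variance condition, so ERM enjoys fast $\tilde O(d/m)$ rates rather than $\tilde O(\sqrt{d/m})$. Drift adds a bias of order $m\Delta$. Balancing $d/((1-2\eta)m)$ against $m\Delta$ gives $m \approx \sqrt{d/((1-2\eta)\Delta)}$. In the result, the drift bias appears as $m\Delta$ times a factor that converts it to an excess-error bound; I expect this to give $\tilde O(\sqrt{d\Delta/(1-2\eta)})$, with $t\ge m$ required.

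\textbf{Step 1 (reduce to a fixed comparison target).} Let $h^* = (h^*)^{(t)}$. By the triangle inequality for TV distance, $\dtv(D^{(i)},D^{(t)})\le (t-i)\Delta \le m\Delta$ for every $i$ in the window. Hence for every $h$, the excess loss $\ell_i(h):=\Pr_{D^{(i)}}[h\ne y]-\Pr_{D^{(i)}}[h^*\ne y]$ differs from $\ell_t(h)$ by at most $2m\Delta$.

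\textbf{Step 2 (variance / Bernstein condition).} Consider the centered variables $Z_i(h)=\Ind\{h(x^{(i)})\ne y^{(i)}\}-\Ind\{h^*(x^{(i)})\ne y^{(i)}\}$. They satisfy $\E[Z_i(h)^2]=\Pr_{D^{(i)}_x}[h\ne h^*]$.

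Under $D^{(t)}$ with Massart noise, $\ell_t(h)\ge (1-2\eta)\Pr_{D^{(t)}_x}[h\ne h^*]$. For $i\neq t$, $D^{(i)}$ need not be realized by $h^*$. However, $\Pr_{D_x^{(i)}}[h\ne h^*]\le \Pr_{D_x^{(t)}}[h\ne h^*]+m\Delta$. So the variance of $\sum_i Z_i$ is at most
\[
m\Big(\frac{\ell_t(h)}{1-2\eta}+m\Delta\Big).
\]

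\textbf{Step 3 (uniform concentration for independent, non-identical samples).} Apply a Bernstein inequality uniformly over $\H$. I would use a symmetrization/localization (Talagrand-type, or a ratio-type VC bound as in Vapnik/Boucheron--Bousquet--Lugosi), which holds for independent non-identically distributed samples. This gives, with probability $9/10$, for all $h$:
\[
\Big|\tfrac1m\textstyle\sum_i Z_i(h)-\tfrac1m\sum_i\ell_i(h)\Big|\lesssim \sqrt{\frac{v(h)\,d\log m}{m}}+\frac{d\log m}{m},
\]
where $v(h)=\ell_t(h)/(1-2\eta)+m\Delta$.

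Since ERM has $\sum_i Z_i(\hat h)\le 0$, combining with Step 1 yields
\[
\ell_t(\hat h)\lesssim m\Delta+\sqrt{v(\hat h)d\log m/m}+d\log m/m.
\]
Solving this quadratic inequality in $\ell_t(\hat h)$ gives
\[
\ell_t(\hat h)\lesssim m\Delta+\sqrt{d\Delta\log m}+\frac{d\log m}{(1-2\eta)m}.
\]
With $m=\sqrt{d/((1-2\eta)\Delta)}$, this becomes $\tilde O(\sqrt{d\Delta/(1-2\eta)})$. The excess error is $\err^{(t)}(\hat h)-\opt_t\le\ell_t(\hat h)$, and $\opt_t=\err^{(t)}(h^*)$.

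\textbf{Main obstacle.} The hard step is Step 3: obtaining a localized uniform Bernstein bound when samples are not identically distributed. The comparator $h^*$ is only Massart-optimal for $D^{(t)}$, so the variance-to-excess-risk relation holds only up to the additive $m\Delta$ slack. I would first try a peeling argument over shells $\{h:\Pr_{D^{(t)}_x}[h\ne h^*]\in[2^{-k-1},2^{-k}]\}$. On each shell I would apply the standard VC bound with Bernstein (Bousquet's inequality for independent, non-i.i.d.\ empirical processes). I would then union bound over the $O(\log m)$ shells.
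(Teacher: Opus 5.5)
Your proposal is correct and follows essentially the same route as the paper: ERM over a window of length $\tilde\Theta(\sqrt{d/((1-2\eta)\Delta)})$, a Massart-induced variance bound with an additive drift slack, localized uniform concentration for independent non-identically distributed samples (the paper uses Talagrand, symmetrization, Dudley chaining and peeling), and then solving the resulting quadratic inequality. The differences are cosmetic: the paper runs ERM once per epoch and pays an extra lag of order $W\Delta$ while the hypothesis is reused, and its variance slack is $O(W\Delta/(1-2\eta))$ rather than your slightly sharper $m\Delta$; neither changes the final rate.
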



On the lower bound side, we specialize to halfspaces with $\eta$-RCN (the 
more specialized the class, the stronger the lower bound). 
We give a matching  $\Omega((d\Delta)^{1/2})$ 
bound on the excess 
error, as summarized in \Cref{th inf lb new} 
(see \Cref{app th lb new} for the proof). 
Combined with \Cref{th info-upper-rate}, this establishes the 
information-theoretic baseline of $\tilde\Theta((d\Delta)^{1/2})$ for the excess 
error under both the Massart noise 
and RCN, provided $1/2-\eta$ can be treated as a 
positive constant (the primary case for which these noise models 
are studied).   

\begin{theorem}[Information-Theoretic Lower Bound]\label{th inf lb new}
    For every $T>0$ and $\Delta \in (0,1)$, there is a family of instances of learning  halfspaces with $\eta$-RCN such that provided  $(1-2\eta)^3>d\Delta$, there is no learning algorithm $\A$ that can achieve error $\err^{(T)}(\A) \le \opt_T+o(\sqrt{d\Delta/(1-2\eta)})$, with probability $1/2$ for every instance in the family. 
\end{theorem}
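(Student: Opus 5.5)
The plan is a two-point (or multi-point) indistinguishability argument that exploits the drift budget. Fix a window length $m$, to be chosen later. The adversary keeps the distribution fixed up to time $T-m$. During the last $m$ rounds it slowly rotates the target, so the total variation between $D^{(T-m)}$ and $D^{(T)}$ is at most $m\Delta$, yet $D^{(T)}$ is one of two (or $2^{\Omega(d)}$) candidate distributions with disjoint optimal labelings on a region of mass $\epsilon$. Samples before $T-m$ carry no information about which candidate is chosen. The learner therefore only sees about $m$ informative samples, and those samples are themselves only partially informative, because they come from intermediate distributions.

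First I would handle $d=1$ (or a constant $d$) by working on a small set of points. Let the marginal place mass $1-\epsilon$ on a "dummy" point $x_0$, labeled the same in all instances, and mass $\epsilon$ on a point $x_1$ whose clean label is $+1$ or $-1$ depending on the instance. Both labelings are realizable by a halfspace with margin, for example on $\R^2$. With $\eta$-RCN, a sample at $x_1$ has a label bias of only $1-2\eta$. To respect the drift constraint, I will move the mass of $x_1$ in gradually. At time $T-m+j$ the point $x_1$ has mass $j\Delta$, capped at $\epsilon$, while the two candidate instances agree on everything except the label at $x_1$. Consecutive distributions then differ in TV by at most $\Delta$. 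Taking $m=\epsilon/\Delta$, the mass at $x_1$ reaches $\epsilon$ exactly at time $T$.

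Next I bound how much the learner can learn. The KL divergence between the two instance-sequences is
\[
\sum_j (\text{mass of } x_1 \text{ at step } j)\cdot O\bigl((1-2\eta)^2/\eta(1-\eta)\bigr) \lesssim \frac{\epsilon^2}{\Delta}\,(1-2\eta)^2 \cdot \frac{1}{1-2\eta}.
\]
Here I use that $\mathrm{KL}$ between $\mathrm{Bern}(\eta)$ and $\mathrm{Bern}(1-\eta)$ is $\Theta((1-2\eta)^2)$ for $\eta$ bounded away from $0$, and is $\lesssim (1-2\eta)\log$ in general. Choosing $\epsilon = c\sqrt{\Delta/(1-2\eta)}$ makes this divergence a small constant. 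By Pinsker/Le Cam, no algorithm can then identify the sign at $x_1$ with probability above $1/2+$small. A mistake at $x_1$ costs excess error $\epsilon(1-2\eta)$ relative to $\opt_T$. This is where the exact normalization of $\eta$ must be tracked: I would either parametrize the mass so that the loss is $\epsilon$ and absorb the constants, or choose the mass $\epsilon' = \sqrt{\Delta/(1-2\eta)}$ so that the final excess error matches the stated $\sqrt{\Delta/(1-2\eta)}$ up to the $(1-2\eta)$ factors. The condition $(1-2\eta)^3>d\Delta$ guarantees $\epsilon\le 1$ and $m\ge 1$ and keeps the scaling consistent.

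To get the factor $d$, I would tensorize. I place $d$ (or $d-1$) orthogonal coordinate points $x_1,\dots,x_d$, each with mass $\epsilon/d$ at the end and each with an independent unknown sign. These are realized by the halfspace $w\propto(\sigma_1,\dots,\sigma_d)$ with margin $1/\sqrt d$. Margin is not required in this theorem, but it is available. The drift budget $\Delta$ per step is shared across the $d$ points, so each point ramps up over $m=\epsilon/\Delta$ steps. Per coordinate, the KL is then $\approx (\epsilon/d)\cdot m\cdot(1-2\eta)\approx \epsilon^2/(d\Delta)\cdot(1-2\eta)$. Setting $\epsilon=\sqrt{d\Delta/(1-2\eta)}$ keeps each coordinate uninformative, and Assouad's lemma then gives expected excess error $\Omega(\epsilon)$. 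A Markov-type argument converts this into failure with probability $1/2$ on some instance. The main obstacle is this accounting: verifying that the ramp-up respects $\dtv\le\Delta$ at every step while simultaneously keeping each coordinate's total information bounded, and that the sharing of the drift budget across the $d$ coordinates yields exactly the $\sqrt{d\Delta}$ scaling and not $\sqrt{\Delta}$.
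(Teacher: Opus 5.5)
Your overall architecture matches the paper's: everything frozen until $T-m$, a ramp over the last $m$ rounds, $d$ independent hidden bits, and a coin-flipping/Assouad argument. But your drift mechanism cannot deliver the stated dependence on $1-2\eta$. Your write-up reaches $\sqrt{d\Delta/(1-2\eta)}$ only through bookkeeping slips.

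You create the uncertain region by moving \emph{marginal} mass onto fresh points, and that costs total variation one-for-one. With $\epsilon=m\Delta$, after $m$ rounds each coordinate has mass $\epsilon/d$ and has received about $\epsilon^2/(d\Delta)$ informative samples. Each such sample has KL $(1-2\eta)\log\frac{1-\eta}{\eta}=\Theta((1-2\eta)^2)$ for $\eta$ near $1/2$. The extra factor $1/(1-2\eta)$ in your bound has no justification, and in any case it only loosens the bound.

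More importantly, misclassifying mass $\epsilon$ under $\eta$-RCN costs $(1-2\eta)\epsilon$, not $\epsilon$. So ``Assouad gives $\Omega(\epsilon)$'' drops a factor, and that factor cannot be absorbed into constants because the theorem tracks $1-2\eta$. Even with your loose KL bound, the choice $\epsilon=c\sqrt{d\Delta/(1-2\eta)}$ yields excess error only $\Theta(\sqrt{d\Delta(1-2\eta)})$.

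Reaching $\sqrt{d\Delta/(1-2\eta)}$ would require $\epsilon\asymp\sqrt{d\Delta/(1-2\eta)^3}$. But then each coordinate sees about $(1-2\eta)^{-3}\gg(1-2\eta)^{-2}$ informative samples, and a per-coordinate majority vote recovers every bit. Mass-moving constructions are therefore capped at $O(\sqrt{d\Delta})$. This is exactly what the paper's margin construction in Appendix~\ref{app info-gamma} obtains, under the weaker hypothesis $(1-2\eta)^2>d\Delta$. It matches the theorem only when $\eta$ is bounded away from $1/2$.

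The missing idea is to keep the marginal fixed and let the \emph{clean labels} drift. Under RCN, flipping the clean label on a region of mass $p$ moves the joint distribution by only $(1-2\eta)p$ in total variation. So the uncertain region may grow by $\Delta/(1-2\eta)$ per round instead of $\Delta$.

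The paper draws $x=ge_i$ with $i$ uniform in $[d]$ and $g$ uniform in $(0,1)$. For $z\in\{0,1\}^d$ it uses thresholded halfspaces that label $ge_i$ positive iff $z_i=1$ and $g$ exceeds a threshold. The threshold equals $1$ up to round $T-m$, so all instances coincide there. It then decreases linearly, so that for each $i$ with $z_i=1$ the positive part of coordinate $i$ gains mass $\Delta/(d(1-2\eta))$ per round; the per-round drift is thus at most $\Delta$.

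After $m$ rounds the uncertain mass is $\epsilon=m\Delta/(1-2\eta)$, while each coordinate has received only about $m^2\Delta/(d(1-2\eta))$ informative samples. Taking $m\asymp\sqrt{d/(\Delta(1-2\eta))}$ makes this $O((1-2\eta)^{-2})$, below the coin-lemma threshold. A learner that errs on a constant fraction of coordinates then pays $(1-2\eta)\epsilon\asymp\Delta m\asymp\sqrt{d\Delta/(1-2\eta)}$. The requirement $\epsilon\le 1$ is precisely $(1-2\eta)^3\gtrsim d\Delta$, which is why that hypothesis appears in the statement; in your construction it plays no real role.
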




\section{Efficiently Learning Drifted Halfspaces with Massart Noise}\label{sec efficient}
In this section, we establish \Cref{th massart drift}. 






Our main learning algorithm, \Cref{alg dirft massart}, works in epochs, where the $i$\textsuperscript{th} epoch corresponds to time steps $t\in \{(i-1)W+1,\dots, iW$\}.
In epoch $i$, \Cref{alg dirft massart} maintains a hypothesis $\hat h^{(i)}$ for prediction and collects a set of examples $S^{(i)}$ observed during this epoch. At the end of the epoch, \Cref{alg dirft massart} updates the hypothesis $h^{(i)}$ using the collected dataset $S^{(i)}$ through a subroutine in \Cref{alg dirft subroutine}.

\begin{algorithm}[h]
		\caption{\textsc{DriftedMassart} (Learning Drifting Halfspaces with Massart Noise)}\label{alg dirft massart}
		\begin{algorithmic} [1]
\State \textbf{Input:} noise level $\eta \in (0,1/2),$ margin parameter $\gamma \in (0,1),$  drift rate $\Delta>0$ 
\State \textbf{Output:} A hypothesis $h$ for each round $t$.
\State Let $W = 2\lfloor \Delta^{-2/3} \log((\Delta)^{-1}) \rfloor, i=1$, $\hat{h}^{0}(x) \equiv 1$
\For{$t=1,2,\dots$}
\State Output hypothesis $\hat h^{(i-1)}$ and receive $(x^{(t)},y^{(t)})$
\State $S^{(i)} \gets S^{(i)} \cup \{(x^{(t)}, y^{(t)})\}$
\If{$t = iW$}
\State $\hat h^{(i)} \gets \textsc{DriftPerceptron}(\eta,\gamma,S^{(i)},\gamma/\sqrt{W})$, $S^{(i)} \gets \emptyset$, $i \gets i+1$
\EndIf
\EndFor
\end{algorithmic}
\end{algorithm}

\Cref{alg dirft subroutine} equally partitions $S^{(i)}$ into  sets $S_1^{(i)}$ and $S_2^{(i)}$. Set $S_1^{(i)}$, consisting of the first half of the examples, is used to update the parameter vector $w^{(i)}$, while the second half of the examples in $S_2^{(i)}$  are used to select a good hypothesis halfspace $h_{w^{(i)}}$ from the trajectory of updates.

\begin{algorithm}[h]
		\caption{\textsc{DriftPerceptron} (Subroutine for Learning a Single Halfspace over a Dataset)}\label{alg dirft subroutine}
		\begin{algorithmic} [1]
\State \textbf{Input:} noise level $\eta \in (0,1/2)$, margin parameter $\gamma \in (0,1)$, step size $\mu>0$, a dataset $S = \{(x^{(i)},y^{(i)})\}_{i=1}^{2m}$ of $2m$ labeled examples 
\State \textbf{Output:} $\hat{h} = \sign(\hat{w}\cdot x): \R^d \to \{\pm 1\}$
\State $w^{(0)} = e_1$
\State Split $S = S_1 \cup S_2$, where $S_1 = \{(x^{(i)},y^{(i)})\}_{i=1}^{m}, S_2 = S\setminus S_1$
\For{$i=1,2,\dots,m$}
\State $g(w^{(i)};x,y) \gets ((1-2\eta)\sign(w^{(i)}\cdot x)-y) x/\max\{\abs{w^{(i)}\cdot x},\gamma\}$
\State $w^{(i+1)} \gets \proj_{\B(1)}(w^{(i)} - \mu g_t(w^{(i)},x,y))$, $\hat{h}_{i+1}(x) = \sign(w^{(i+1)} \cdot x)$
\EndFor
\State Return $\hat{h} = \argmin_{\hat{h}_{i}} \hat{\err}(\hat{h}_{i})$, where $\hat{\err}(\hat{h}_{i}) = \frac{1}{\card{S_2}}\sum_{(x,y)\in S_2}\Ind\{\hat{h}_{i}(x) \neq y\}$.

\end{algorithmic}
\end{algorithm}

In each round, we feed \Cref{alg dirft subroutine} an example $(x,y)$ and
generate a ``gradient'' vector $g(w^{(i)};x,y)$ based on the current vector $w^{(i)} \in \s^{d-1}$ and the example $(x,y)$. With such a gradient vector, we modify $w^{(i)}$ by running a single step projected gradient descent with a carefully chosen step size. By standard regret analysis, we get the following fact, the proof of which is deferred to \Cref{app potential decreasing}.
\begin{fact}[Progress Measure Measurement]\label{lm potential decreasing}
    Let $t>0$ be any time step and let
    $S = \{(x^{(i)},y^{(i)})\}_{i=1}^m$ be such that  $(x^{(i)},y^{(i)})\sim D^{(t+i)}$. For $T \in [m]$ and $w^* \in \s^{d-1}$, \Cref{alg dirft subroutine} satisfies 
    \begin{align*}
        \frac{1}{T}\sum_{i=1}^T \E_{(x,y)\sim D^{(t+i)}}[ g(w^{(i)};x,y) \cdot (w^{(i)}-w^*)] \le R(T,\mu,\gamma),
    \end{align*}
    where $R(T,\mu,\gamma)=1/(2T\mu) + 2\mu/\gamma^2 - \sum_{i=1}^T\xi_i/T$, $\xi_i: = \left(g(w^{(i)};x,y) - \E_{(x,y)\sim D^{(t+i)}} g(w^{(i)};x,y)\right) \cdot(w^{(i)}-w^*) $.
\end{fact}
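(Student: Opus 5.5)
This follows from the standard online projected gradient descent regret analysis, applied pathwise to the realized sample sequence. The martingale-type terms $\xi_i$ convert realized inner products into conditional expectations.

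Fix $w^*\in\s^{d-1}\subseteq \B(1)$ and write $g_i := g(w^{(i)};x^{(i)},y^{(i)})$ for the realized gradient at step $i$. Since $w^{(i+1)}=\proj_{\B(1)}(w^{(i)}-\mu g_i)$ and projection onto the convex set $\B(1)$ is nonexpansive, with $w^*\in\B(1)$, we get
\[
\norm{w^{(i+1)}-w^*}^2 \le \norm{w^{(i)}-\mu g_i - w^*}^2 = \norm{w^{(i)}-w^*}^2 - 2\mu\, g_i\cdot(w^{(i)}-w^*) + \mu^2\norm{g_i}^2 .
\]
Rearranging gives
\[
g_i\cdot (w^{(i)}-w^*) \le \frac{\norm{w^{(i)}-w^*}^2-\norm{w^{(i+1)}-w^*}^2}{2\mu} + \frac{\mu}{2}\norm{g_i}^2 .
\]

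Next I bound the gradient norm uniformly. Since $x\in\s^{d-1}$, we have $\abs{(1-2\eta)\sign(w\cdot x)-y}\le 2$ and the denominator $\max\{\abs{w\cdot x},\gamma\}\ge\gamma$. Hence $\norm{g_i}\le 2/\gamma$, so $\frac{\mu}{2}\norm{g_i}^2\le 2\mu/\gamma^2$.

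Summing over $i=1,\dots,T$ telescopes the distance terms. Using $\norm{w^{(1)}-w^*}^2\le 4$, or $\le 1$ with a more careful initial-distance bound, the telescoped term is at most $\norm{w^{(1)}-w^*}^2/(2\mu)$. Dividing by $T$ gives
\[
\frac1T\sum_{i=1}^T g_i\cdot(w^{(i)}-w^*) \le \frac{c}{2T\mu}+\frac{2\mu}{\gamma^2}.
\]
The constant $c$ must match the stated $1/(2T\mu)$; this is the one point needing care. I would use that both $w^{(i)}$ and $w^*$ lie in $\B(1)$, and if necessary absorb the constant or argue via the initialization $w^{(0)}=e_1$. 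I consider this constant bookkeeping the only mild obstacle, since the remainder is mechanical.

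Finally, write $g_i\cdot(w^{(i)}-w^*) = \E_{(x,y)\sim D^{(t+i)}}[g(w^{(i)};x,y)]\cdot(w^{(i)}-w^*) + \xi_i$, using the definition of $\xi_i$. This is valid because $w^{(i)}$ depends only on the first $i-1$ samples, so the expectation over the fresh $(x,y)\sim D^{(t+i)}$ is taken with $w^{(i)}$ fixed. By linearity, $\E[g]\cdot(w^{(i)}-w^*)=\E[g\cdot(w^{(i)}-w^*)]$. Substituting and moving $\frac1T\sum\xi_i$ to the right-hand side yields exactly $R(T,\mu,\gamma)$.

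No concentration is needed at this stage, since the $\xi_i$ are kept explicitly. They form a bounded martingale difference sequence with $\abs{\xi_i}\le 4/\gamma$, to be controlled later via Azuma's inequality.
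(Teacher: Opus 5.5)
Your proposal follows the paper's proof essentially line for line: nonexpansiveness of $\proj_{\B(1)}$, expanding the square, the bound $\norm{g}\le 2/\gamma$, telescoping, and rewriting each realized inner product as its conditional expectation plus $\xi_i$. On the constant you flagged, the paper simply replaces $\norm{w^{(0)}-w^*}^2$ by $1$. With $w^{(0)}=e_1$ and an arbitrary $w^*\in\s^{d-1}$ this quantity can be as large as $4$ (take $w^*=-e_1$), so no ``more careful'' initial-distance bound recovers $1/(2T\mu)$ in general. Your term $2/(T\mu)$ is the correct one, and it is harmless downstream: with $\mu=\gamma/\sqrt{T}$ it is still $O(1/(\sqrt{T}\gamma))$.
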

Here, the term $1/(2T\mu) + 2\mu/\gamma^2$ can be interpreted as ``optimization error'' and the term $\sum_{i=1}^T\xi_i/T$ as ``concentration error''. 
We remark that the proof of \Cref{lm potential decreasing} only relies on the boundedness of the gradient used in each round. To make use of \Cref{lm potential decreasing} to analyze the performance of the halfspace $h_{w^{(i)}}$ in the update trajectory, we need to choose a suitable gradient $g(w;x,y)$ that can relate the expected regret term $\E_{(x,y)\sim D^{(t+i)}} [g(w^{(i)};x,y) \cdot (w^{(i)}-w^*)]$ to the quantity of interest---the prediction error.

Our choice of the gradient vector is
inspired by \cite{diakonikolas2025online}, which considers an online learning problem where labels are generated according to a \emph{fixed} halfspace with Massart noise.
The choice of $g(w^{(i)};x,y)$ can be seen as the gradient of the leaky ReLU function $\ell_\eta(w;x,y) = ((1-2\eta)\abs{yw\cdot x}+yw\cdot x)$ weighted by $\max\{\abs{w^{(i)}\cdot x},\gamma\}^{-1}$, where the chosen weight is used to balance penalizations for misclassified examples with different margins.

Although the choice of gradient is similar, the analysis from prior work does not apply to the setting where the distribution drifts. The main difficulty is that under drifting distribution over the labeled examples, there is no single ground truth halfspace $h_{w^*}$ that can realize all distributions $D^{(i)}$, $i \geq 0$. Denote by $(w^*)^{(i)}$ the parameter vector of the halfspace that realizes $D^{(i)}$. When the marginal distribution is a uniform distribution over the unit sphere, then prior work \cite{crammer2010regret,hanneke2015learning} shows that if $\dtv(D^{(i)},D^{(i+1)})$ is small, then $\norm{(w^*)^{(i)}-(w^*)^{(i+1)}}$ is also small, which makes it possible to  treat $(w^*)^{(i)}$ as approximately unchanged. 
However, when such a distributional assumption is removed, $\dtv(D^{(i)},D^{(i+1)})$ may be small, but the geometric distance between $(w^*)^{(i)}$ and $(w^*)^{(i+1)}$ can still be quite large. We bypass the difficulty of analyzing the change of $\norm{(w^*)^{(i)}-(w^*)^{(i+1)}}$ by showing that despite $\norm{(w^*)^{(i)}-(w^*)^{(i+1)}}$ being potentially large, the prediction error in each round can always be controlled by the expected regret term plus a drift-controlled error term. This technical approach is summarized in our main technical lemma, stated below.

\begin{lemma}[Regret to Error]\label{lm gradient to error}
Let $i,m \in \N$ and let $w^* \in \s^{d-1}$ be a vector that realizes $D^{(i+m)}$. There exists a function $F_i(w^{*},w,x): \s^{d-1} \times \s^{d-1} \times \s^{d-1} \to \R$ such that
for any $x \in \s^{d-1}$ and for any $w \in \s^{d-1}$,
    \begin{align*}
        \E_{y\sim D^{(i)}\mid x} [g(w;x,y) \cdot (w-w^*)]  
        \;\ge 2 (\err^{(i)}(w,x) - \eta) - F_i.
    \end{align*}
    Additionally, $\abs{F_i} \le 1/\gamma$, and for every $w \in \s^{d-1}$, $\E_{x \sim D^{(i)}_x} F_i(w^{*},w,x) = O(\Delta m/\gamma)$.
\end{lemma}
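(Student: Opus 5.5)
Fix $x$ and $w$ and write $s=\sign(w\cdot x)$, $s^*=\sign(w^*\cdot x)$, $M=\max\{\abs{w\cdot x},\gamma\}$. The gradient is linear in $y$, so
\[
\E_{y\sim D^{(i)}\mid x}[g(w;x,y)\cdot(w-w^*)] = \bigl((1-2\eta)s-\E^{(i)}[y\mid x]\bigr)\,\frac{w\cdot x - w^*\cdot x}{M}.
\]
The plan is to compare $\E^{(i)}[y\mid x]$ with the conditional mean under $D^{(i+m)}$, which $w^*$ realizes. Under $D^{(i+m)}$ we have $\E^{(i+m)}[y\mid x]=(1-2\eta_{i+m}(x))s^*$ with $\eta_{i+m}(x)\le\eta$. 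Accordingly, I would define
\[
F_i(w^*,w,x):=\bigl(\E^{(i)}[y\mid x]-\E^{(i+m)}[y\mid x]\bigr)\frac{(w-w^*)\cdot x}{M}\cdot c,
\]
where $c$ is a normalization chosen to meet the bound $\abs{F_i}\le 1/\gamma$. Some care is needed here: the raw difference is bounded by $2\cdot 2/\gamma$. I would therefore handle constants by clipping, or by absorbing the factor into the $2(\err-\eta)$ slack, since the statement only needs $\abs{F_i}\le O(1/\gamma)$ in spirit.

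\textbf{Step 1 (realizable-label computation).} Replacing $\E^{(i)}[y\mid x]$ by $\E^{(i+m)}[y\mid x]$, I claim that
\[
\bigl((1-2\eta)s-(1-2\eta')s^*\bigr)\frac{(w-w^*)\cdot x}{M}\ \ge\ 2(\err^{(i+m)}(w,x)-\eta),
\]
where $\eta'=\eta_{i+m}(x)$. The proof is a case split on the sign pattern.

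\emph{Case $s=s^*$.} The bracket equals $2(\eta'-\eta)s\le0$ in the direction of $s$. The factor $(w-w^*)\cdot x\, s/M$ is at most $\abs{w\cdot x}/M\le1$, and it is at least $-1$ because $\abs{w^*\cdot x}\le1$; I will check this more carefully, but only the upper direction matters here. So the left-hand side is at least $2(\eta'-\eta)$, and the right-hand side is $2(\eta'-\eta)$ because $\err^{(i+m)}(w,x)=\eta'$.

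\emph{Case $s\ne s^*$.} The bracket equals $(2-2\eta-2\eta')s$ and $(w-w^*)\cdot x\,s=\abs{w\cdot x}+\abs{w^*\cdot x}$. By the margin of $D^{(i+m)}$, $\abs{w^*\cdot x}\ge\gamma$, so this quantity divided by $M$ is at least $1$. Hence the left-hand side is at least $2(1-\eta-\eta')$, which equals $2(\err-\eta)$ since $\err^{(i+m)}(w,x)=1-\eta'$.

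\textbf{Step 2 (switching error from $i+m$ to $i$).} This costs $\abs{\err^{(i)}(w,x)-\err^{(i+m)}(w,x)}\le\frac12\abs{\E^{(i)}[y\mid x]-\E^{(i+m)}[y\mid x]}$. I fold this cost into $F_i$ as well, so $F_i$ is a multiple of $\abs{\E^{(i)}[y\mid x]-\E^{(i+m)}[y\mid x]}$ times a factor of order $1/\gamma$. The first, pointwise, part of the claim is then immediate.

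\textbf{Step 3 (expectation bound).} This is the main obstacle. We need $\E_{x\sim D_x^{(i)}}\abs{\E^{(i)}[y\mid x]-\E^{(i+m)}[y\mid x]}=O(\dtv(D^{(i)},D^{(i+m)}))=O(m\Delta)$, using the triangle inequality for TV distance. The subtlety is that the marginals also drift, so conditional expectations are not directly controlled. I would write $p_i(x,y)$ for the densities and bound $\int\abs{p_i(x,1)-p_i(x,-1)-\tfrac{p_i(x)}{p_{i+m}(x)}(p_{i+m}(x,1)-p_{i+m}(x,-1))}$. By the triangle inequality this is at most $2\dtv$ plus $\int\abs{p_i(x)-p_{i+m}(x)}\le 2\dtv$. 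A further complication arises on the region where the $D^{(i+m)}$ marginal vanishes or the margin fails under $D^{(i+m)}$ (on that region Step 1's margin use breaks). Its $D^{(i)}$-mass is at most $\dtv\le m\Delta$, and there I would set $F_i$ to its crude bound, noting that the left-hand side minus $2(\err-\eta)$ is at least $-O(1/\gamma)$ pointwise. This contributes $O(m\Delta/\gamma)$, completing the bound.
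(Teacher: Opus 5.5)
Your argument is correct, but it is organized differently from the paper's.

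\textbf{The paper's route.} The paper works directly with the time-$i$ structure. It splits on $\abs{w\cdot x}\ge\gamma$ versus $\abs{w\cdot x}<\gamma$. In the small-margin case it further splits on whether $w$ agrees with the time-$i$ target $(w^*)^{(i)}$. This produces four explicit error terms $f_i,G_i,U_i,V_i$, written through the disagreement indicator $\phi(w^*\cdot x,(w^*)^{(i)}\cdot x)$ and the rate change $\eta_i(x)-\eta_{i+m}(x)$. Their expectations are bounded using:
\begin{itemize}
\item the claim that the disagreement region of $w^*$ and $(w^*)^{(i)}$ has $D^{(i)}$-mass $O(m\Delta/(1-2\eta))$;
\item the claim that the positive part of the rate change on the agreement region has expectation $O(m\Delta)$;
\item the bound $\Pr_{D^{(i)}}[\abs{w^*\cdot x}<\gamma]\le m\Delta$;
\item an excess-error comparison that uses optimality of $w^*$ at time $i+m$.
\end{itemize}

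\textbf{Your route.} You first prove the inequality with no error term for the time-$(i+m)$ labels, where the margin and Massart bound of $w^*$ are available. You then pay once for moving both the label mean and the pointwise error back to time $i$. This costs $O(1/\gamma)\cdot\abs{\E^{(i)}[y\mid x]-\E^{(i+m)}[y\mid x]}$, plus a crude $O(1/\gamma)$ on the $O(m\Delta)$-mass set where the margin of $w^*$ fails. The generic bound $\E_{x\sim D^{(i)}_x}\abs{\E^{(i)}[y\mid x]-\E^{(i+m)}[y\mid x]}\le 4\,\dtv(D^{(i)},D^{(i+m)})$ then finishes.

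\textbf{Comparison.} Your route is shorter and never uses that $D^{(i)}$ itself is realized by a Massart halfspace. It also gives a drift bound with no $(1-2\eta)^{-1}$ factor, whereas the paper's disagreement-region claim introduces one. The paper's route, in exchange, keeps the $w$-term exact at time $i$ and exposes the drift error as explicit geometric quantities.

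\textbf{A correction on constants.} Clipping $F_i$ would break the pointwise inequality. The deficit $2(\err^{(i)}(w,x)-\eta)-\E_{y\sim D^{(i)}\mid x}[g(w;x,y)\cdot(w-w^*)]$ can be as large as $(2-2\eta)(1+1/\gamma)$. For example, take $x=w^*$, $w\cdot x=0$, $\eta_i(x)=0$, and $(w^*)^{(i)}\cdot x<0$. So no valid $F_i$ satisfies $\abs{F_i}\le 1/\gamma$ literally. The paper's own $F_i$ is also only $O(1/\gamma)$, so reading the bound as $O(1/\gamma)$, as you do, is the right interpretation.
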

We give an overview of the proof of \Cref{lm gradient to error}, deferring the full details of the proof to \Cref{app gradient to error}.

The proof of  \Cref{lm gradient to error} is divided into two cases based on whether $\card{w \cdot x}>\gamma$. For each case $C$, we construct a bounded function $F^{(C)}_i(w^*,w,x)$ such that
$\E_{y\sim D^{(i)}\mid x} [g(w;x,y) \cdot (w-w^*)] \ge 2 (\err^{(i)}(w,x) - \eta) - F^{(C)}_i(w^*,w,x)$ 
holds deterministically for every $x \in \s^{d-1} \cap C$ and $\E_{x \sim D^{(i)}_x} [F^{(C)}_i(w^{*},w,x) \Ind\{C\}] = O(\Delta m/\gamma)$. Given this construction, the desired function $F_i$ can be taken as the summation of $F^{(C)}_i(w^{*},w,x) \Ind\{C\}$.

The central technical part of the proof is to show that the expectation of the constructed function $F_i$, which we call ``drift error'', is essentially controlled by the amount of distribution drift $m\Delta$. Our construction of $F_i$ enables us to relate the expectation to the change of the disagreement region $\{x\mid (w^*\cdot x)((w^*)^{(i)}\cdot x)<0 \}$ as well as the change of the noise rate $\eta_i(x)-\eta_{i+m}(x)$. We establish the following two claims in order to bound the expectation of $F^{(C)}_i$, whose proofs are deferred to \Cref{app proof disagreement region} and \Cref{app proof rate change}.

\begin{claim}\label{cl disagreement region}
    For $i,\, m \in N$, let $D^{(i)},\,  D^{(i+m)}$ be realized by $((w^*)^{(i)},\, \eta_i(x))$ and $(w^*,\, \eta_{m+i}(x))$.  Then $\Pr_{x\sim D^{(i)}_x}[\phi(w^* \cdot x, (w^*)^{(i)} \cdot x) = -1] = O\big(\frac{\Delta m}{1-2\eta}\big)$.
\end{claim}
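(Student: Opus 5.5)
By the triangle inequality for total variation, $\dtv(D^{(i)},D^{(i+m)})\le m\Delta$. The plan is to lower bound this distance by $(1-2\eta)$ times the mass of the disagreement region $A:=\{x : \phi(w^*\cdot x,(w^*)^{(i)}\cdot x)=-1\}$, measured under $D^{(i)}_x$.

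First, decompose both joint distributions as a marginal times a conditional label law. Write $p_i(x)=\Pr_{D^{(i)}}[y=1\mid x]$ and $p_{i+m}(x)$ analogously. The total variation distance between joint distributions is
\[
\dtv(D^{(i)},D^{(i+m)})=\tfrac12\sum_{y}\int \abs{p^{(i)}(x,y)-p^{(i+m)}(x,y)}\,dx .
\]
Apply the triangle inequality inside, replacing the density $D^{(i+m)}_x(x)$ by $D^{(i)}_x(x)$. This gives
\[
\dtv(D^{(i)},D^{(i+m)})\;\ge\;\E_{x\sim D^{(i)}_x}\big[\abs{p_i(x)-p_{i+m}(x)}\big]-\dtv(D^{(i)}_x,D^{(i+m)}_x).
\]
Since marginalizing cannot increase total variation, the subtracted term is at most $m\Delta$. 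Hence $\E_{x\sim D^{(i)}_x}\abs{p_i(x)-p_{i+m}(x)}\le 2m\Delta$.

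Second, observe that on $A$ the two realizing halfspaces give opposite clean labels. Say $\sign((w^*)^{(i)}\cdot x)=1$ and $\sign(w^*\cdot x)=-1$. Then $p_i(x)=1-\eta_i(x)\ge 1-\eta$ and $p_{i+m}(x)=\eta_{i+m}(x)\le\eta$, so $\abs{p_i(x)-p_{i+m}(x)}\ge 1-2\eta$; the symmetric case is identical. Restricting the expectation to $A$ yields
\[
(1-2\eta)\Pr_{x\sim D^{(i)}_x}[A]\le 2m\Delta,
\]
which is the claimed $O(\Delta m/(1-2\eta))$ bound.

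The only delicate point is the first step: the marginals also drift, so disagreement mass under $D^{(i)}_x$ must be transferred correctly. The triangle-inequality decomposition above handles this. One should also handle measure-theoretic generality by working with densities relative to a common dominating measure, such as $D^{(i)}_x+D^{(i+m)}_x$.
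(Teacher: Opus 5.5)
Your proof is correct and is essentially the paper's argument. The paper tests the single event $\{x\in A,\ \sign(w^*\cdot x)\neq y\}$, whose probability changes by at least $(1-\eta)\Pr_{D^{(i)}_x}[A]-\eta\Pr_{D^{(i+m)}_x}[A]\ge(1-2\eta)\Pr_{D^{(i)}_x}[A]-\eta m\Delta$ between $D^{(i)}$ and $D^{(i+m)}$, and compares this with $\dtv(D^{(i)},D^{(i+m)})\le m\Delta$; this is your joint-plus-marginal transfer specialized to one event, and your $L_1$ version gives the constant $2$ in place of $1+\eta$. Moreover, your inequality $\E_{x\sim D^{(i)}_x}|p_i(x)-p_{i+m}(x)|\le 2m\Delta$, restricted to the set where the two halfspaces agree, also yields \Cref{cl rate change} immediately.
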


\begin{claim}\label{cl rate change}
    For $i,\, m \in N$, let $D^{(i)},\, D^{(i+m)}$ be realized by $((w^*)^{(i)},\eta_i(x))$ and $(w^*,\eta_{m+i}(x))$. Let $\mathcal{E} := \{x: \phi(w^* \cdot x, (w^*)^{(i)} \cdot x) = 1,\,  \eta_i(x)\ge \eta_{m+i}(x)\}$. Then $\E_{x\sim D^{(i)}_x} [(\eta_i(x)-\eta_{m+i}(x) \Ind\{\mathcal{E}\}] = O(\Delta m)$.
\end{claim}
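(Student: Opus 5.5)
The plan is to compare the probabilities that $D^{(i)}$ and $D^{(i+m)}$ assign to a single well-chosen event in $X \times \{\pm 1\}$, and then use that these two distributions are close in total variation. First, the triangle inequality for $\dtv$ applied along the chain $D^{(i)}, D^{(i+1)}, \dots, D^{(i+m)}$ gives $\dtv(D^{(i)},D^{(i+m)}) \le m\Delta$. Total variation does not increase under taking marginals, so also $\dtv(D^{(i)}_x,D^{(i+m)}_x)\le m\Delta$.

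Next, I define the labeled event
\[
A := \{(x,y) : x \in \mathcal{E},\ y \neq \sign(w^*\cdot x)\}.
\]
On $\mathcal{E}$ we have $\phi(w^*\cdot x,(w^*)^{(i)}\cdot x)=1$, so $\sign(w^*\cdot x) = \sign((w^*)^{(i)}\cdot x)$. Hence $y\neq \sign(w^*\cdot x)$ means the label is flipped with respect to the target of either distribution. Conditioning on $x$, the definition of "realized by" then gives
\[
\Pr_{D^{(i)}}[A] = \E_{x\sim D^{(i)}_x}\bigl[\eta_i(x)\Ind\{\mathcal{E}\}\bigr], \qquad
\Pr_{D^{(i+m)}}[A] = \E_{x\sim D^{(i+m)}_x}\bigl[\eta_{m+i}(x)\Ind\{\mathcal{E}\}\bigr].
\]
By the total variation bound, these two probabilities differ by at most $m\Delta$.

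The remaining step is to move the second expectation from $D^{(i+m)}_x$ to $D^{(i)}_x$. The function $x\mapsto \eta_{m+i}(x)\Ind\{\mathcal{E}\}$ takes values in $[0,1]$. Its expectations under two distributions therefore differ by at most their total variation distance, here at most $m\Delta$. Combining the two estimates gives
\[
\E_{x\sim D^{(i)}_x}\bigl[(\eta_i(x)-\eta_{m+i}(x))\Ind\{\mathcal{E}\}\bigr] \le 2m\Delta = O(\Delta m).
\]
On $\mathcal{E}$ we have $\eta_i\ge\eta_{m+i}$, so the quantity is nonnegative and this one-sided bound suffices.

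The only delicate point is the choice of the event $A$. We need the flip event to be defined relative to the same label under both distributions, which is why we restrict to the agreement region built into $\mathcal{E}$. Once that restriction is in place, the rest is routine.
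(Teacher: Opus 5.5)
Your proposal is correct and follows essentially the same argument as the paper. The paper also adds and subtracts $\E_{x\sim D^{(m+i)}_x}[\eta_{m+i}(x)\Ind\{\mathcal{E}\}]$. It bounds the gap between $\Pr[x\in\mathcal{E},\ \sign(w^*\cdot x)\neq y]$ under $D^{(i)}$ and under $D^{(i+m)}$ by $m\Delta$, bounds the change of measure for the $[0,1]$-valued function $\eta_{m+i}(x)\Ind\{\mathcal{E}\}$ by another $m\Delta$, and so gets $2m\Delta$. You also make explicit a step the paper leaves implicit: on the agreement region, the event $y\neq\sign(w^*\cdot x)$ has conditional probability exactly $\eta_i(x)$ under $D^{(i)}$.
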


\Cref{lm gradient to error} shows that the excess error of a hypothesis in each round can be bounded by its ``average regret'' plus a contribution due to the drift error. Thus, to prove \Cref{th massart drift}, we only need to show that the ``average regret'' can be bounded by $O(\Delta^{1/3}/\gamma)$, while controlling the drift error contribution. 
By \Cref{lm potential decreasing} and \Cref{lm gradient to error}, we know that we need to control the optimization error $1/(2T\mu) + 2\mu/\gamma^2$, the drift error $O(T\Delta/\gamma)$, and the concentration error $\sum_{i=1}^T \xi_i$. The first two errors are minimized by carefully choosing the epoch length $W$ as well as the update step size $\mu$, while the concentration term is controlled by the following lemma.

\begin{lemma}\label{lm label concentration}
For $t,i>0$ and $w^* \in \s^{d-1}$, 
define $\xi_i: = \left(g(w^{(i)};x,y) - \E_{(x,y)\sim D^{(t+i)}} g(w^{(i)};x,y)\right) \cdot(w^{(i)}-w^*) $, $(x,y)\sim D^{(t+i)}$. For every $T>0$, with probability at least $1-\delta/2$, it holds $\sum_{i=1}^T \xi_i \le \gamma^{-1}\sqrt{T}\log(1/\delta).$
\end{lemma}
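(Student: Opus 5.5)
The plan is to recognize $\{\xi_i\}_{i=1}^T$ as a bounded martingale difference sequence and apply the Azuma--Hoeffding inequality.

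\textbf{Filtration and adaptedness.} Let $\mathcal{F}_{i-1}$ be the $\sigma$-algebra generated by the first $i-1$ examples fed to \Cref{alg dirft subroutine}. The iterate $w^{(i)}$ is a deterministic function of these examples, so it is $\mathcal{F}_{i-1}$-measurable. The $i$th example $(x,y)\sim D^{(t+i)}$ is drawn independently of $\mathcal{F}_{i-1}$. The expectation in the definition of $\xi_i$ is therefore the conditional expectation given $\mathcal{F}_{i-1}$, with $w^{(i)}$ held fixed. Since $w^{(i)}-w^*$ is also $\mathcal{F}_{i-1}$-measurable, linearity gives $\E[\xi_i\mid \mathcal{F}_{i-1}]=0$.

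\textbf{Boundedness.} Write $z=w^{(i)}-w^*$, so that
\[
g(w^{(i)};x,y)\cdot z = \frac{\bigl((1-2\eta)\sign(w^{(i)}\cdot x)-y\bigr)(x\cdot z)}{\max\{\abs{w^{(i)}\cdot x},\gamma\}} .
\]
\begin{itemize}
\item The scalar prefactor satisfies $\abs{(1-2\eta)\sign(\cdot)-y}\le 2$.
\item The denominator is at least $\gamma$.
\item For the numerator, $\abs{x\cdot z}\le \abs{w^{(i)}\cdot x}+\abs{w^*\cdot x}$.
\end{itemize}
A crude bound uses $\norm{x}\le 1$ and $\norm{w^{(i)}},\norm{w^*}\le 1$, giving $\abs{x\cdot z}\le 2$. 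A slightly sharper bound splits the ratio into the two terms $\abs{w^{(i)}\cdot x}/\max\{\abs{w^{(i)}\cdot x},\gamma\}\le 1$ and $\abs{w^*\cdot x}/\gamma\le 1/\gamma$. Either way, $\abs{g\cdot z}\le c/\gamma$ for an absolute constant $c$ (using $\gamma<1$). Because the conditional mean of $g\cdot z$ obeys the same bound, $\abs{\xi_i}\le 2c/\gamma$ almost surely.

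\textbf{Concentration.} Azuma--Hoeffding then gives
\[
\Pr\Bigl[\sum_{i=1}^T\xi_i > s\Bigr]\le \exp\bigl(-s^2\gamma^2/(8c^2T)\bigr).
\]
Setting $s=\Theta\bigl(\gamma^{-1}\sqrt{T\log(2/\delta)}\bigr)$ makes the right-hand side at most $\delta/2$. For $\delta$ below a small constant, $\sqrt{\log(2/\delta)}\lesssim \log(1/\delta)$, so this yields the stated $\gamma^{-1}\sqrt{T}\log(1/\delta)$ up to the absolute constant. The constant can be absorbed by the paper's $O_\delta$ conventions or by tracking the exact bound. The exact bound follows from $\abs{g}\le 2/\gamma$ and $\norm{z}\le 2$, which give range width $8/\gamma$.

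\textbf{Main obstacle.} The only delicate point is measurability. The paper writes the expectation over $D^{(t+i)}$ with $w^{(i)}$ fixed, and this must be read as a conditional expectation in which $w^{(i)}$ depends only on the past samples and not on the current one. Once that is in place, the sequence is a martingale difference sequence even though the distributions $D^{(t+i)}$ drift, because each increment has conditional mean zero under its own distribution. No identical-distribution assumption is needed.
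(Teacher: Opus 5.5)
Your proof is correct and follows essentially the same route as the paper. The paper applies Markov's inequality to $\exp(\sum_i \xi_i)$, peels off conditional expectations along the filtration, and uses that each $\xi_i$ is $O(1/\gamma)$-subgaussian, which is exactly the Chernoff-style proof of the Azuma--Hoeffding bound you invoke; your version is cleaner, since the paper leaves the exponent parameter $\lambda_t$ unspecified. One caveat, which you partly note yourself: the argument gives $O\bigl(\gamma^{-1}\sqrt{T\log(1/\delta)}\bigr)$, so the stated inequality holds only up to an absolute constant, and tracking the range $8/\gamma$ removes that constant only for very small $\delta$. This is harmless, because the lemma is used only in $O(\cdot)$ form in the proof of \Cref{th massart drift}.
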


\begin{proof}[Proof of \Cref{th massart drift}]
 To prove the correctness of \Cref{alg dirft massart}, we show that for $t=\ell W$, with probability $1-\delta$, $\err^{(t)}(h^{(\ell)}) \le \eta + \tilde O(\Delta^{1/3}/\gamma)$. Suppose this is correct, then for $t'=\ell W+j$, $1 \le j \le W$, we have with probability at least $1-\delta$,
\begin{align*}
    \err^{(t')}(h^{(\ell)}) &
    \le \Pr_{(x,y)\sim D^{(iW)}}[h^{(\ell)}(x) \neq y] + j\Delta \\
    & \le \eta + \tilde O(\Delta^{1/3}/\gamma).
\end{align*}
To bound above the error of $\err^{(t)}(h^{(\ell)})$ for $t=\ell W$, we apply \Cref{lm potential decreasing} with $t=\ell W$ and $w^* \in \s^{d-1}$, the vector that realizes $D^{(t+T)}$ with $T=W/2$ and apply \Cref{lm gradient to error} with $i = t+j$
and $m =T-j$ for each $j \in [T]$.
We have with probability at least $0.95$
\begin{align*}
    &\frac{1}{T}\sum_{j=1}^T \left(\err^{(t+j)}(w^{(j)}) - \eta \right) \\
    &\le \frac{1}{T}\sum_{j=1}^T \E_{(x,y)\sim D^{(t+j)}} [g(w^{(j)};x,y) \cdot (w^{(j)}-w^*)]+O(T\Delta/\gamma) \\
   & \le \frac{1}{2T\mu} + \frac{\mu}{2\gamma^2} - \frac{1}{T}\sum_{j=1}^T\xi_j + O(T\Delta/\gamma) \\
        & \le 1/(\sqrt{T}\gamma) + O(T\Delta/\gamma) - \frac{1}{T}\sum_{j=1}^T \xi_j \\
        &\le 1/(\sqrt{T}\gamma) + O(T\Delta/\gamma) +  O(1/(\sqrt{T}\gamma)) = \tilde O(\Delta^{1/3}/\gamma).
\end{align*}
Here, the third inequality follows from the choice $\mu = \gamma/\sqrt{T}$, the forth inequality follows from \Cref{lm label concentration} and the last inequality follows from the choice of $T=\tilde\Theta(\Delta^{-2/3})$.
This implies that with probability $0.95$ there is at least one $j \in [T]$ such that , $\err^{(t+j)}(w^{(j)}) \le \eta + \tilde O(\Delta^{1/3}/\gamma)$, as $\min_{j \in [T]} \left(\err^{(t+j)}(w^{(j)}) - \eta \right) \leq \frac{1}{T}\sum_{j=1}^T \left(\err^{(t+j)}(w^{(j)}) - \eta \right)$. 

We next argue that we are able to select a good hypothesis by looking at the empirical error. 
For each $j \in [T]$,  consider the halfspace $\hat{h}_j = h_{w^{(j)}}$. By the assumption on the drift rate, we have $\big|{\err^{((\ell+1)W)}(\hat{h}_j)-\err^{(t+t')}(\hat{h}_j)}\big| \le \Delta W = \Delta^{1/3}$ for every $t' \in [W/2]$, which implies that
\begin{align*}
    \Big|\err^{((\ell+1)W)}(\hat{h}_j)-\frac{2}{W}\sum_{\ell=1}^{W/2}\err^{(t+t'+W/2)}(\hat{h}_j)\Big| \le \Delta W.
\end{align*}
Recall the definition of $\hat{\err}(\hat{h}_j)$ from \Cref{alg dirft subroutine} as the empirical error of $\hat{h}_j$ over an independent set $S_2^{(i)}$. By Hoeffding's inequality,
we have 
\begin{align*}
    \Pr\Big[ \hat{\err}(\hat{h}_j)-\frac{2}{W}\sum_{\ell=1}^{W/2}\err^{(t+\ell+W/2)}(\hat{h}_j)> \Delta^{1/3}\Big]\le \frac{0.05}{W}.
\end{align*}
Thus, by the union bound, with probability at least $0.95$,
 we have $\big|{\hat{\err}(\hat{h}_j)-\err^{((\ell+1)W)}(\hat{h}_j)}\big|\le \Delta^{1/3}$ for $j \in [T]$.
Since with probability $0.95$, there is some $j^* \in [T]$ such that $\err^{(t+j^*)}(w^{(j^*)}) \le \eta + \tilde O(\Delta^{1/3}/\gamma)$, we know that $\err^{((\ell+1)W)}(w^{(j^*)}) \le \eta + \tilde O(\Delta^{1/3}/\gamma)$. This implies that the selected hypothesis that minimizes $\hat{\err}(h_j)$ must have $\err^{((\ell+1)W)}(\hat{h}) \le \eta + \tilde O(\Delta^{1/3}/\gamma)$, with probability~$0.9$.
\end{proof}

\paragraph{Efficiently Learning Drifting Halfspaces with RCN}
Under the Massart noise condition, even without distribution drift, it is computationally hard \cite{nasser2022optimal} to output a hypothesis with error less than $\eta-o(1)$. For the special case of RCN, although $\opt_t$ can change over time due to the distribution drift, we show that within each epoch, there is some $\eta^*$ such that for every time step $t$ within the epoch, $\opt_t \le \eta^*+O(\Delta W)$. Thus, by repeatedly running \Cref{alg dirft massart} with different choices of $\eta$, we are able to get error $\opt_t+\Tilde{O}(\Delta^{1/3}/\gamma)$. Specifically, we show (see \Cref{app RCN} for the proof).
\begin{theorem}[Efficient Drift Learning with RCN]\label{th rcn drift}
    Consider the problem of learning drifting halfspaces with $\eta$-RCN. There is an algorithm $\A$ such that for any time step $T=\tilde\Omega(\Delta^{-2/3})$, $\A$ runs in $\poly(d,1/\gamma,\Delta)$ time and outputs a hypothesis $\hat{h}^{(T)}$ such that with probability at least $9/10$, $\err^{(T)}(\hat{h}^{(T)}) \le \opt_T+ \tilde O(\Delta^{1/3}/\gamma)$.
\end{theorem}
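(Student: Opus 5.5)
The plan is to reduce \Cref{th rcn drift} to \Cref{th massart drift} by guessing the noise rate on a grid and selecting among the candidates on fresh samples. Since $\eta$-RCN is a special case of $\eta$-Massart noise, the analysis of \Cref{alg dirft massart} applies. The only issue is that its guarantee is stated relative to the noise bound $\eta$ fed to the algorithm, whereas we want a bound relative to $\opt_T$.

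The first step is structural. Fix an epoch, i.e., times $t\in\{(\ell-1)W+1,\dots,\ell W\}$. In round $t$ the distribution $D^{(t)}$ is realized by $((w^*)^{(t)},\eta_t)$ with constant $\eta_t\le\eta<1/2$. Under RCN we have $\opt_t=\eta_t$: the Bayes-optimal classifier is $h_{(w^*)^{(t)}}$, and any halfspace has error $\eta_t+(1-2\eta_t)\Pr[h\neq h_{(w^*)^{(t)}}]\ge\eta_t$. Since $\opt_t$ is the minimum over $\H$ of a quantity that changes by at most $\Delta$ per round, $|\opt_t-\opt_{t'}|\le\Delta|t-t'|\le\Delta W=\tilde O(\Delta^{1/3})$. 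Thus $\eta^*:=\eta_{\ell W}$ satisfies $\opt_t\le\eta^*+O(\Delta W)$ and $\eta_t\ge\eta^*-O(\Delta W)$ throughout the epoch.

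The second step is to rerun the analysis of \Cref{th massart drift} with input parameter $\eta'$ in place of $\eta$. This requires checking that \Cref{lm gradient to error} holds with $\eta'$ when every $\eta_t$ in the window lies in $[\eta'-\beta,\eta']$, giving the bound $2(\err^{(i)}(w,x)-\eta')-F_i$ with the same $F_i$. For RCN I would verify this directly. The conditional expectation of $g(w;x,y)\cdot(w-w^*)$ decomposes into a term proportional to $(1-2\eta')-(1-2\eta_i)$ on correctly classified points, plus the usual misclassification term. If $\eta'\ge\eta_i$, the first term is nonpositive but bounded in magnitude by $2(\eta'-\eta_i)/\gamma\cdot|w\cdot x|$-type quantities, i.e., $O(\beta)$ after normalization by $\max\{|w\cdot x|,\gamma\}$. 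Hence it can be absorbed as an extra additive $O(\beta)$ (times at most $1/\gamma$) in $F_i$. With grid spacing $\beta=\Delta^{1/3}$, this gives $\err^{(T)}\le\eta'+\tilde O(\Delta^{1/3}/\gamma)$ for the run with $\eta'$ the smallest grid point at least $\max_t\eta_t$ in the epoch. That is at most $\eta^*+O(\Delta W)+\beta\le\opt_T+\tilde O(\Delta^{1/3})$.

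The algorithm is as follows. Take the grid $\{\eta'_k=k\Delta^{1/3}\}\cap[0,1/2)$, which has $O(\Delta^{-1/3})$ points. In each epoch, run \textsc{DriftPerceptron} on $S_1^{(i)}$ for every $\eta'_k$ to obtain candidate trajectories. Pool all $O(W\Delta^{-1/3})=\poly(1/\Delta)$ candidate halfspaces and select the one with the smallest empirical error on $S_2^{(i)}$. The selection argument of \Cref{th massart drift} carries over via Hoeffding's inequality and a union bound over the polynomially many candidates, paying only an extra $\log$ factor. Boosting the inner success probability by $\log(1/\Delta)$ repetitions keeps the overall failure probability at $1/10$, and the running time is clearly polynomial.

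I expect the main obstacle to be the second step. \Cref{lm gradient to error} is stated with the true Massart bound, and one must confirm that a slightly misspecified $\eta'\ge\eta_t$ degrades the bound only additively by $O((\eta'-\eta_t)/\gamma)$, or better by $O(\eta'-\eta_t)$. It also needs to be checked that the drift-induced changes of $\eta_t$ within an epoch are compatible with \Cref{cl rate change}. If the direct computation is messier than expected, the fallback is to reopen the proof of \Cref{lm gradient to error} and carry the $\eta'$ versus $\eta_i$ gap explicitly through the correctly-classified case.
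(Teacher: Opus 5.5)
Your proposal is correct and follows essentially the same route as the paper: show $|\opt_t-\opt_{t+1}|\le\Delta$, run \Cref{alg dirft massart} over a grid of noise levels with spacing $\tilde\Theta(\Delta^{1/3})$ (the paper uses $\Delta W/2$), and pick the best candidate by empirical error on the held-out half, as in \Cref{alg dirft RCN}. The step you flag as the main obstacle is not actually an obstacle. The Massart condition is only an upper bound, so RCN with every $\eta_t\le\eta'$ in the window is already an $\eta'$-Massart instance there; hence \Cref{lm gradient to error} and \Cref{th massart drift} apply verbatim with $\eta'$, and no extra term needs to be absorbed into $F_i$.
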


\paragraph{Efficiently Learning Realizable Drifting Halfspaces}
Another application of our algorithmic framework is an efficient algorithm for drifting halfspaces in the realizable setting with an improved error guarantee. When the marginal distribution is uniform over the unit sphere in $\R^d$, \cite{hanneke2015learning} shows that it is possible to efficiently achieve an error of $\sqrt{d\Delta}$. However, when the strong distributional assumption is removed, the only known efficient learning algorithm with a provable error guarantee $d\sqrt{\Delta}$ is designed by \cite{helmbold1994tracking}. Such an algorithm uses a simple observation that if the epoch length is $W=O(\Delta^{-1/2})$, then in the realizable setting, with  constant probability, all observed examples within the epoch are consistent with a single halfspace and thus  empirical risk minimization can be efficiently implemented. In the case of learning a $\gamma$-margin halfspace, such a trick only gives an error of $O(\sqrt{\Delta}\gamma^{-2})$ through a standard generalization analysis, because the sample size used to update the hypothesis is too small to achieve a good generalization error. To overcome such a difficulty, we increase the length of an epoch to $(\gamma \Delta)^{-1/2}$. Although within such a larger epoch, empirical risk minimization is no longer efficiently implementable, because the labels are no longer consistent with a single halfspace
, our algorithm uses a carefully designed projected gradient descent to get a better generalization error $\tilde O(\sqrt{\Delta}\gamma^{-3/2})$. 

\begin{theorem}[Efficient Drift Learning in Realizable Case]\label{th realizable drift}
    Consider the problem of learning drifting halfspaces. There is an algorithm $\A$ such that for any time step $T=\tilde\Omega((\gamma\Delta)^{-1/2})$, $\A$ runs in $\poly(d,1/\gamma,\Delta)$ time and outputs a hypothesis $\hat{h}^{(T)}$ such that with probability at least $9/10$, $\err^{(T)}(\hat{h}^{(T)}) = \tilde O(\sqrt{\Delta}\gamma^{-3/2})$.
\end{theorem}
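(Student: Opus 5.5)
We reuse the epoch structure of \Cref{alg dirft massart} with two changes: the epoch length becomes $W=\tilde\Theta((\gamma\Delta)^{-1/2})$, and \Cref{alg dirft subroutine} is run with $\eta=0$. With $\eta=0$ the gradient is $g(w;x,y)=(\sign(w\cdot x)-y)\,x/\max\{\abs{w\cdot x},\gamma\}$. This is nonzero only on examples misclassified by $h_w$, where it has norm at most $2/\gamma$. This is the margin-weighted Perceptron update.

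\textbf{Step 1: optimization bound.} \Cref{lm potential decreasing} applies verbatim, since its proof uses only boundedness of the gradient. It bounds the average of $\E[g(w^{(j)};x,y)\cdot(w^{(j)}-w^*)]$ by $1/(2T\mu)+2\mu/\gamma^2$ minus the concentration term. Here $w^*$ realizes the last distribution $D^{(t+T)}$ of the first half-epoch.

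\textbf{Step 2: a realizable analogue of \Cref{lm gradient to error}.} The goal is to track the dependence on $\gamma$ more carefully than in the Massart case. Fix $x$ and compare the labels of $(w^*)^{(i)}$ with those of $w^*$.
\begin{itemize}
\item If $h_w(x)\neq y$ and $w^*$ agrees with the label, then $g\cdot(w-w^*) = 2\abs{w\cdot x}/\max\{\abs{w\cdot x},\gamma\} + 2\abs{w^*\cdot x}/\max\{\abs{w\cdot x},\gamma\}$. This is at least $2\Ind\{h_w(x)\neq y\}$, because either $\abs{w\cdot x}\ge\gamma$, or $\abs{w^*\cdot x}\ge\gamma$ by the margin.
\item On the disagreement region $\{\phi(w^*\cdot x,(w^*)^{(i)}\cdot x)=-1\}$, the term can be as negative as $-2/\gamma$.
\end{itemize}
By \Cref{cl disagreement region} with $\eta=0$, the disagreement region has mass $O(\Delta m)$. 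So the expected drift error is $O(\Delta m/\gamma)$ and
\[
\E[g\cdot(w-w^*)]\ge 2\,\err^{(i)}(w)-O(\Delta m/\gamma).
\]

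\textbf{Step 3: the concentration term, which is the main obstacle.} This is where the improvement must come from. Lemma~\ref{lm label concentration} gives $\gamma^{-1}\sqrt{T}$, and balancing that against the drift error would reproduce the $\Delta^{1/3}$ rate. In the realizable case, however, $\xi_j$ is nonzero only when $w^{(j)}$ errs or $x$ lies in the disagreement region. Its conditional variance is therefore at most $O(\gamma^{-2}(\err^{(t+j)}(w^{(j)})+\Delta T))$.

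We apply Freedman's inequality to the martingale $\sum_j\xi_j$. This bounds it by
\[
\gamma^{-1}\sqrt{\textstyle\sum_j \err_j\,\log(1/\delta)}+\gamma^{-1}\log(1/\delta)+\gamma^{-1}T\sqrt{\Delta}\text{-type terms}.
\]
We then absorb the square-root term using AM--GM, $\gamma^{-1}\sqrt{\sum_j\err_j}\le \tfrac12\sum_j\err_j+O(\gamma^{-2})$.

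Rather than a constant step size, we would also use the sharper Perceptron-style regret. On misclassified examples $\norm{g}^2\le (4/\gamma)\cdot g\cdot(w-w^*)$-ish, which suggests a self-bounding regret of order $1/(\mu T)$ once $\mu\approx\gamma$. Combining these, the average error would satisfy
\[
\frac1T\sum_j\err_j \lesssim \frac{\log(1/\delta)}{\gamma^{2}T}\cdot\gamma+\frac{T\Delta}{\gamma}.
\]
Here the optimization plus concentration term is targeted at $\tilde O(1/(\gamma T))$. Making this self-bounding argument precise is the main difficulty, and if it falls short we would settle for an intermediate exponent. With $T=\tilde\Theta((\gamma\Delta)^{-1/2})$ the two terms balance at $\tilde O(\sqrt{\Delta}\gamma^{-3/2})$.

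\textbf{Step 4: hypothesis selection.} We finish exactly as in the proof of \Cref{th massart drift}. The empirical errors on $S_2$ are used to select a hypothesis. To get the rate $\sqrt{\Delta}\gamma^{-3/2}$ instead of $W^{-1/2}$, we use a multiplicative Chernoff bound, since the good hypothesis has small true error. The drift across the epoch costs a further $\Delta W=\tilde O(\sqrt{\Delta/\gamma})$, which is within the bound.
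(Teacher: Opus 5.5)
There is a genuine gap, and it sits exactly at the step you label `the main difficulty'. As written, Step~1 uses \Cref{lm potential decreasing}. Its optimization term $1/(2T\mu)+2\mu/\gamma^2$ charges $\mu\norm{g}^2$ at its worst-case value on every round, mistake or not. The best $\mu$ gives $\Theta(1/(\gamma\sqrt T))$, and balancing this against the drift error $\Delta T/\gamma$ returns $\Delta^{1/3}/\gamma$, no matter how sharply Step~3 handles concentration.

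The self-bounding regret you propose in its place is not established, and the quantities you attach to it are wrong for your gradient. With the weighted update $g=(\sign(w\cdot x)-y)x/\max\{\abs{w\cdot x},\gamma\}$, consider a mistake with $\abs{w\cdot x}\to 0$ and $y\,w^*\cdot x=\gamma$. It has $\norm{g}^2=4/\gamma^2$ but $g\cdot(w-w^*)=2$. So $\mu\norm{g}^2\le g\cdot(w-w^*)$ forces $\mu\le\gamma^2/2$, not $\mu\approx\gamma$, and the regret is then of order $1/(\mu T)\approx 1/(\gamma^2T)$, not $1/(\gamma T)$. Your final display is correspondingly inconsistent. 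The bound $\frac{\log(1/\delta)}{\gamma T}+\frac{T\Delta}{\gamma}$ does not balance at $T=(\gamma\Delta)^{-1/2}$: its first term would be $\sqrt{\Delta/\gamma}$, and optimizing over $T$ would give $\sqrt{\Delta}/\gamma$, better than the theorem claims. The bound that does balance there, at $\sqrt\Delta\gamma^{-3/2}$, is $\frac{1}{\gamma^2T}+\frac{\Delta T}{\gamma}$.

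The missing ingredient is a Perceptron mistake bound that replaces \Cref{lm potential decreasing} altogether. The paper drops the margin weighting and uses $g=(\sign(w\cdot x)-y)x$, so $\norm{g}^2$ is proportional to $\Ind\{h_w(x)\neq y\}$, with step size of order $\gamma$. The argument then runs as follows.
\begin{enumerate}
\item Discarding the nonnegative $w^{(i)}$-term in the potential expansion leaves a per-round drop $I_i=\Ind\{h_{w^{(i)}}(x)\neq y\}\,(2\mu\,y\,x\cdot w^*-\mu^2)$, up to constants, with $\sum_{i\le T}I_i=O(1)$ deterministically.
\item Since $y\,w^*\cdot x\ge\gamma$ outside an event of $D^{(t+i)}$-mass at most $\Delta T$, one gets $\E I_i\ge\mu\gamma\,\err^{(t+i)}(w^{(i)})-O(\mu\Delta T)$.
\item Since $\E I_i^2=O(\mu^2\err^{(t+i)}(w^{(i)}))$, a Bernstein/Freedman bound self-bounds just as in your Step~3. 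This yields $\frac1T\sum_i\err^{(t+i)}(w^{(i)})=\tilde O\big(\frac{1}{\gamma^2T}+\frac{\Delta T}{\gamma}\big)$.
\end{enumerate}

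Your weighted version can be repaired the same way with $\mu=\gamma^2/2$. Telescope the potential over rounds with $y\,w^*\cdot x\ge\gamma$, where each mistake costs at least $\gamma^2$ of potential. Charge $O(\gamma)$ of potential to every other round. This bad set must include points where $w^*$ agrees with the label but $\abs{w^*\cdot x}<\gamma$, not only the disagreement region of your Step~2, because $w^*$ is guaranteed a margin only under $D^{(t+T)}$. Your Step~4 via a multiplicative Chernoff bound is fine.
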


Compared to our Massart \Cref{alg dirft massart}, 
there are differences in the design and analysis of the learner in the realizable setting. Without label noise, we do not weight the gradient by $\max\{\abs{w^{(i)}\cdot x},\gamma\}^{-1}$ and use $g(w,x,y) \gets (\sign(w\cdot x)-y) x$ instead. Such a choice has the property that $\E_{(x,y)\sim D^{(i)}}\norm{g(w;x,y)}^2 = \err^{(i)}(h_w)$. This allows for
a tighter regret bound as well as better control of drift error, and results in a better error guarantee of order $\Delta^{1/2}$ instead of $\Delta^{1/3}$. See \Cref{app realizable} for the formal details.


\section{Low-Degree Polynomial Hardness}\label{sec ldp hardness}
In this section, we establish our information-computation tradeoff 
for learning drifting halfspaces with RCN, thereby establishing 
\Cref{thm:lb-inf}. 


To establish the hardness result, we first introduce the following testing problem, which we will show later can be solved by a learning algorithm with a small excess error.

\begin{definition}[Trajectory Testing Problem]\label{def testing}
A trajectory testing problem $\mathcal{B}(D^{0},\mathcal{D})$ is defined by a distribution $D^{(0)}$ and a family of distributions $\mathcal{D}$ over $Z=(\R^d \times \{\pm 1\})^T$. An algorithm is given a sample $z = (z_1,\dots,z_T)\sim D, z_i = (x_i,y_i), i \in [T],$ and is asked to distinguish whether $D = D^{(0)}$ or if $D$ is a distribution drawn uniformly from $\mathcal{D}$. In particular, this is the following hypothesis testing problem: 
\begin{enumerate}[leftmargin=*]
    \item {\bf Null hypothesis} $H_0$: All labeled examples $z_i,$ $i \in [T],$ are drawn from the same distribution: $D^{(0)}_i = D_j^{(0)},$ $\forall i, j \in [T].$ 
    Furthermore, for each $i \in [T]$, $y_i$ is independent of $x_i$ and such that $\Pr_{D^{(0)}_i}(y_i = 1) = \eta \in (0,1/2)$.
    \item {\bf Alternative hypothesis} $H_1$: Labeled examples satisfy $z_i \sim D_i$, where $D \in \mathcal{D}$ and each distribution $D_i$  
    corresponds to an instance of learning $\gamma$-margin halfspaces under RCN with noise rate $\eta_i \le \eta \in (0,1)$. Furthermore, for every $i \in [T]$, $\dtv(D_i,D_{i+1})\le \Delta.$
\end{enumerate}
\end{definition}
To establish our computational hardness result, we need to formally define 
the computational model. We restrict the learning algorithms to the family of low-degree polynomial estimators, one of the most commonly used algorithmic families in statistics and machine learning literature \cite{wein2025computational}. 

More formally,
Let $z = (z_1,\dots,z_m) \in \R^{d \times m}$ be an input sample. A degree $k$ polynomial algorithm $\A$ specifies a function $f(z)=(f_1(z),\dots,f_d(z)): \R^{d \times m} \to \R^{d}$ such that for $i \in [d]$, $f_i(z)$ is a polynomial over $z$ with degree at most $k$. We say $f$ has a samplewise degree $(\ell,k)$ if $f$ can be written as a linear combination of monomials such that each monomial has a degree $\ell$ at each $z_i$ and has non-zero degree for at most $k$ examples.

\begin{definition}[Low-Degree Polynomial Test under Concept Drift]\label{def ldt}
Under \Cref{def testing}, let $p: \R^{(d+1)\times T} \to \R$ be a polynomial over $Z$. A low degree polynomial test using $p$ with threshold $t$ is an algorithm such that given a sample $z$, it returns $H_0$ if $p(z)>t$. Furthermore, we say the polynomial $p$ is an $\epsilon$-distinguisher for \Cref{def testing} if 
\begin{align*}
    \big|{\E_{z\sim D^{(0)}}p(z)- \E_{D\sim \mathcal{D}}\E_{z \sim D}p(z) }\big| \ge \eps \sqrt{\var_{z\sim D^{(0)}}p(z)}.
\end{align*}
\end{definition}
Intuitively, at a given time step $T$, an algorithm is given the history of observed examples and is allowed to compute any polynomial function over the trajectory of the examples. If a polynomial $p$ is not an $\eps$-distinguisher for the testing problem, then for a trajectory $z$ sampled under $H_0$ or $H_1$, the result of $p(z)$ will have difference about $\eps$ with  constant probability and thus is not reliable to distinguish the two cases. In particular, evaluating a \emph{general} degree $k$ polynomial over $\R^{(d+1)\times T}$ requires time $(dT)^k$, which implies that if we can rule out super-constant degree polynomials as $1$-distinguishers, we can provide evidence of the computational hardness of the problem.

\new{

}

\paragraph{Hardness Construction}
We remark that without any restrictions on the parameters of the problem, 
in the trajectory testing problem defined in \Cref{def testing}, 
the TV distance between $H_0$ and $H_1$ could be arbitrarily small, 
in which case the problem would be hard to solve even information-theoretically.
Here we construct $D^{(0)}$ and $\mathcal{D}$ such that any algorithm that has a good error guarantee under the drift model can be used to solve the testing problem. Specifically, we  consider examples supported over $\{\pm 1\}^d$, for $\gamma = \Theta( 1/d)$ and force the marginal distribution to be the uniform distribution over the hypercube.
\begin{definition}[Hard Instance]\label{def hard instance}
Let $T \in \N^+$ be any sufficiently large number, let $\Delta \in (0,1)$, $\gamma< \log^{-1}(1/\Delta)$, and let $d=\Theta(1/\gamma)$ be an odd number. We construct an instance of trajectory testing as follows.
\begin{enumerate}[leftmargin=*]
    \item {\bf Null hypothesis} $H_0:$ For each $i \in [T]$, $D^{(0)}_x$ is the uniform distribution over $\{\pm 1\}^d$ and $\Pr_{D^{(0)}_i}[y(x) = 1] = 1-\eta$.
    \item {\bf Alternative hypothesis} $H_1:$ Let $S$ be a set of $2^{d^c}$ vectors in $\{\pm 1\}^d$ such that for every $v,u \in S$ and $v \neq u$, $\abs{v\cdot u} \le d^{1/2+c}$, $c \in (0,1/2)$.
    For every $v \in S$, we define the distribution $D^{(v)}$ as follows. For $i\le T-m$, $(D^{(v)}_i)_x$ is the uniform distribution over $\{\pm 1\}^d$ and $\Pr_{D^{(v)}_i}[y(x) = 1] = 1-\eta$. For $i = T-m+j$, $j \in [m]$, we define the ground truth halfspace to be $h^{(v)}_i = \sign(v \cdot x + t_i),$ where $\Pr_{D^{(v)}_i}[h^{(v)}_i(x) = -1] = \Delta j$ and $\Pr_{D^{(v)}_i}[h^{(v)}_i(x) \neq y(x) \mid x] = (\eta-\Delta j)/(1-2\Delta j)<\eta$. Here, $m = \gamma^{-1/6}\Delta^{-2/3}$.
\end{enumerate}
\end{definition}
We remark that in the construction of the hard instance, we consider halfspaces with thresholds instead of homogeneous halfspaces. This is for the purpose of simplifying the notation. Since the threshold $\abs{t}=O(d)$, we are able to add $O(d)$ artificial dimensions to represent $(w,t)\in \R^{d+1}$ as a single vector $\bar{w} \in \R^{2d}$ such that $\norm{\bar{w}}_\infty<1$. This will make the halfspaces homogeneous and keep the margin $\gamma = \Theta(1/d)$.

In \Cref{app valid}, we show that 
for a reasonable $\eta$ (e.g., $\eta = 1/3$) the hard instance from \Cref{def hard instance} is a valid instance of the trajectory testing problem. 
In \Cref{app test-learn}, we show that any learner 
that achieves error $\opt_i+ \Delta^{1/3}\gamma^{-1/6}$ 
can be used efficiently to solve the testing problem.  
Thus, a low-degree polynomial hardness of the trajectory testing problem of 
\Cref{def hard instance} can be used to give a low-degree polynomial hardness 
to an algorithm with an excess error $o(\Delta^{1/3})$. 
Thus, our main hardness result is as follows.


\begin{theorem}\label{th ldp}
    Let $c \in (0,1/2)$ be a constant. For $\eta=1/3$ and $\Delta> 2^{-1/\gamma^c}$, there is no polynomial $p(z)$ with degree less than $O(\gamma^{-c/4})$ that is a $1$-distinguisher for the  trajectory testing problem defined in \Cref{def testing}, with instance constructed by \Cref{def hard instance}.
\end{theorem}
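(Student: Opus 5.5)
The plan is the standard low-degree likelihood-ratio argument. Write $L(z) = \E_{v\sim S}\, \frac{dD^{(v)}}{dD^{(0)}}(z)$ for the likelihood ratio of the mixture over the uniform choice $v\in S$. The best degree-$k$ distinguisher satisfies
\[
\sup_{\deg p\le k}\frac{\big|\E_{D^{(0)}}p-\E_{v}\E_{D^{(v)}}p\big|}{\sqrt{\var_{D^{(0)}}p}}=\big\|L^{\le k}-1\big\|_{L^2(D^{(0)})}.
\]
So it suffices to show $\|L^{\le k}-1\|^2 = \E_{u,v\sim S}\langle L_u^{\le k}-1, L_v^{\le k}-1\rangle < 1$ for $k = c'\gamma^{-c/4}$. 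Under $H_0$ the coordinates $z_i=(x_i,y_i)$ are i.i.d. uniform on $\{\pm1\}^d\times\{\pm1\}$, with the label biased by $\eta$. We therefore expand in the product basis $\chi_S(x)\psi(y)$, where $\psi(y)=(y-\E y)/\sigma$ is the normalized label character.

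Only the last $m$ samples differ from the null. For each such $i=T-m+j$, the per-sample ratio is $L_{v,i}(x,y)=1+\psi(y)\,a_{ij}(v\cdot x)$, where $a_{ij}(s)$ is a function of the projection $s=v\cdot x$ alone. The label bias is shifted by order $\Delta j$ exactly on the threshold region $\{v\cdot x<-t_i\}$, and the residual noise correction keeps the marginal uniform. Hence $\|L_{v,i}-1\|^2 = O(\Delta j)$, with the change concentrated on a cap of mass $\Delta j$.

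The cross term then factorizes over samples:
\[
\langle L_u^{\le k}-1,L_v^{\le k}-1\rangle=\sum_{\emptyset\ne J\subseteq [m],|J|\le k}\ \prod_{i\in J}\rho_i(u,v),
\]
where $\rho_i(u,v)=\E_x[a_{ij}(u\cdot x)a_{ij}(v\cdot x)]$ restricted to the allowed per-sample degree. The terms split into two cases.

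For $u=v$, which has probability $2^{-d^c}$ over the pair, we bound $\rho_i\le O(\Delta j)$. The sum is then at most $(\sum_j \Delta j)^k \le (\Delta m^2)^k$. With $m=\gamma^{-1/6}\Delta^{-2/3}$ this is $(\gamma^{-1/3}\Delta^{-1/3})^{k}$. Since $\Delta>2^{-1/\gamma^c}$, its logarithm is at most $k\cdot O(\gamma^{-c}+\log(1/\gamma))$, which is $\ll d^c$ for $k\le\gamma^{-c/4}$. The factor $2^{-d^c}$ therefore kills this case.

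For $u\neq v$, the key estimate is that the threshold function $\Ind\{v\cdot x<-t_i\}$ has small low-degree Fourier correlation with its counterpart for $u$ when $|u\cdot v|\le d^{1/2+c}$. Expanding $a_{ij}$ in Hermite/Krawtchouk-type polynomials of $v\cdot x/\sqrt d$, the degree-$\ell$ part contributes $\hat a_\ell(u)\hat a_\ell(v)\,(u\cdot v/d)^\ell$ up to hypercube corrections. The degree-$0$ part is exactly cancelled by the marginal constraint. The per-sample correlation then becomes $\rho_i\lesssim \Delta j\,\mathrm{polylog}(1/\Delta)\, d^{-(1/2-c)}$, using $|u\cdot v|/d\le d^{-1/2+c}$ together with the Gaussian-tail bound on the low-order coefficients of a cap of mass $\Delta j$. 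Summing over $|J|\le k$ gives roughly $\sum_{r\ge1}\big(\Delta m^2\,\mathrm{polylog}\,d^{-(1/2-c)}\big)^r$. The choice $m^2\Delta=\gamma^{-1/3}\Delta^{-1/3}$ is tuned so that the per-sample drift mass balances the $d$-dependent decay. The sample-wise degree restriction, with the per-sample degree $\ell$ bounded in terms of $k$, is what makes the $(u\cdot v/d)^\ell$ factors useful.

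The main obstacle is this off-diagonal estimate: a sharp bound on the low-degree Fourier correlation of two nearly orthogonal hypercube caps of mass $\Delta j$. Here $\Delta j$ can be as small as $2^{-\gamma^{-c}}$, which is where the hypothesis $\Delta>2^{-1/\gamma^c}$ and the degree cap $\gamma^{-c/4}$ enter. I would first try to reduce to the Gaussian case via a multivariate CLT/invariance argument for the pair $(u\cdot x,v\cdot x)/\sqrt d$, bounding the degree-$\le k$ Hermite mass with the Mehler formula. I would then control the hypercube error terms by hypercontractivity, and finally sum the two cases to get $\|L^{\le k}-1\|^2<1$.
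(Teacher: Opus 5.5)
Your skeleton matches the paper's: you reduce to $\|L^{\le k}-1\|^2$, factor over the last $m$ samples as in \Cref{lm pair correlation}, and split into $u=v$ versus $u\neq v$. Your form $L_{v,i}=1+\psi(y)a_{ij}(v\cdot x)$ gives exactly $I_j(u,v)=(1-2\eta_i)^2\big(\frac{1}{\eta}+\frac{1}{1-\eta}\big)\mathrm{Cov}\big(\Ind\{h^{(u)}_i(x)=-1\},\Ind\{h^{(v)}_i(x)=-1\}\big)$.

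The genuine gap is the off-diagonal estimate. Your bound $\rho_i\lesssim\Delta j\cdot\mathrm{polylog}\cdot d^{-(1/2-c)}$ is linear in the cap mass $\epsilon=\Delta j$. This is what the crude bound $|\sum_{\ell\ge1}\hat a_\ell^2\rho^\ell|\le|\rho|\,\|a\|^2$ with $\|a\|^2=O(\epsilon)$ yields, and it cannot close the argument. With $m=\gamma^{-1/6}\Delta^{-2/3}$,
\[
X:=\sum_{j\le m}\rho_j\approx\Delta m^2\,d^{-(1/2-c)}\,\mathrm{polylog}=d^{-1/6+c}\,\Delta^{-1/3}\,\mathrm{polylog},
\]
which is $\gg 1$ once $\Delta\ll d^{-1/2+3c}$ (up to logs). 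So $\sum_{r\le k}X^r/r!$ is not below $1$ in exactly the small-$\Delta$ regime the theorem targets. Nothing balances here: $\Delta m^2$ grows like $\Delta^{-1/3}$ while $d^{-(1/2-c)}$ is fixed. Conceptually, an off-diagonal bound of the form (diagonal term $\Theta(\Delta j)$) times (fixed power of $d$) makes the whole sum scale like $\Delta m^2$. Such a sum can certify hardness only up to $m\approx\Delta^{-1/2}$, i.e.\ the information-theoretic $\Delta^{1/2}$ rate, never the $\Delta^{1/3}$ one.

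The missing ingredient is that two nearly orthogonal caps of mass $\epsilon$ have covariance \emph{quadratic} in $\epsilon$. The paper applies \Cref{fact correlation}, $\Pr_x[h^{(u)}_i(x)=-1,\,h^{(v)}_i(x)=-1]\le\epsilon^2+C\log^2(d/\epsilon)\,\epsilon^2\,|u\cdot v|/d$, to the full per-sample correlation. This gives \Cref{lm pairwise}, $I_j(u,v)\lesssim d^{-(1/2-c)}(\Delta j)^2$ up to logs. Since it works in $\mathcal{C}_{\infty,k}$, no sample-wise degree truncation, invariance principle, or hypercontractivity is needed.

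In your Hermite picture, a cap at threshold $t\approx\sqrt{2\log(1/\epsilon)}$ has degree-$1$ coefficient $\varphi(t)\approx\epsilon t$. So the degree-$1$ mass is only $\approx\epsilon^2t^2$, not of order $\epsilon$, and in fact $\sum_{\ell\ge1}\hat a_\ell^2\rho^\ell\approx\epsilon^2t^2\rho$ when $t^2|\rho|\lesssim 1$. With this, $\sum_j I_j(u,v)\lesssim\mathrm{polylog}\cdot\frac{|u\cdot v|}{d}\,\Delta^2m^3$. The factor $\Delta^2m^3$ is $\Delta$-free exactly when $m\propto\Delta^{-2/3}$, and that is the true balance behind the $\Delta^{1/3}$ threshold. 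The $\gamma$-power in $m$ must then absorb $|u\cdot v|/d\le d^{-1/2+c}$ and the logs, which forces it slightly below $\gamma^{-1/6}$; the paper's last inequality glosses over this $d^{c}$ factor.

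Separately, your diagonal arithmetic is false. $k\log(1/\Delta)$ can reach $\gamma^{-c/4}\cdot\gamma^{-c}=\gamma^{-5c/4}\gg d^c$, so $2^{-d^c}$ does not beat $(\Delta m^2)^k$ near $\Delta=2^{-1/\gamma^c}$. The paper's bound $2^{-\Omega(d^c)}(\Delta m)^{|A|}$ has the same issue, and closing it needs $k\log(1/\Delta)\ll d^c$.
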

Intuitively, the hard instance constructed in \Cref{def hard instance} requires us to distinguish two cases with RCN under a uniform distribution over the hypercube in $\R^d$. In the first case, the ground truth label is always $+1$. In the second case, the ground truth label remains $+1$ until time step $T-m$. But from time step $T-m+1$, the distribution starts drifting along a random direction $v$. That is to say, we fix the direction of the normal vector of the target halfspace to be $v$, but gradually increase the fraction of examples with ground truth label $-1$ from $0$ to $\Theta(\Delta m)$. Such a construction implies that no matter how large $T$ is, the only examples that play a role are the last $m$ ones. However, due to the presence of RCN, the variance of the distributions of the labels is too large to be distinguished by a low-degree polynomial over the last $m$ samples. 

We make use of linear algebraic tools to formalize this intuition.
Let $D^{(0)}, D$ be distributions over $\R^d$. We denote by $\bar{D}:= D(x)/D^{(0)}(x): \R^d \to \R$
the relative density function between $D$ and $D^{(0)}$. Let $f,g: \R^d \to \R$ be two functions. We define the inner product between $f,g$ with respect to $D^{(0)}$ as $\langle f,g \rangle_{D^{(0)}} = \E_{x \sim D^{(0)}} f(x)g(x)$ and define $\norm{f}^2_{D^{(0)}} = \langle f,f \rangle_{D^{(0)}}$. For a function $f: \R^{d\times m} \to \R$, we denote by $f^{\le k}$ the projection of $f$ onto the subspace of polynomials with degree at most $k$ with respect to $D^{(0)}$ and denote by $f^{\le (\ell,k)}$ the projection of $f$ onto the subspace of polynomials with sample degree at most $(\ell,k)$ with respect to the inner product $\langle\cdot,\cdot\rangle_{D^{(0)}}$.

\begin{fact}[One-sample version of Fact 2.1 and Fact 2.2 in \cite{brennan2021statistical}]\label{fact likelihood}
    Let $\mathcal{C}_{\ell,k}$ be the linear subspace of polynomials over $z=(z_1,\dots,z_T)$ with degree at most $\ell$ in each $z_i$ and with each monomial having a nonzero degree in at most $k$ of the $z_i$s. Then 
    \begin{align*}
     &\max_{ p \in \mathcal{C}_{\ell,k} \E_{D^{(0)}}p^2(z) \le 1} \left(  \E_{z\sim D^{(0)}}p(z)- \E_{D\sim \mathcal{D}}\E_{z \sim D}p(z) \right)\\
     &\;= \norm{\E_{D\sim \mathcal{D}}(\bar{D}^{\le \ell,k})-1}_{D^{(0)}}.
    \end{align*}
\end{fact}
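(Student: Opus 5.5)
The natural route is Hilbert-space duality in $L^2(D^{(0)})$. Write $\mathcal{D}$ for the mixture, $\bar{D}_{\mathcal{D}} := \E_{D\sim\mathcal{D}} \bar{D}$ for the averaged likelihood ratio, and $\mathcal{C}_{\ell,k}$ for the closed linear subspace of polynomials with samplewise degree at most $(\ell,k)$. Two basic identities drive everything. For any $p$, the null mean is
\[
\E_{z\sim D^{(0)}}p(z)=\langle p,1\rangle_{D^{(0)}},
\]
while the mixture mean is
\[
\E_{D\sim\mathcal{D}}\E_{z\sim D}p(z)=\E_{D\sim\mathcal{D}}\langle p,\bar{D}\rangle_{D^{(0)}}=\langle p,\bar{D}_{\mathcal{D}}\rangle_{D^{(0)}}.
\]
Exchanging the expectation over $D$ with the inner product is justified by linearity, assuming each $\bar D$ lies in $L^2(D^{(0)})$; this is immediate here because the sample space is finite. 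Hence the quantity being maximized equals $\langle p,\,1-\bar{D}_{\mathcal{D}}\rangle_{D^{(0)}}$.

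Next I would use that the constant function $1$ lies in $\mathcal{C}_{\ell,k}$, being the degree-zero monomial. Let $\Pi$ denote orthogonal projection onto $\mathcal{C}_{\ell,k}$. Because $p\in\mathcal{C}_{\ell,k}$, we have $\langle p,f\rangle=\langle p,\Pi f\rangle$ for every $f$. Applying this with $f=1-\bar{D}_{\mathcal{D}}$, and using linearity of $\Pi$ together with $\Pi 1 = 1$, gives
\[
\langle p,\,1-\bar{D}_{\mathcal{D}}\rangle_{D^{(0)}}=\langle p,\,1-\bar{D}_{\mathcal{D}}^{\le \ell,k}\rangle_{D^{(0)}}.
\]
Linearity also shows that projecting the mixture equals mixing the projections, $\bar{D}_{\mathcal{D}}^{\le \ell,k}=\E_{D\sim\mathcal{D}}\bar D^{\le \ell,k}$, which is the form appearing in the statement.

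Set $g := 1-\E_{D}\bar{D}^{\le \ell,k}$, which lies in $\mathcal{C}_{\ell,k}$. The maximum of $\langle p,g\rangle$ over $p\in\mathcal{C}_{\ell,k}$ with $\|p\|_{D^{(0)}}\le1$ is bounded above by $\|g\|$ via Cauchy–Schwarz. It is attained at $p=g/\|g\|$, which is admissible precisely because $g\in\mathcal{C}_{\ell,k}$. If $g=0$, both sides are $0$. The norm is unaffected by the sign flip, so this gives the claimed equality.

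The only point needing care is that $\mathcal{C}_{\ell,k}$ is a genuine closed subspace containing constants, so that the projection is well defined and commutes with the mixture expectation. Here the sample space $(\{\pm1\}^d\times\{\pm1\})^T$ is finite, which makes this automatic. The result is a one-sample analogue of Facts 2.1 and 2.2 of the cited work. It is also worth noting that the constraint $\E_{D^{(0)}} p^2\le 1$ is a normalization of the second moment, not of the variance; the maximizer can be centered without loss, since subtracting a constant does not change the difference of means.
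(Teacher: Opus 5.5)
Your proof is correct. It is the standard Hilbert-space duality behind Facts 2.1--2.2 of the cited work: write the advantage as $\langle p, 1-\E_{D\sim\mathcal{D}}\bar D\rangle_{D^{(0)}}$, replace the second factor by its projection onto $\mathcal{C}_{\ell,k}$ using $1\in\mathcal{C}_{\ell,k}$ and linearity, then apply Cauchy--Schwarz with equality at the normalized projection. The paper gives no proof of its own and simply imports the fact; your remark that the hard instance has a finite sample space correctly handles the only technical point (closedness of $\mathcal{C}_{\ell,k}$ and $\bar D\in L^2(D^{(0)})$), which in the general $\R^d$ setting would need to be assumed.
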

Notice that if $\norm{\E_{D\sim \mathcal{D}}(\bar{D}^{\le \ell,k})-1}_{D^{(0)}} \le \alpha$, then there is no polynomial $p(z)$ in $\mathcal{C}_{\ell,k}$ that can distinguish $D^{(0)}$ from $\mathcal{D}$ with an advantage $\alpha$. Notice that $\mathcal{C}_{\infty,k}$ contains the family of polynomials of degree $k$. Thus, to prove \Cref{th ldp}, our goal is to show
\begin{align*}
    & \norm{\E_{D\sim \mathcal{D}}(\bar{D}^{\le\infty, k})-1}_{D^{(0)}}^2 \\
    & = \E_{D ^{(u)},D ^{(v)}\sim \mathcal{D}} [\langle (\bar{D^{(u)}}^{\le \infty,k}), (\bar{D^{(v)}}^{\le\infty, k})\rangle_{D^{(0)}}] -1 \le 1.
\end{align*}
To do this, we first show that for every $u,v \in \{\pm 1\}^d$, $\langle (\bar{D^{(u)}}^{\le \infty,k}), (\bar{D^{(v)}}^{\le\infty, k})\rangle_{D^{(0)}} -1$ only depends on the last $m$ time steps in the trajectory.
\begin{lemma}\label{lm pair correlation}
Under the construction of \Cref{def hard instance}, we have for every $\ell,k$,
   \begin{align*}
&\langle (\bar{D^{(u)}}^{\le \ell,k}), (\bar{D^{(v)}}^{\le \ell,k})\rangle_{D^{(0)}} -1 \\
&\;=  \sum_{A \subseteq [m], A\neq \emptyset, \card{A} \le k} \prod_{j\in [A]} I_j(u,v),
   \end{align*}
    $I_j(u,v)= \langle   (\bar{D^{(u)}_{T-m+j}})^{\le \ell}(z_i)) ,   (\bar{D^{(v)}_{T-m+j}})^{\le \ell}(z_i)  \rangle_{D_i^{(0)}}-1$.
\end{lemma}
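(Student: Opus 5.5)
The plan is to exploit the product structure of both the null distribution and each alternative $D^{(v)}$ across time steps, and the orthogonal-polynomial decomposition of the relative density. Under $H_0$, $D^{(0)} = \prod_{i=1}^T D^{(0)}_i$, and under $H_1$ with parameter $v$, $D^{(v)} = \prod_{i=1}^T D^{(v)}_i$ with independent samples. Hence the relative density factorizes as
\begin{align*}
\bar{D^{(v)}}(z) = \prod_{i=1}^T \bar{D^{(v)}_i}(z_i).
\end{align*}
For $i \le T-m$ we have $D^{(v)}_i = D^{(0)}_i$, so $\bar{D^{(v)}_i} \equiv 1$ and only the last $m$ coordinates carry information.

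Next, fix for each time step an orthonormal basis of $L^2(D^{(0)}_i)$ whose first element is the constant function $1$, graded by degree so that the span of basis elements of degree at most $\ell$ is the space of degree-$\le \ell$ polynomials in $z_i$. Tensor products of these bases form an orthonormal basis of $L^2(D^{(0)})$. The subspace $\mathcal{C}_{\ell,k}$ is exactly the span of tensor products $\prod_i \psi_{i,\alpha_i}(z_i)$ in which every factor has degree at most $\ell$ and at most $k$ factors are nonconstant. Write $\bar{D^{(v)}_i} = 1 + r^{(v)}_i$ with $r^{(v)}_i \perp 1$; this holds because $\E_{D^{(0)}_i}\bar{D^{(v)}_i} = 1$. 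Expanding the product gives
\begin{align*}
\bar{D^{(v)}} = \sum_{A\subseteq[T]} \prod_{i\in A} r^{(v)}_i(z_i).
\end{align*}
The projection onto $\mathcal{C}_{\ell,k}$ then acts termwise. Each term with $|A|>k$ is killed, and each term with $|A|\le k$ becomes $\prod_{i\in A}(r^{(v)}_i)^{\le \ell}$, since the degree truncation in each coordinate commutes with the tensor structure. So
\begin{align*}
(\bar{D^{(v)}})^{\le \ell,k} = \sum_{A\subseteq[T],\,|A|\le k}\prod_{i\in A}(r^{(v)}_i)^{\le \ell}(z_i).
\end{align*}
Here $r^{(v)}_i = 0$ for $i\le T-m$, so only $A\subseteq\{T-m+1,\dots,T\}$ survive.

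Finally, take the inner product of two such expansions for $u$ and $v$. By independence across coordinates and $\E_{D^{(0)}_i} r_i = 0$, the cross terms with $A\neq B$ vanish: some coordinate then appears in exactly one product and integrates to zero. The diagonal terms give $\prod_{i\in A}\langle (r^{(u)}_i)^{\le\ell},(r^{(v)}_i)^{\le\ell}\rangle_{D^{(0)}_i}$. Each factor equals $\langle (\bar{D^{(u)}_i})^{\le\ell},(\bar{D^{(v)}_i})^{\le\ell}\rangle_{D^{(0)}_i} - 1 = I_j(u,v)$, using $(\bar D_i)^{\le \ell} = 1 + r_i^{\le\ell}$ for $\ell\ge 0$. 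The $A=\emptyset$ term contributes $1$, and subtracting it yields the stated sum over nonempty $A\subseteq[m]$ with $|A|\le k$ after reindexing $i = T-m+j$.

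The only delicate step is justifying that projection onto $\mathcal{C}_{\ell,k}$ acts termwise on the expansion, i.e.\ that $\mathcal{C}_{\ell,k}$ is spanned by an orthonormal subset of the tensor basis. This relies on the null being a product measure and on each coordinate's degree filtration being compatible with a graded orthonormal basis (Gram–Schmidt by degree). Once that is set up, the rest is bookkeeping.
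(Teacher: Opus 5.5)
Your proposal is correct and follows essentially the same route as the paper. You factor the relative density over the last $m$ time steps and expand the product after splitting off the constant part. You then use that projection onto $\mathcal{C}_{\ell,k}$ kills terms with more than $k$ nonconstant factors and truncates each surviving factor to degree $\le \ell$. Finally, independence under $D^{(0)}$ kills the cross terms and factors the diagonal ones. Your graded tensor-basis argument simply makes explicit the termwise-projection step that the paper asserts without detail. One small point: the cross terms vanish because $\E_{D^{(0)}_i}(r_i)^{\le \ell}=0$, not just $\E_{D^{(0)}_i} r_i = 0$; this holds because constants lie in the degree-$\le \ell$ space.
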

We defer the proof of \Cref{lm pair correlation} to \Cref{app pair correlation}. With \Cref{lm pair correlation}, it suffices to bound $I_j(u,v)=\langle   (\bar{D^{(u)}_{T-m+j}})^{\le \ell}(z_i)) ,   (\bar{D^{(v)}_{T-m+j}})^{\le \ell}(z_i)  \rangle_{D_i^{(0)}}-1$.

\begin{lemma}\label{lm pairwise}
Let $c \in (0,1/2)$ be a constant. For $i = T-m+j, j \in [m]$ and $v,u \in \{\pm 1\}^d$ such that $\abs{v\cdot u}\le d^{1-c}$,
\begin{align*}
    &\langle   \bar{D^{(u)}_{T-m+j}}(z_i) ,   \bar{D^{(v)}_{T-m+j}}(z_i) \rangle_{D_i^{(0)}}-1 \le \tilde{O}(d^{-(1/2-c)}(\Delta j)^2) \\
    & \langle   \bar{D^{(u)}_{T-m+j}}(z_i) ,   \bar{D^{(u)}_{T-m+j}}(z_i) \rangle_{D_i^{(0)}}-1 \le O(\Delta j).
\end{align*}

\end{lemma}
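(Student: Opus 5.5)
The plan is to compute both quantities in closed form from the structure of the hard instance in \Cref{def hard instance}. The label marginal is the same under $H_0$ and $H_1$, so every cross term reduces to a covariance between two tail events on the hypercube.

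\textbf{Step 1: factorize the relative density.} Fix $i=T-m+j$ and write $p:=\Delta j$ and $\eta':=(\eta-p)/(1-2p)$. Let $h_v(x)=\sign(v\cdot x+t_i)$ with $\Pr_x[h_v(x)=-1]=p$. Under $D^{(v)}_i$ we have $\Pr[y=1\mid x]=1-\eta'$ on $\{h_v=1\}$ and $\eta'$ on $\{h_v=-1\}$. A direct computation gives
$$\Pr[y=1]=(1-p)(1-\eta')+p\eta'=1-\eta,$$
which matches $H_0$. Hence
$$\bar{D^{(v)}_i}(x,y)=\frac{\Pr[y\mid x]}{\Pr_{D^{(0)}}[y]}=1+s(y)f_v(x),$$
with $s(1)=1/(1-\eta)$, $s(-1)=-1/\eta$, and
$$f_v(x)=-(1-2\eta')\big(\Ind\{h_v(x)=-1\}-p\big).$$
To check this, one uses $\eta-\eta'=p(1-2\eta)/(1-2p)=(1-2\eta')p$. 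Because $x$ and $y$ are independent under $D^{(0)}$, and $\E_x f_v=0$, we get
$$\langle \bar{D^{(u)}_i},\bar{D^{(v)}_i}\rangle_{D^{(0)}_i}-1=\E[s(y)^2]\,(1-2\eta')^2\,\mathrm{Cov}_x\big(\Ind\{h_u=-1\},\Ind\{h_v=-1\}\big).$$
Here $\E[s(y)^2]=1/\eta+1/(1-\eta)=O(1)$ for $\eta=1/3$, and $(1-2\eta')^2\le 1$.

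\textbf{Step 2: the diagonal term.} For $u=v$ the covariance is $p(1-p)\le p=\Delta j$. This gives the second inequality immediately.

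\textbf{Step 3: the off-diagonal term, which is the main obstacle.} I need
$$\Pr_x[u\cdot x<-t_i,\;v\cdot x<-t_i]-p^2=\tilde O\big(d^{-(1/2-c)}p^2\big)$$
when $\abs{u\cdot v}\le d^{1-c}$. The events are tail events of threshold functions of $v\cdot x$ and $u\cdot x$, with threshold of order $\sqrt{d\log(1/p)}$. I would expand $g_v(x):=\Ind\{v\cdot x<-t_i\}-p$ in the Fourier basis. Since $g_v$ is symmetric in the coordinates of $x\odot v$, we have $\hat g_v(S)=c_{|S|}\chi_S(v)$. This gives
$$\mathrm{Cov}=\sum_{k\ge1}c_k^2\,K_k(u\cdot v),$$
where $K_k$ is the Krawtchouk polynomial, i.e.\ $\sum_{|S|=k}\chi_S(u)\chi_S(v)$.

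The argument would then proceed in three parts:
\begin{enumerate}
\item Bound the level weights $W_k=c_k^2\binom dk$ by $p^2\,O(\log(1/p))^k/k!$. This is the hypercube analogue of the Hermite tail bound for a Gaussian threshold at level $\sqrt{2\log(1/p)}$, obtained via level-$k$ inequalities for small-mean functions.
\item Bound the normalized Krawtchouk values $\abs{K_k(u\cdot v)}/\binom dk$ using $\rho=u\cdot v/d$. Here I expect $\abs{\rho}^k$ plus an $O(k/d)$ discreteness correction.
\item Sum the series. This uses the margin condition $\gamma<\log^{-1}(1/\Delta)$, i.e.\ $d\gtrsim\log(1/p)$, to keep $\log(1/p)$ small relative to the powers of $d$.
\end{enumerate}
The delicate point is that the exponent $1/2-c$ must come out of this summation. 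I would first try comparing with the correlated-Gaussian tail via a bivariate Berry--Esseen/CLT step, which has error $O(d^{-1/2})$ relative to $p$ once $p\ge 2^{-d^c}$. I would combine this with the Gaussian bound $\Pr[A\cap B]\le p^2\exp(O(\abs\rho\log(1/p)))$. I would then check whether the level-$1$ term, which is of order $p^2\log(1/p)\,\rho$, together with the higher levels, is dominated by $\tilde O(d^{-(1/2-c)}p^2)$ in the stated parameter range. This check is where I expect the argument to need the most care.
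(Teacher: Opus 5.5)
Your Steps 1 and 2 match the paper's computation. The paper also uses $\Pr_{D^{(v)}_i}[y=-1]=\eta$, conditions on $y$, and reduces the inner product to $(1/\eta+1/(1-\eta))(1-2\eta_i)^2\,\mathrm{Cov}_x(\Ind\{h^{(u)}_i=-1\},\Ind\{h^{(v)}_i=-1\})$. The diagonal bound $p(1-p)\le \Delta j$, which the paper leaves implicit, follows at once.

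The gap is Step 3. It carries the whole content of the first inequality, and you leave it as a plan ending in a check you have not done. The paper does not derive this estimate; it imports \Cref{fact correlation} (Lemma 3.5 of \cite{diakonikolas2023information}): if $\Pr[f_u=1]=\Pr[f_v=1]=\epsilon$, then $\E[\Ind\{f_u=1\}\Ind\{f_v=1\}]\le C\log^2(d/\epsilon)\,\epsilon^2\abs{u\cdot v}/d+\epsilon^2$. That is the missing ingredient: a covariance bound linear in $\rho=u\cdot v/d$ with only polylogarithmic loss.

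Note what this bound delivers. Under $\abs{u\cdot v}\le d^{1-c}$ it gives only $\tilde O(d^{-c}p^2)$. The exponent $1/2-c$ comes from $\abs{u\cdot v}\le d^{1/2+c}$, which is the pairwise bound actually imposed on $S$ in \Cref{def hard instance}; the stated hypothesis does not match the stated conclusion. Your own level-one estimate shows this too. That term is of order $p^2\rho\log(1/p)$, so for $u\cdot v$ near $d^{1-c}$ it exceeds $d^{-(1/2-c)}p^2$ by the factor $d^{1/2-2c}\log(1/p)$, which is polynomially large when $c<1/4$. The check you defer therefore cannot succeed as posed. The right target is $\tilde O(p^2\abs{u\cdot v}/d)$, into which you then substitute $d^{1/2+c}$.

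The tools you list would not close the gap as described.
\begin{itemize}
\item \textbf{Berry--Esseen.} Bivariate Berry--Esseen gives an additive error $O(d^{-1/2})$, not a relative one. That error swamps $p^2$ once $p\ll d^{-1/4}$, and $p=\Delta j$ can be as small as $\Delta$, which the parameter range allows to be exponentially small in $d^{c}$. A relative-error (moderate-deviation) comparison whose corrections cancel in the covariance to accuracy $d^{c-1/2}$ is essentially the lemma itself.
\item \textbf{Fourier route, level one.} This route is sounder. The level-one inequality does rigorously bound the level-one term by $c_1^2\abs{u\cdot v}=O(p^2\log(1/p))\abs{u\cdot v}/d$.
\item \textbf{Fourier route, levels $k\ge 2$.} Here you would need level-$k$ inequalities combined with pointwise Krawtchouk bounds valid for every $k$, including $k>2\ln(1/p)$ and $k$ comparable to $d$.
\item \textbf{Summing the series.} The sum also needs $\abs{\rho}\log(1/p)$ to stay bounded: for $\rho>0$, even the Gaussian analogue of the covariance is of order $p^2(e^{\Theta(\rho\log(1/p))}-1)$. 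The condition $d\gtrsim\log(1/p)$ you intend to use does not provide this.
\end{itemize}
Until one of these arguments is completed, or the correlation lemma is cited, the first inequality remains unproven.
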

We defer the proof of \Cref{lm pairwise} to \Cref{app pairwise}. The proof of \Cref{th ldp} now follows from \Cref{lm pair correlation} and \Cref{lm pairwise}. In particular, for any $A \subseteq [m]$ with $\card{A} \le \gamma^{-c/4}$, the choice of $m=\gamma^{-1/6}\Delta^{-2/3}$ enables us to show 
\begin{align*}
    \E_{D ^{(u)},D ^{(v)}\sim \mathcal{D}} \prod_{j\in [A]} I_j(u,v)
    \le (1/(100m))^{\card{A}},
\end{align*}
which implies for $k=\gamma^{-c/4}$, $c \in (0,1/2)$,
\begin{align*}
& \norm{\E_{D\sim \mathcal{D}}(\bar{D}^{\le\infty, k})-1}_{D^{(0)}}^2 \\
   = &\E_{D ^{(u)},D ^{(v)}\sim \mathcal{D}} \langle (\bar{D^{(u)}}^{\le \infty,k}), (\bar{D^{(v)}}^{\le \infty,k})\rangle_{D^{(0)}} -1 \\
   = &  \sum_{A \subseteq [m], A\neq \emptyset, \card{A} \le k} \E_{D ^{(u)},D ^{(v)}\sim \mathcal{D}} \prod_{j\in [A]} I_j(u,v)\\
   \le& \sum_{\ell =1}^{k}\binom{m}{\ell}(1/(100m))^\ell \le 1.
\end{align*}
Due to the space limitations, we defer the full proof of \Cref{th ldp} to \Cref{app ldp}.

\section{Conclusion} \label{sec:concl}

This paper provides information-theoretic rates and the first polynomial-time algorithms for 
learning drifting halfspaces with structured classes of noise, such as the semi-random Massart 
noise and its special case of random classification noise, 
including the noiseless (realizable) model. Additionally, we prove lower bounds applying to 
polynomial-time algorithms within the broad class of low-degree polynomial tests, providing 
strong evidence of an information-computation tradeoff for this problem.

A number of interesting questions for future work remain. 
For a concrete example, while polynomial-time algorithms with nontrivially bounded error 
are out of reach for general distributional families of covariates and adversarial labels, 
learning halfspaces under structured covariate distributions 
(e.g., logconcave distributions) 
and arbitrary labels is possible with polynomial-time algorithms for the settings with no 
distribution drift. Are such results possible in the drifting concept model? More broadly, it 
would be interesting to explore the computational aspects of learning with drift for other 
natural hypothesis classes, such as intersections of halfspaces and neural networks.


\bibliography{mydb}
\bibliographystyle{alpha}

\newpage

\appendix



\section*{Appendix}
The structure of this appendix is the following: In \Cref{app inequality}, we record some standard concentration inequalities.
In \Cref{app information}, we provide the omitted proofs from \Cref{sec information}, establishing information-theoretic baselines for learning under Massart noise. In \Cref{app efficient}, we provide omitted details from \Cref{sec efficient}, providing efficient learning algorithms under distribution drift. In \Cref{app hard}, we give the proofs omitted from \Cref{sec ldp hardness}.

\section{Useful Concentration Inequalities}\label{app inequality}

Our technical sections will make use of the following concentration inequalities. 

\begin{fact}[Hoeffding's Inequality]
    Let $X_1,\dots,X_n$ be $n$ independent random variables such that $X_i \in [a_i,b_i]$. Let $S_n = \sum_{i=1}^n X_i$. Then for $t>0$, $\Pr\left[\abs{S_n - \E S_n}>t\right] \le \exp (-2t^2/\sum_{i=1}^n(a_i-b_i)^2)$.
\end{fact}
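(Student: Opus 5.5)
The plan is the standard Chernoff--Hoeffding argument, applied to each tail separately. One caveat comes first, because it affects what can actually be proved. Read literally as a two-sided statement without a leading constant, the bound fails. Take $n=1$ and $X_1$ uniform on $\{-1,+1\}$, so $a_1=-1$, $b_1=1$, and any $t\in(0,1)$. Then $\Pr[\abs{S_1-\E S_1}>t]=1$, while $\exp(-2t^2/4)<1$. I will therefore prove the one-sided bounds
\begin{align*}
\Pr[S_n-\E S_n>t]\le \exp\Big(-2t^2\big/\textstyle\sum_{i=1}^n(b_i-a_i)^2\Big)
\end{align*}
and the symmetric one for $\E S_n-S_n>t$. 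I read the stated fact as shorthand for these. This is also how it is used in the paper: the hypothesis-selection step in the proof of \Cref{th massart drift} only needs the one-sided tail, and any union bound there absorbs the extra factor of $2$ into constants. The honest two-sided statement carries a factor $2$ in front.

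\textbf{Step 1 (Hoeffding's lemma).} For a random variable $Y\in[a,b]$ with $\E Y=0$ and any $\lambda\in\R$, show $\E e^{\lambda Y}\le e^{\lambda^2(b-a)^2/8}$.
\begin{itemize}
\item By convexity of $y\mapsto e^{\lambda y}$ on $[a,b]$, we have $e^{\lambda Y}\le \frac{b-Y}{b-a}e^{\lambda a}+\frac{Y-a}{b-a}e^{\lambda b}$.
\item Taking expectations gives $\E e^{\lambda Y}\le e^{L(h)}$, where $h=\lambda(b-a)$, $p=-a/(b-a)$, and $L(h)=-hp+\log(1-p+pe^h)$.
\item Check $L(0)=L'(0)=0$ and $L''(h)=q(1-q)\le 1/4$, where $q=pe^h/(1-p+pe^h)$.
\item Taylor's theorem then gives $L(h)\le h^2/8$.
\end{itemize}
This calculus check is the only step with any real content. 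I expect it to be the main (mild) obstacle, mostly in getting the reparametrization right.

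\textbf{Step 2 (Chernoff bound).}
\begin{itemize}
\item Apply Step 1 to $Y_i=X_i-\E X_i$, which lies in an interval of length $b_i-a_i$.
\item By independence, $\E e^{\lambda(S_n-\E S_n)}=\prod_i \E e^{\lambda Y_i}\le \exp\big(\lambda^2\sum_i(b_i-a_i)^2/8\big)$.
\item Markov's inequality applied to $e^{\lambda(S_n-\E S_n)}$ gives, for every $\lambda>0$, $\Pr[S_n-\E S_n>t]\le \exp\big(-\lambda t+\lambda^2\sum_i(b_i-a_i)^2/8\big)$.
\item Choosing $\lambda=4t/\sum_i(b_i-a_i)^2$ yields the one-sided bound.
\end{itemize}

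\textbf{Step 3 (lower tail).} Apply the same argument to $-X_i\in[-b_i,-a_i]$. This gives the identical bound for $\Pr[\E S_n-S_n>t]$. A union bound then gives the two-sided version with factor $2$, which is the correct form of the displayed inequality.
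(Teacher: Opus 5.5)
Your proof is correct. The paper records Hoeffding's inequality as a standard fact without proof. Your argument is the standard textbook derivation: Hoeffding's lemma via convexity and the computation $L''(h)=q(1-q)\le 1/4$, then the Chernoff bound with $\lambda=4t/\sum_i(b_i-a_i)^2$. Your counterexample is also valid: the two-sided bound as displayed is false. What actually holds is the one-sided bound for each tail, or the two-sided bound with a leading factor $2$.

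One small correction to your side remark about how the paper uses this fact. The hypothesis-selection step in the proof of \Cref{th massart drift} uses both tails, not just one. It needs the upper tail of $\hat{\err}(\hat{h}_{j^*})$ for the good hypothesis, and the lower tail of $\hat{\err}(\hat{h}_j)$ uniformly over all $j$ for the selected one; it then concludes a two-sided deviation bound. This is harmless, since the extra factor $2$ is absorbed by the union bound over the $T$ candidates.
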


\begin{fact}[Bernstein's Inequality]
    Let $X_1,\dots,X_n$ be independent random variables such that $\E X_i=0$ and $\abs{X_i} \le M$. Let $\sigma^2 := \sum_{i=1}^n \E [X_i^2]$.
Then for any $t>0$, $\Pr\!\left( \sum_{i=1}^n X_i \ge t \right)
\le
\exp\!\left(
-\frac{t^2}{2\sigma^2 + \frac{2}{3} M t}
\right).$
\end{fact}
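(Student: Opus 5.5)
This is the classical Chernoff–Cramér argument. I bound the moment generating function of each summand using $|X_i|\le M$ and $\E X_i=0$. Then I multiply these bounds using independence, apply Markov's inequality to $e^{\lambda\sum_i X_i}$, and choose $\lambda$ explicitly. No earlier result of the paper is needed.

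\textbf{Step 1 (MGF bound for one summand).} Fix $\lambda\in(0,3/M)$ and expand
\begin{align*}
e^{\lambda x}=1+\lambda x+\sum_{k\ge 2}\frac{\lambda^k x^k}{k!}.
\end{align*}
For $|x|\le M$ we have $|x^k|\le x^2M^{k-2}$. Using $k!\ge 2\cdot 3^{k-2}$ for $k\ge 2$, each term with $k\ge2$ is at most $\frac{\lambda^2x^2}{2}(\lambda M/3)^{k-2}$. Summing the geometric series and taking expectations, with $\E X_i=0$, gives
\begin{align*}
\E e^{\lambda X_i}\le 1+\frac{\lambda^2\E X_i^2}{2(1-\lambda M/3)}\le \exp\Bigl(\frac{\lambda^2\E X_i^2}{2(1-\lambda M/3)}\Bigr),
\end{align*}
where the last step uses $1+u\le e^u$.

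\textbf{Step 2 (independence and Markov).} By independence, $\E e^{\lambda\sum_i X_i}=\prod_i\E e^{\lambda X_i}$. Markov's inequality applied to $e^{\lambda\sum_i X_i}$ then gives
\begin{align*}
\Pr\Bigl(\sum_i X_i\ge t\Bigr)\le \exp\Bigl(\frac{\lambda^2\sigma^2}{2(1-\lambda M/3)}-\lambda t\Bigr).
\end{align*}

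\textbf{Step 3 (choice of $\lambda$).} Take $\lambda=t/(\sigma^2+Mt/3)$. This satisfies $\lambda M/3=Mt/(3\sigma^2+Mt)<1$, so Step 1 applies (when $\sigma^2=0$ this requires the degenerate limit, but then the sum is a.s.\ zero and the bound is trivial). With this choice $1-\lambda M/3=\sigma^2/(\sigma^2+Mt/3)$. The quadratic term becomes $t^2/\bigl(2(\sigma^2+Mt/3)\bigr)$ and the linear term becomes $\lambda t=t^2/(\sigma^2+Mt/3)$. The exponent is therefore $-t^2/(2\sigma^2+\tfrac23Mt)$, which is exactly the claimed bound.

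The only delicate point is Step 1: the inequality $k!\ge 2\cdot3^{k-2}$ is what produces the constant $\tfrac23 M$. The rest is bookkeeping.
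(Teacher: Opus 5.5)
Your proof is correct and complete. The pointwise bound $e^{\lambda x}\le 1+\lambda x+\frac{\lambda^2x^2}{2(1-\lambda M/3)}$ for $|x|\le M$ (via $k!\ge 2\cdot 3^{k-2}$), the product over independent summands, Markov's inequality, and the choice $\lambda=t/(\sigma^2+Mt/3)$ give exactly the exponent $-t^2/(2\sigma^2+\tfrac23 Mt)$. Your treatment of the degenerate case $\sigma^2=0$ is also fine.

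The paper records this inequality as a standard fact and gives no proof, so there is no competing argument to compare against. Your derivation is the usual Cram\'er--Chernoff proof that the paper implicitly relies on.

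One useful side remark: your Step 1 also holds conditionally on the past. The same argument therefore gives Freedman's martingale version. That version is what the paper's realizable-case analysis actually needs, because the summands there depend on the adaptively updated $w^{(i)}$ and so are not independent.
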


\begin{fact}[Talagrand's Concentration Inequality]
Let $X_1,\dots,X_n$ be independent random variables taking values in a measurable space $\mathcal{X}$.
Let $\mathcal{F}$ be a countable class of measurable functions
$f:\mathcal{X}\to[-b,b]$ such that $\mathbb{E}[f(X_i)] = 0, \text{for all } f\in\mathcal{F},\ i\in[n].$
Define $Z := \sup_{f\in\mathcal{F}} \sum_{i=1}^n f(X_i),$ and $\sigma^2 := \sup_{f\in\mathcal{F}} \sum_{i=1}^n \mathbb{E}[f(X_i)^2]$, then for all $t>0$, $\Pr\!\left(
Z \ge \mathbb{E}[Z] + t
\right)
\le
\exp\!\left(
-\frac{t^2}{2\sigma^2 + 2 b\, t}
\right).$
\end{fact}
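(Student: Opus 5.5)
The plan is to use the entropy method (Herbst's argument) on the moment generating function of $Z$, aiming for
\[
\log \E e^{\lambda(Z-\E Z)} \le \frac{\lambda^2 \sigma^2}{2(1-b\lambda)}, \qquad \lambda\in(0,1/b).
\]
Optimizing the Chernoff bound $\Pr(Z\ge \E Z+t)\le \exp(-\lambda t+\lambda^2\sigma^2/(2(1-b\lambda)))$ over $\lambda$, for instance with $\lambda = t/(\sigma^2+bt)$, then gives exactly $\exp(-t^2/(2\sigma^2+2bt))$. By monotone convergence it suffices to treat a finite class $\mathcal{F}$, and then let the finite subclasses increase to the countable one. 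So the whole task reduces to this sub-gamma bound on the log-MGF with variance factor $\sigma^2$.

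The steps, in order, are as follows.
\begin{enumerate}
\item Apply the tensorization of entropy:
\[
\mathrm{Ent}(e^{\lambda Z})\le \sum_{i=1}^n \E\,\mathrm{Ent}^{(i)}(e^{\lambda Z}),
\]
where $\mathrm{Ent}^{(i)}$ is the entropy taken over $X_i$ with the other coordinates fixed.
\item For each $i$, introduce the leave-one-out quantity
\[
Z_i := \sup_{f\in\mathcal{F}} \sum_{j\ne i} f(X_j),
\]
and let $\hat f$ be a maximizer for $Z$. Then $Z - Z_i \le \hat f(X_i)\le b$. Also $Z - Z_i \ge f_i(X_i)$, where $f_i$ maximizes $Z_i$, and $\E^{(i)}f_i(X_i)=0$.
\item Use the variational bound
\[
\mathrm{Ent}^{(i)}(e^{\lambda Z})\le \E^{(i)}\!\left[e^{\lambda Z}\psi(-\lambda(Z-Z_i))\right],
\qquad \psi(u)=e^u-u-1,
\]
together with the one-sided control $Z-Z_i\le b$. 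This bounds the $i$-th term by roughly $\lambda^2\,\E^{(i)}[e^{\lambda Z}\hat f(X_i)^2]/(2(1-b\lambda))$, plus a term linear in $\E^{(i)}[e^{\lambda Z}f_i(X_i)]$, which one hopes to cancel using centering.
\item Sum over $i$. This gives the differential inequality
\[
\lambda F'(\lambda)-F(\lambda)\log F(\lambda)\le \frac{\lambda^2}{2(1-b\lambda)}\,\E\Big[e^{\lambda Z}\sum_i \hat f(X_i)^2\Big],
\qquad F(\lambda)=\E e^{\lambda Z}.
\]
\item Integrate the differential inequality by Herbst's argument, which yields the log-MGF bound above.
\end{enumerate}

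The main obstacle is step~4: I must replace the random weak-variance term $V:=\sum_i \hat f(X_i)^2$ by the deterministic $\sigma^2=\sup_f\sum_i \E f(X_i)^2$. Note that $\hat f$ is the random maximizer, not a fixed $f$.

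The standard route controls $\E\sup_f\sum_i f(X_i)^2$ by symmetrization and contraction. That gives a bound of the form $\sigma^2 + c\,b\,\E Z$, and thus only Bousquet's form, which has $\sigma^2$ replaced by $\sigma^2+2b\E Z$. That is weaker than the statement as worded.

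To get the bare $\sigma^2$, I would first try a change-of-measure argument. Under the tilted law $e^{\lambda Z}/F(\lambda)$, bound $\E_\lambda V$ by decoupling, that is, by conditioning on which $f$ attains the supremum and using that each fixed $f$ has $\sum_i\E f(X_i)^2\le\sigma^2$. The extra cost of choosing $f$ adaptively would then need to be absorbed into the $b\lambda$ denominator.

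If this absorption fails, the honest conclusion is that the displayed inequality needs $\sigma^2$ reinterpreted as the Bousquet variance proxy $\sigma^2+2b\E Z$. In the paper's later applications this costs at most a logarithmic factor, because there $\E Z$ is itself of order $\sigma\sqrt{\log}$.
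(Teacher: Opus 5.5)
The paper does not prove this statement; it records it as a standard fact, attributed to Talagrand where it is used. The real problem is that, as worded, with only the weak variance $\sigma^2=\sup_f\sum_i\E f(X_i)^2$, the inequality is false. So the change-of-measure step you hope will absorb the adaptive choice of $\hat f$ cannot succeed.

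Here is a counterexample. Let $\Pr(X_i=j)=p$ for $j\in[M]$, $\Pr(X_i=0)=1-Mp$, and $\mathcal{F}=\{f_S:\ S\subseteq[M],\ |S|\le k\}$ with $f_S(x)=b(\Ind\{x\in S\}-|S|p)$. These functions are centered and bounded by $b$, and $\sigma^2=nb^2kp(1-kp)\le b^2k\cdot np$. Set $N_j=\#\{i:X_i=j\}$ and $B=\sum_{j\ge1}N_j$. Then $Z=b\max_{|S|\le k}\sum_{j\in S}(N_j-np)$. If $np<1$, then on $\{B\le k\}$ the maximizing $S$ is the set of occupied cells, so $b(1-np)B\le Z\le bB$ there, while $0\le Z\le bB$ always. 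Take $\lambda=nMp$, $k=2\lambda$ and $M=\lambda^2$, so that $np=1/\lambda$ and $\sigma^2\le 2b^2$. Let $p\to0$, so that $B\to\mathrm{Poisson}(\lambda)$, and note $\E Z\le b\lambda$. For $t=b\sqrt\lambda$ and $\lambda\ge4$,
\[
\liminf_{p\to0}\Pr(Z\ge\E Z+t)\ \ge\ \Pr\bigl(\lambda+\sqrt\lambda+2\le \mathrm{Poisson}(\lambda)\le2\lambda\bigr)\ \xrightarrow{\ \lambda\to\infty\ }\ \Pr(G\ge1)\approx0.16,\qquad G\sim N(0,1),
\]
whereas the claimed bound is at most $\exp(-\lambda/(4+2\sqrt\lambda))$. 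Already $\lambda=400$ gives about $10^{-4}$ against a true probability above $0.1$. In this example $Z$ is essentially $b$ times a Poisson count, so its fluctuations are of order $\sqrt{b\E Z}\gg\sigma$. The $b\E Z$ contribution is genuinely necessary, not an artifact of symmetrization.

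So your fallback is the correct conclusion, and you should commit to it. What the entropy method proves, and what is true, is the Bousquet form with $\sigma^2$ replaced by $v=\sigma^2+2b\E Z$. (Bousquet proves it for i.i.d.\ data; Klein--Rio cover independent, non-identically distributed data with a somewhat larger absolute constant in front of $bt$.) This also locates where your step~3 breaks: the linear term does not cancel. The identity $\E^{(i)}f_i(X_i)=0$ holds under the product measure, but not under the weight $e^{\lambda Z}$, which depends on $X_i$. In Bousquet's argument that term is handled through $\sum_i(Z-Z_i)\le\sum_i\hat f(X_i)=Z$, and this is where the $2b\E Z$ enters. For the paper's use in the proof of \Cref{th info-upper-rate} the correction is harmless. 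Since $\sqrt{2vx}\le\sqrt{2\sigma^2x}+\E Z+bx$, one gets $Z\le2\E Z+\sqrt{2\sigma^2x}+O(bx)$ with probability $1-e^{-x}$. This changes the rate there only by a constant factor, not even a logarithm.
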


\begin{fact}[Dudley’s Inequality]
    Let $(\mathcal{F},\rho)$ be a metric space and let
$\{X_f : f\in\mathcal{F}\}$ be a centered stochastic process
(i.e., $\E X_f = 0$ for all $f$).
Assume the process has sub-Gaussian increments with respect to $\rho$, then
$\E\!\left[ \sup_{f\in\mathcal{F}} X_f \right]
\;\le\;
\int_0^{\operatorname{diam}(\mathcal{F},\rho)}
\sqrt{\log \mathcal{N}(\eps,\mathcal{F},\rho)} \, d\epsilon,$
where $\mathcal{N}(\eps,\mathcal{F},\rho)$ is the covering number.
\end{fact}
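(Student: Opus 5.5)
The plan is to prove this by the standard chaining argument, with one caveat: as written, the inequality can only hold up to a universal multiplicative constant (e.g.\ $C\le 12$). Sub-Gaussian increments are normalized in the usual way, $\E \exp(\lambda(X_f-X_g))\le \exp(\lambda^2\rho(f,g)^2/2)$, and in our applications the constant is absorbed into the $O(\cdot)$ notation. First we reduce to a finite index set. If $\mathcal{F}$ is countable, or the process is separable, $\E\sup_{f\in\mathcal{F}}X_f$ is the supremum of $\E\max_{f\in F}X_f$ over finite $F\subseteq\mathcal{F}$. Covering numbers of a subset are at most those of $\mathcal{F}$ at half the scale, which changes the integral by at most a constant factor. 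So it suffices to treat finite $\mathcal{F}$ and bound the right-hand side uniformly. We may also assume the integral is finite, since otherwise there is nothing to prove.

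Set $D=\operatorname{diam}(\mathcal{F},\rho)$ and $\eps_k=D2^{-k}$ for $k\ge 0$. For each $k$ choose a minimal $\eps_k$-net $N_k$, so $|N_k|=\mathcal{N}(\eps_k,\mathcal{F},\rho)$ and $N_0=\{f_0\}$ is a single point. Let $\pi_k(f)\in N_k$ be a nearest net point. Since $\mathcal{F}$ is finite, $\pi_K(f)=f$ for all $K$ large. We then telescope:
\begin{align*}
X_f = X_{f_0} + \sum_{k=1}^{K}\bigl(X_{\pi_k(f)}-X_{\pi_{k-1}(f)}\bigr).
\end{align*}
Taking $\sup_f$ and then expectations, and using $\E X_{f_0}=0$, gives $\E\sup_f X_f\le \sum_{k\ge1}\E\max_f (X_{\pi_k(f)}-X_{\pi_{k-1}(f)})$.

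For each $k$, two facts control the $k$-th term. The link $(\pi_k(f),\pi_{k-1}(f))$ ranges over at most $|N_k||N_{k-1}|\le |N_k|^2$ pairs. By the triangle inequality, $\rho(\pi_k(f),\pi_{k-1}(f))\le \eps_k+\eps_{k-1}=3\eps_k$. The finite maximal inequality for $M$ sub-Gaussian variables with parameter $\sigma$ is $\E\max\le\sigma\sqrt{2\log M}$, proved via Jensen on $\E e^{\lambda\max}$ and optimizing over $\lambda$. It bounds the $k$-th term by $3\eps_k\sqrt{4\log \mathcal{N}(\eps_k)}$.

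Finally, we convert the sum to the integral. Since $\eps_k=2(\eps_k-\eps_{k+1})$ and $\mathcal{N}(\eps,\cdot)$ is nonincreasing in $\eps$, we have $\eps_k\sqrt{\log\mathcal{N}(\eps_k)}\le 2\int_{\eps_{k+1}}^{\eps_k}\sqrt{\log\mathcal{N}(\eps)}\,d\eps$. Summing over $k\ge1$ yields $\E\sup_f X_f\le 12\int_0^{D}\sqrt{\log\mathcal{N}(\eps,\mathcal{F},\rho)}\,d\eps$.

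The steps are routine; the only real obstacle is bookkeeping. This means the reduction from an uncountable class via separability, and keeping track of the universal constant that the fact's statement suppresses.
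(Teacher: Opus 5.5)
The paper gives no proof of this fact. It is quoted as a standard tool, with the universal constant suppressed, and is used only inside an $O(\cdot)$ in \eqref{eq chain}.

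Your proposal is the standard chaining proof, and it is correct. Your caveat is also right: a universal constant is genuinely needed. For example, take $X_t=\langle g,t\rangle$ on the Euclidean unit ball of $\R^n$, with $\rho$ the Euclidean metric. Then $\E\sup_t X_t=\E\|g\|\sim\sqrt n$, while the entropy integral is asymptotically $\frac{\sqrt\pi}{2}\sqrt n$.

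The only slip is in the constant bookkeeping. The half-scale comparison in your reduction to finite subsets doubles the constant, so as written you obtain $24$ rather than $12$ for infinite $\mathcal F$. To keep $12$, chain a finite $F\subseteq\mathcal F$ through nets of $\mathcal F$ itself, and note that the leftover link satisfies $\E\max_{f\in F}(X_f-X_{\pi_K(f)})\le\eps_K\sqrt{2\log|F|}\to0$ as $K\to\infty$.
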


\section{Omitted Proofs from \Cref{sec information}}\label{app information}

\subsection{Proof of \Cref{th info-upper-rate}}\label{app th info-upper}

For convenience, we restate the theorem below.

\begin{theorem}[Restatement of \Cref{th info-upper-rate}]  
Consider the problem of learning under drifting concepts. Let $H: X \to \{\pm 1\}$ be a hypothesis class with VC dimension $d$. There is an algorithm $\A$ such that if for every $i>0$, $D^{(i)}$ is realized by $((h^*)^{(i)},\eta_i(x))$ for $(h^*)^{(i)}$ and $\eta_i$ satisfies the $\eta$-Massart noise condition, then for every $t> \Omega(d/((1-2\eta)\Delta ))$, $\A$ outputs a hypothesis $\hat{h}^{(t)}$ with error at most $\opt_t+\Tilde{O}((d\Delta/(1-2\eta))^{1/2})$. 
\end{theorem}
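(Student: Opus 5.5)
The plan is a windowed ERM combined with Massart-type localized (Bernstein/Talagrand) uniform convergence. At time $t$, the algorithm $\A$ takes the last $W$ examples $(x^{(t-W)},y^{(t-W)}),\dots,(x^{(t-1)},y^{(t-1)})$. It outputs any $\hat h^{(t)}\in\H$ minimizing the empirical error over this window. We will pick $W=\tilde\Theta\bigl(\sqrt{d/((1-2\eta)\Delta)}\bigr)$, which is at most $t$ under the stated lower bound on $t$. Let $h^*=(h^*)^{(t)}$. For $h\in\H$ and $s$ in the window, write the excess loss as $\ell_s(h)=\Ind\{h(x^{(s)})\neq y^{(s)}\}-\Ind\{h^*(x^{(s)})\neq y^{(s)}\}$.

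First we control the bias from drift. Since $\dtv(D^{(s)},D^{(t)})\le (t-s)\Delta\le W\Delta$, every $h$ satisfies $|\E_{D^{(s)}}\ell_s(h)-\E_{D^{(t)}}\ell(h)|\le 2W\Delta$. Averaging over the window,
\[
\bar L(h):=\tfrac1W\sum_s \E\,\ell_s(h)\ \ge\ \err^{(t)}(h)-\err^{(t)}(h^*)-2W\Delta .
\]
Note that $\err^{(t)}(h^*)=\opt_t$, because under Massart noise the Bayes classifier is $h^*$.

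Second, we need a variance bound adapted to the drift. Under $D^{(t)}$, Massart noise gives $\E\ell(h)\ge(1-2\eta)\Pr[h\neq h^*]$ and $\E\ell(h)^2=\Pr[h\neq h^*]$. For $D^{(s)}$ we compare with $D^{(t)}$ through TV distance:
\[
\E_{D^{(s)}}\ell_s(h)^2\le \Pr_{D^{(t)}}[h\neq h^*]+W\Delta\le \frac{\err^{(t)}(h)-\opt_t}{1-2\eta}+W\Delta .
\]
So the window-averaged variance is at most $\varepsilon(h)/(1-2\eta)+W\Delta$, where $\varepsilon(h)$ denotes the excess error at time $t$. The centered sums $\sum_s(\ell_s(h)-\E\ell_s(h))$ involve independent (non-identical) terms. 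We apply Talagrand's inequality with Dudley/VC chaining over the localized class $\{h:\varepsilon(h)\le r\}$, using a peeling argument over dyadic $r$. This should give, uniformly in $h$ with probability $9/10$,
\[
\Bigl|\tfrac1W\textstyle\sum_s \ell_s(h)-\bar L(h)\Bigr|\lesssim \sqrt{\bigl(\tfrac{\varepsilon(h)}{1-2\eta}+W\Delta\bigr)\tfrac{d\log W}{W}}+\tfrac{d\log W}{W}.
\]

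Finally, since ERM has nonpositive empirical excess loss, we get $\varepsilon\le 2W\Delta+C\sqrt{(\varepsilon/(1-2\eta)+W\Delta)\,d\log W/W}+Cd\log W/W$. Solving this quadratic inequality in $\sqrt{\varepsilon}$ yields
\[
\varepsilon\lesssim W\Delta+\frac{d\log W}{(1-2\eta)W}+\sqrt{\Delta d\log W}.
\]
Balancing the first two terms with $W=\sqrt{d/((1-2\eta)\Delta)}$ gives $\tilde O(\sqrt{d\Delta/(1-2\eta)})$. The $\sqrt{\Delta d}$ cross term is dominated by this bound.

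The main obstacle is the localized uniform convergence for non-identically distributed samples. Standard Massart/Tsybakov fast-rate results assume i.i.d. data, so the symmetrization and chaining step has to be redone for independent, non-identical samples. Two points need care there. The first is the localization variable: we localize on the $D^{(t)}$-disagreement mass $\Pr_{D^{(t)}}[h\neq h^*]$ and absorb the additive $W\Delta$ slack in the variance. The second is checking that the Rademacher complexity of the localized class is still $\sqrt{r\,d\log W/W}$; this follows from Sauer's lemma plus Dudley's inequality conditioned on the sample. If this becomes messy, the fallback is a union bound over a finite $\epsilon$-net of $\H$ with respect to the empirical pseudometric, together with Bernstein's inequality.
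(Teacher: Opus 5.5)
Your proposal is correct and follows essentially the same route as the paper: ERM over a window of $W=\tilde\Theta(\sqrt{d/((1-2\eta)\Delta)})$ samples, a $2W\Delta$ drift-bias bound via total variation, a Massart-based variance bound with additive $W\Delta$ slack, and Talagrand plus Dudley/VC chaining with dyadic peeling, followed by solving the resulting fixed-point inequality. The differences are cosmetic. The paper retrains only at the end of fixed epochs and pays an extra $W\Delta$ to cover the following epoch, and it localizes on the window-averaged variance $V(h)$ rather than on the $D^{(t)}$-disagreement mass.
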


The algorithm we will analyze, \Cref{alg dirft erm}, works in epochs, where the $i+1$\textsuperscript{th} epoch corresponds to time steps $t\in \{iW+1,\dots, (i+1)W$\}. In epoch $i$, \Cref{alg dirft erm} maintains a hypothesis $\hat h^{(i)}$ for prediction and collects a set of examples $S^{(i)}$ observed during this epoch. At the end of the epoch, \Cref{alg dirft erm} updates the hypothesis $h^{(i)}$ using the collected dataset $S^{(i)}$ by running a empirical risk minimization over $S^{(i)}$.


\begin{algorithm}[h]
		\caption{\textsc{DriftedERM}(Learning Drifting Concept with Bounded Noise)}\label{alg dirft erm}
		\begin{algorithmic} [1]
\State \textbf{Input:} $\H:$ hypothesis class, $\Delta>0$ drift rate, $\eta$: noise rate
\State \textbf{Output:} For each time step $t\ge 1$, a hypothesis $h: X \to \{\pm 1\}$
\State Let $W = \tilde\Theta(\sqrt{d\Delta^{-1}(1-2\eta)^{-1}}), i=1$, $\hat{h}^{(0)}(x) \equiv 1$
\For{$t=1,2,\dots$}
\State Output hypothesis $\hat{h}^{(i-1)}$ and receive $(x^{(t)},y^{(t)})$.
\State $S^{(i)} \gets S^{(i)} \cup \{(x^{(t)}, y^{(t)})\}$
\If{$t = iW$}
\State $\hat{h}^{(i)} \gets \argmin_{h \in \H}\hat{\err}(h):= W^{-1}\sum_{(x,y) \in S^{(i)}}\Ind\{h(x)\neq y\}$, $S^{(i)} \gets \emptyset$
\State $i \gets i+1$
\EndIf
\EndFor

\end{algorithmic}
\end{algorithm}


\begin{proof}[Proof of \Cref{th info-upper-rate}] 

Consider the hypotheses constructed by \Cref{alg dirft erm}. 
The main part of the proof is to argue that for $T=iW$, $\err^{(T)}(\hat{h}^{(i)})\le \opt_T + \Tilde{O}((d\Delta/(1-2\eta))^{1/2})$.
Suppose this is correct, then for $t'=iW+j$, $1\leq j \le W$, we have 
\begin{align*}
    \err^{(t')}(\hat{h}^{(i)}) &= \Pr_{(x,y)\sim D^{(t')}}(\hat{h}^{(i)}(x) \neq y)\\
    &\le \Pr_{(x,y)\sim D^{(iW)}}(\hat{h}^{(i)}(x) \neq y) + j\Delta \\
    & \le \opt_{iW} +  \Tilde{O}((d\Delta/(1-2\eta))^{1/2}) + j\Delta \\
    & \le \opt_{t'} +  \Tilde{O}((d\Delta/(1-2\eta))^{1/2}) + 2j\Delta \\
    & =\opt_{t'} + \Tilde{O}((d\Delta/(1-2\eta))^{1/2}),
\end{align*}
where the first inequality comes from the drifting model assumption, the second inequality comes from assuming that for $T = iW$ (to be argued in the rest of the proof), $\err^{(T)}(\hat{h}^{(i)})\le \opt_T + \Tilde{O}((d\Delta/(1-2\eta))^{1/2})$, and the third inequality again uses the bounded drift assumption. 

In the rest of the proof, we argue that $\err^{(T)}(\hat{h}^{(i)})\le \opt_T + \Tilde{O}((d\Delta/(1-2\eta))^{1/2})$ for $T = i W.$ 

For $i \ge 1$,
let $T = iW$ and let $h^{(T)}$ be the hypothesis that realizes $D^{(T)}$. 
For each $t_j = (i-1)W+j$ and $h \in \H$,
    define $ \mathrm{diff}_j(h):=\Ind\{h(x^{(t_j)})\neq y^{(t_j)}\}-\Ind\{h^{(T)}(x^{(t_j)})\neq y^{(t_j)}\}$. 
This gives 
    \begin{align}\label{eq:empirical-average-excess-loss}
        \frac{1}{W}\sum_{j=1}^W  \mathrm{diff}_j(h)=\hat{\err}(h)-\hat{\err}(h^{(T)}),
    \end{align}
    and thus $W^{-1}\sum_{j=1}^W  \mathrm{diff}_j(\hat{h}^{(i)})\le 0$ (by the definition of $\hat{h}^{(i)}$ in \Cref{alg dirft erm}). 

    Take expectations for each time step \(t\) on both sides of \eqref{eq:empirical-average-excess-loss}, we get 
    \begin{align*}
        &\E\bigg[\frac{1}{W}\sum_{j=1}^W  \mathrm{diff}_j(h)\bigg] \notag \\
        =\;&\E[\hat{\err} (h)]-\E[\hat{\err}(h^{(T)})] \notag \\
        =\;&\frac{1}{W}\sum_{t=(i-1)W+1}^T\Pr_{(x^{(t)},y^{(t)})\sim D^{(t)}}(h(x^{(t)})\neq y^{(t)})-\frac{1}{W}\sum_{t=(i-1)W+1}^T\Pr_{(x^{(t)},y^{(t)})\sim D^{(t)}}(h^{(T)}(x^{(t)})\neq y^{(t)}).
    \end{align*}

    By the drifting assumption and triangle inequality, we have $d_{\mathrm{TV}}\!\big(D^{(t)},D^{(T)}\big)\le W\Delta$ for any $t=(i-1)W+j,\, j \in [W]$.
    This implies  that 
    for any $h \in \H$, we have 
    \begin{align}\label{eq:average-final-loss}
        &\bigg|\frac{1}{W}\sum_{t=(i-1)W+1}^T\Pr_{(x^{(t)},y^{(t)})\sim D^{(t)}}(h(x^{(t)})\neq y^{(t)})-\Pr_{(x^{(T)},y^{(T)})\sim D^{(T)}}(h(x^{(T)})\neq y^{(T)})\bigg| \notag \\
        \le \; & \frac{1}{W}\sum_{t=(i-1)W+1}^T\bigg|\Pr_{(x^{(t)},y^{(t)})\sim D^{(t)}}(h(x^{(t)})\neq y^{(t)})-\Pr_{(x^{(T)},y^{(T)})\sim D^{(T)}}(h(x^{(T)})\neq y^{(T)})\bigg| \notag \\
        \le \; & W\Delta.
    \end{align}

Apply \eqref{eq:average-final-loss} to $h$ and $h^{(T)}$, we have 
    \begin{align*}
        \bigg|\err^{(T)}(h)-\err^{(T)}(h^{(T)})-\E\bigg[\frac{1}{W}\sum_{j=1}^W  \mathrm{diff}_j(h)\bigg]\bigg|\le 2W\Delta.
    \end{align*}
    By \eqref{eq:empirical-average-excess-loss}, we have \(\frac{1}{W}\sum_{j=1}^W\mathrm{diff}_j(\hat{h}^{(i)})\le 0\), which means 
    \begin{align}\label{eq:excess-error-ub}
        &\err^{(T)}(\hat{h}^{(i)})-\err^{(T)}(h^{(T)}) \notag \\
        \le \; & 2W\Delta + \E\bigg[\frac{1}{W}\sum_{t=(i-1)W+1}^T\mathrm{diff}_j(\hat{h}^{(i)})\bigg]-\frac{1}{W}\sum_{t=(i-1)W+1}^T\mathrm{diff}_j(\hat{h}^{(i)}) \notag \\
        \le \; & 2W\Delta + \bigg|\frac{1}{W}\sum_{j=1}^W  \mathrm{diff}_j(\hat{h}^{(i)})- \E\bigg[\frac{1}{W}\sum_{j=1}^W  \mathrm{diff}_j(\hat{h}^{(i)})\bigg]\bigg|. 
    \end{align}


    To get the desired error rate, we will next show that with high probability, for every $h \in \H$, $\frac{1}{W}\sum_{j=1}^W  \mathrm{diff}_j(h)$ is close to its expectation. Denote by $\sigma^2_j(h) = \E[\mathrm{diff}^2_j(h)]=\Pr_{(x^{(t)},y^{(t)})\sim D^{(t)}}[h(x^{(t)})\neq h^{(T)}(x^{(t)})]$ and $V(h): = W^{-1}\sum_{j=1}^W\sigma^2_j(h)$. We first develop a $V(h)$-dependence bound for all $h \in H$. 
    
    For $r \in (0,1)$, we denote by $H(r):=\{h \in \H \mid V(h) \le r\}$. Notice that for every $h \in H(r)$, we have 
    \[W^{-1}\sum_{j=1}^W \E(  \mathrm{diff}_j(h) - \E   \mathrm{diff}_j(h))^2 \le r.\] 
    Define random variable $Z(r):= \sup_{h \in H(r)} \abs{W^{-1}\sum_{j=1}^W  \mathrm{diff}_j(h)- W^{-1}\E\bigg[\sum_{j=1}^W  \mathrm{diff}_j(h)\bigg]}$. By Talagrand's inequality \cite{talagrand1996new}, for any $\delta_r \in (0, 1),$ we have with probability $1-\delta_r$
\begin{align*}
    Z(r) \le \E Z(r) + \sqrt{\frac{2r\log(1/\delta_r)}{W}} + \frac{3\log(1/\delta_r)}{W}.
\end{align*}
It remains to upper bound $\E Z(r)$. By Theorem 8 in \cite{bartlett2002rademacher}, we have 
\begin{align*}
    \E Z(r) \le 2  \E_{x,y} \E_{s}  \sup_{h \in V(r)} W^{-1}\sum_{j=1}^W s_j  \mathrm{diff}_j(h),
\end{align*}
where $s \in \{\pm 1\}^W$ is an independent Rademacher vector. To upper bound the Rademacher complexity $\E_{x,y} \E_{s} \sup_{h \in V(r)} W^{-1}\sum_{j=1}^Ws_j  \mathrm{diff}_j(h)$, we define a metric between hypotheses in $V(r)$. For $h,h' \in H(r)$, we define $\theta^2(h,h'):= \E W^{-1}\sum_{j=1}^W\left(  \mathrm{diff}_j(h) -   \mathrm{diff}_j(h')\right)^2$. Notice that  $\theta^2(h,h') \le 2r$ by the definition of $H(r)$.
For every $h$, we define $X_h: = W^{-1}\sum_{j=1}^Ws_j  \mathrm{diff}_j(h)$. Notice that $X_h-X_{h'}$ has sub-Gaussian increment with respect to $\theta(h,h')/\sqrt{W}$. By Dudley’s inequality,
we have 
\begin{align}\label{eq chain}
    \E \sup_{h \in H(r)} X_h \le \frac{1}{\sqrt{W}}\int_0^{\sqrt{2r}} \log \mathcal{N}(\eps, H(r), \theta) d\eps = O\left(\sqrt{\frac{dr\log r^{-1}}{W}}\right),
\end{align}
where $\mathcal{N}(\eps, H(r), \theta)$ is the covering number under metric $\theta$ and the last equality follows the VC dimension of $H$.
This gives with probability $1-\delta_r$,
\begin{align*}
    \sup_{h \in H(r)} \abs{W^{-1}\sum_{j=1}^W  \mathrm{diff}_j(h)- W^{-1}\E\bigg[\sum_{j=1}^W  \mathrm{diff}_j(h)\bigg]} \le O\left(\sqrt{\frac{dr\log r^{-1}+2r\log(1/\delta_r)}{W}}+\frac{3\log(1/\delta_r)}{W}\right).
\end{align*}
By choosing $r_k = 2^{-k}, k=1,2,\dots$, $\delta_{r_k} = \delta/2^{k+1}$, and union bound, we have with probability $1-\delta$ for every $h \in H$,
\begin{align}\label{eq uniform}
     W^{-1}\sum_{j=1}^W  \mathrm{diff}_j(h)- W^{-1}\E\bigg[\sum_{j=1}^W  \mathrm{diff}_j(h)\bigg] = O\left(\sqrt{\frac{dV(h)\log V(h)^{-1}+2V(h)\log(1/(\delta)}{W}}+\frac{3\log(1/(V(h)\delta)}{W}\right).
\end{align}
It remains to analyze $V(h)$.
We have 
\begin{align*}
    \err^{(t)}(h)- \err^{(t)}(h^{(T)}) & \ge \err^{(t)}(h)- \err^{(t)}(h^{(t)}) - 2\Delta W\\
    &= \E_{(x,y)\sim D^{(t)}} \Ind\{h(x)\neq h^{(t)}(x)\} \left(\Ind\{h(x) \neq y\}-\Ind\{h^{(t)}(x) \neq y\}\right) - 2\Delta W \\
    & \ge (1-2\eta) \E_{(x,y)\sim D^{(t)}} \Ind\{h(x)\neq h^{(t)}(x)\} - 2\Delta W \\
    & \ge (1-2\eta) \E_{(x,y)\sim D^{(t)}} \Ind\{h(x)\neq h^{(T)}(x)\} - 3\Delta W.
\end{align*}
This implies that 
\[
        \E_{D^{(t)}}[  \mathrm{diff}_j(h)]\ge (1-2\eta)\sigma_j^2(h)-3W\Delta,
    \]
which implies that 
\begin{align}\label{eq Vh}
 V(h) &\le (1-2\eta)^{-1}W^{-1}\sum_{j=1}^W \E_{D^{(t)}}   \mathrm{diff}_j(h) +  3(1-2\eta)^{-1}W\Delta\notag\\
 &\le (1-2\eta)^{-1}\left(\err^{(T)}(h)-\err^{(T)}(h^{T})\right) + 5(1-2\eta)^{-1}W\Delta.
\end{align}
By \eqref{eq:excess-error-ub}, \eqref{eq uniform} and \eqref{eq Vh}, we get with probability at least $1-\delta$,
\begin{align*}
    \err^{(T)}(\hat{h}^{(i)})-\err^{(T)}(h^{T}) \le \tilde O(dW^{-1}(1-2\eta)^{-1} + \Delta W) = \tilde O(\sqrt{d\Delta/(1-2\eta)}). 
\end{align*}
\end{proof}

\subsection{Proof of \Cref{th inf lb new}}\label{app th lb new}

For convenience, we restate the theorem below.

\begin{theorem}[Restatement of \Cref{th inf lb new}]
    For every $T>0$ and $\Delta \in (0,1)$, there is a family of instances of learning  halfspaces with $\eta$-RCN such that provided  $(1-2\eta)^3>d\Delta$, there is no learning algorithm $\A$ that can achieve error $\err^{(T)}(\A) \le \opt_T+o(\sqrt{d\Delta/(1-2\eta)})$, with probability $1/2$ for every instance in the family. 
\end{theorem}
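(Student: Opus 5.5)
We reduce to a two-point (or multi-point) testing problem. The relevant window is the last $m$ rounds, with $m\approx (1-2\eta)/(d\Delta)$ up to constants, and the target is allowed to drift in that window. We start with $d=1$, i.e.\ halfspaces (thresholds or signs) on a single point mass, and then tensorize over $d$ coordinates. The tensorized construction takes $X=\{e_1,\dots,e_d\}$ with uniform marginal. Every hypothesis pattern on these $d$ points is realizable by a halfspace, since $\sign(w\cdot e_k)=\sign(w_k)$. For each coordinate $k$ the label bias is $\Pr[y=1\mid e_k]=\tfrac12+\tfrac{(1-2\eta)}{2}s_k$ with $s_k\in\{\pm1\}$, giving RCN rate $\eta$ around target $\sign(w_k)=s_k$.

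The drift is built as follows. Before round $T-m$, all $s_k=+1$. During the last $m$ rounds, an unknown subset of coordinates switches. Moving the entire conditional label distribution at $e_k$ from $s_k=+1$ to $s_k=-1$ changes the joint distribution by TV mass $(1-2\eta)/d$. We spread this switch over the window, but a pure RCN model does not allow partial flips at a fixed point. We therefore instead move marginal mass: split each $e_k$ into two copies $e_k^{+},e_k^{-}$, using $2d$ dimensions, with fixed labels $+$ and $-$ respectively. Over the last $m$ rounds we transfer probability mass from $e_k^+$ to $e_k^-$ at rate $\Delta$ per round. A simpler alternative avoids this: keep the $2d$ points, and in round $T$ the optimal predictor must distinguish whether the point labeled by the drifted regime carries mass $\approx m\Delta$.

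Concretely, I would use Assouad's lemma with a hypercube of $2^{d}$ instances indexed by $\sigma\in\{\pm1\}^d$. Under instance $\sigma$, the mass on a second point $e_k'$ grows linearly in the last $m$ rounds up to $\epsilon:=m\Delta/d$ per coordinate, and labels there have target $\sigma_k$ under $\eta$-RCN. Any predictor that errs on $e_k'$ pays excess error $(1-2\eta)\epsilon$. The information about $\sigma_k$ comes from samples landing on $e_k'$, of which there are about $\sum_{j\le m} j\Delta/d\approx m^2\Delta/d$ in expectation. The KL divergence per sample between the two label biases is $O((1-2\eta)^2)$ for $\eta$ bounded away from $1/2$, or $O((1-2\eta)^2/(1-(1-2\eta)^2))$ in general. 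The total KL per coordinate is therefore $O(m^2\Delta(1-2\eta)^2/d)$. Setting this to a constant gives $m\approx\sqrt{d/\Delta}/(1-2\eta)$, and Assouad then yields expected excess error $\Omega(d\cdot(1-2\eta)\epsilon)=\Omega((1-2\eta)m\Delta)=\Omega(\sqrt{d\Delta})$. Converting the expectation bound to the probability-$1/2$ statement uses the usual Markov/Assouad-with-probability step.

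The main obstacle is getting the exact $(1-2\eta)^{-1/2}$ dependence. I would adjust the construction so that the label bias on $e_k'$ differs between $\sigma_k=\pm1$ by $\beta$ around noise level $\eta$. The KL per sample is then about $\beta^2/(1-2\eta)$ when $\eta$ is near $1/2$, which is the regime producing the $1/(1-2\eta)$ factor, and the optimization over $m$ and $\beta$ produces $\sqrt{d\Delta/(1-2\eta)}$. The hypothesis $(1-2\eta)^3>d\Delta$ is exactly what ensures $\epsilon\le1$ and $m\le T$, so the construction is valid, which I would verify last. I would also check that consecutive TV distances are at most $\Delta$, since they equal the mass moved per round.
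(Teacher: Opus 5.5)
Your main construction moves marginal mass onto extra atoms $e_k'$ whose $\eta$-RCN labels encode $\sigma_k$. It is fine as far as it goes, but your own calculation shows it yields only $\Omega(\sqrt{d\Delta})$, valid under $d\Delta\lesssim(1-2\eta)^2$. That is essentially the paper's margin-version bound (\Cref{th inf lb}), not the claimed $\sqrt{d\Delta/(1-2\eta)}$. The reason is that transferring marginal mass $p$ costs total variation $p$ regardless of the labels, so the informative region can grow by only $\Delta$ per round. A symptom of this: the cubic hypothesis $(1-2\eta)^3>d\Delta$ is not what your construction needs, since it only uses the quadratic one.

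The repair you sketch does not work. Under $\eta$-RCN the conditional law at a point is $\Pr[y=1\mid x]\in\{1-\eta,\eta\}$, according to the target. So the bias gap between $\sigma_k=\pm1$ is pinned at $1-2\eta$, and there is no free parameter $\beta$. Moreover, the KL divergence between Bernoulli laws with means $p$ and $p+\beta$ is $\Theta(\beta^2/(p(1-p)))$, i.e.\ $\Theta(\beta^2)$ for $p$ near $1/2$, not $\beta^2/(1-2\eta)$. The $(1-2\eta)^{-1/2}$ factor therefore remains unproved.

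The missing idea is to drift the \emph{target} rather than the marginal. If two RCN distributions share the same marginal, then $\dtv(D^{(t)},D^{(t+1)})=(1-2\eta)\Pr_x[h_t(x)\neq h_{t+1}(x)]$. Hence the set on which labels depend on the hidden bit may grow by $\Delta/(1-2\eta)$ per round, not $\Delta$. You dismissed this route because RCN cannot partially flip a fixed atom. The remedy is a non-atomic marginal per coordinate, which is what the paper does:
\begin{itemize}
\item take $x=g e_i$ with $i$ uniform in $[d]$ and $g$ uniform in $(0,1)$;
\item use target $\sign(z_i g-\tau_t)$, giving noisy all-$(-1)$ labels before the window;
\item let $\tau_t$ decrease by $\Delta/(1-2\eta)$ per round over the last $m$ rounds.
\end{itemize}

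Coordinate $i$ then receives about $m^2\Delta/(d(1-2\eta))$ informative samples; all others have labels independent of $z_i$. Each informative sample carries KL $\Theta((1-2\eta)^2)$. With $m\asymp\sqrt{d/(\Delta(1-2\eta))}$ the informative count is $O((1-2\eta)^{-2})$, so \Cref{lm coin} plus Markov shows that a constant fraction of the $z_i$ cannot be recovered. Misclassifying the region of coordinate $i$ at time $T$ costs $(1-2\eta)\cdot\frac1d\cdot\frac{\Delta m}{1-2\eta}=\frac{\Delta m}{d}$. Summed over a constant fraction of the coordinates, this gives $\Omega(\Delta m)=\Omega(\sqrt{d\Delta/(1-2\eta)})$. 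Finally, the growing region must fit in $(0,1)$, i.e.\ $\Delta m/(1-2\eta)\le 1$. This is exactly $d\Delta\lesssim(1-2\eta)^3$, which explains the hypothesis in the statement.
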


Our proof relies on the following simple fact, stated as the following lemma, which is a direct application of \cite[Theorem 2.2]{tsybakov2003introduction}.
\begin{lemma}\label{lm coin}
    Let $S=(S_1,\dots,S_d)$ be a random vector drawn uniformly from $\{\pm 1\}$. Consider $d$ coins, such that coin $i$ has probability $p_i = 1/2+S_i \epsilon$ of landing on head when flipped. Let $\A$ be any algorithm that adaptively flips each coin, observes the outcomes and outputs some $\hat{S} \in \{\pm 1\}^d$ under the constraints that the number of flips for each coin is at most $1/(100\epsilon^2)$ must satisfy $\Pr(\|{\hat{S}-S}\|_1 \ge d/4) \ge 1/2$.
\end{lemma}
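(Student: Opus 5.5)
This lemma is the standard multi-coin identification lower bound. I would prove it by reducing to $d$ independent single-coin binary tests and applying Le Cam's two-point method (\cite[Theorem 2.2]{tsybakov2003introduction}) to each coordinate.

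First, I would rewrite the loss coordinatewise. Since $\|\hat S - S\|_1 = 2\sum_{i=1}^d \Ind\{\hat S_i\neq S_i\}$, the event $\|\hat S-S\|_1\ge d/4$ holds exactly when at least $d/8$ coordinates are wrong. The plan is to show that each coordinate is wrong with probability at least a constant $c$ bounded well above $1/8$ (ideally close to $1/2$). Then the count $N=\sum_i \Ind\{\hat S_i\neq S_i\}$ has $\E N\ge cd$ and $N\le d$. Markov's inequality applied to $d-N$ gives $\Pr(N\ge d/8)\ge (c-1/8)/(7/8)$. This is at least $1/2$ once $c\ge 9/16$, which is too much to ask, so I will aim for $c$ as close to $1/2$ as possible and then adjust constants.

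Second, I would bound the per-coordinate error. Fix $i$ and condition on $S_{-i}$. Outcomes of the other coins carry no information about $S_i$, because their laws do not depend on $S_i$. Let $P_{\pm}$ be the law of the full transcript under $S_i=\pm1$, with $S_{-i}$ fixed. By the chain rule for KL divergence over adaptive flips, $\mathrm{KL}(P_+\|P_-)=\E[n_i]\cdot \mathrm{KL}(\mathrm{Ber}(1/2+\epsilon)\|\mathrm{Ber}(1/2-\epsilon))$, and the Bernoulli divergence is $\le 16\epsilon^2$ for $\epsilon\le1/4$. With $n_i\le 1/(100\epsilon^2)$ this gives $\mathrm{KL}\le 0.16$. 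Pinsker then gives $\dtv\le\sqrt{0.08}<0.29$, so under a uniform prior on $S_i$ we get $\Pr(\hat S_i\neq S_i)\ge (1-\dtv)/2\ge 0.355$.

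Third, I would handle the obstacle in aggregating. Markov alone with $c\approx0.355$ does not reach $1/2$. I would instead use that, conditioned on the whole transcript, the posterior on $S$ is a product over coins, since the prior is a product and each coin's likelihood depends only on its own $S_i$ and outcomes. So $\hat S_i\neq S_i$ are conditionally independent Bernoulli events, each with conditional probability $q_i\ge \min(\text{posterior}_i)$. The step I expect to be the main obstacle is turning the per-coordinate TV bound into concentration of $N$. I would try the following: with probability large in expectation, many posteriors are near balanced, since $\E\min(\text{posterior}_i)=\Pr(\text{err}_i)\ge0.355$ for the Bayes rule. A Chernoff bound on the conditionally independent indicators, combined with Markov on $\sum_i q_i$, then shows $N\ge d/8$ with probability at least $1/2$ for $d$ larger than an absolute constant. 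If the constants do not close, I would use the freedom in the constant $1/100$ (or the threshold $d/4$) through a sharper bound than Pinsker, such as Bretagnolle–Huber. In any case the structure of the argument stays the same.
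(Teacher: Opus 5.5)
\paragraph{Comparison with the paper.} The paper's proof is only a citation of \cite[Theorem 2.2]{tsybakov2003introduction}, a single two-point (Le Cam) bound. That yields a per-coin error bound, or through Assouad a bound on $\E\|\hat S-S\|_1$, but not the stated probability bound. You use the same two-point ingredient per coordinate: the KL chain rule over adaptive flips plus Pinsker gives per-coin Bayes error $c\ge 0.355$. You then add the aggregation step the citation leaves out, and that step is genuinely needed. The only lower bound on $\E N/d$ valid for all estimators is the Bayes error, which is below $1/2<9/16$, so reverse Markov on $N$ can never reach $1/2$. Tuning $1/100$ or switching to Bretagnolle--Huber would not help.

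\paragraph{Making the aggregation precise.} Your product-posterior observation is correct: the selection rule and internal randomness contribute an $S$-independent factor to the likelihood. It does close the argument, but state the Markov step precisely. Applied to $\sum_i q_i\in[0,d]$ it is no better than Markov on $N$, since a non-Bayes $\hat S$ can have $q_i>1/2$. Instead, set $m_i:=\min_{s=\pm1}\Pr(S_i=s\mid\text{transcript})\le 1/2$, so that $q_i\ge m_i$ and $\E m_i\ge c$. Markov on $d/2-\sum_i m_i\ge 0$ gives $\Pr\bigl(\sum_i m_i<(1/8+\delta)d\bigr)\le (1/2-c)/(3/8-\delta)$. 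For small $\delta$ this is below $1/2$ exactly when $c>5/16$, which your $c\ge 0.355$ satisfies. Conditional Hoeffding for $N$ given the transcript then yields
\[
\Pr(N\ge d/8)\;\ge\;\Bigl(1-\frac{1/2-c}{3/8-\delta}\Bigr)\bigl(1-e^{-2\delta^2 d}\bigr)\;\ge\;\frac12,
\]
for example with $\delta=1/20$ and $d\ge 500$.

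\paragraph{The restriction to large $d$.} This is not an artifact of your method. As stated, the lemma fails for $d=1$: with $\epsilon=1/10$ one flip is allowed, and outputting its outcome errs with probability $0.4<1/2$. So some lower bound on $d$ must be added to the statement. This is harmless for \Cref{th inf lb new}, where $d$ is large.
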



\begin{proof}[Proof of \Cref{th inf lb new}]
    We construct the following family of sequences of distributions such that each distribution in the sequence corresponds to an instance of learning halfspaces in $\R^d$ with $\eta$-RCN. To do this, we first define the marginal distribution $D_x$ of the examples as follows. We first sample $i$ uniformly from $[d]$ and then sample $g \in (0,1)$ uniformly and return $x = g e_i$, where $e_i$ is the $i$\textsuperscript{th} standard basis vector.
    We next define the family of halfspaces $h^{(z)}_t$ for each $t$. 
     For $z \in \{0,1\}^d$, we define $h^{(z)}_t(x) := \sign( (\sum_{i=1}^d z_i e_i \cdot x)-(1-\Delta td^{-1}(1-2\eta)^{-1}))$.
    That is to say $h^{(z)}_t(ge_i) = 1$ if and only if $z_i=1$ and $g \ge 1-\Delta td^{-1}(1-2\eta)^{-1}$. 

    Now we construct the sequence of distributions to learn. Let $m=\sqrt{d/(\Delta(1-2\eta))}/20$.
    For every $T>0$, we first draw $z$ from $\{0,1\}^d$ uniformly at random and construct distributions $D^{(i)}$ for $i=1,2,\dots,$ based on $z$.
    If $i\le T-m$,  let $h^{(z)}_0$ be the halfspace that realizes $D^{(i)}$
    with marginal distribution $D_x$. If $i = T-m+t$, where $t \in [m]$, we let $h^{(z)}_t$ be the halfspace that  realizes $D^{(i)}$ with marginal distribution $D_x$.
    We notice that for each $t$ and $z$,
    \begin{align*}
        \Pr_{x \sim D_x}(h^{(z)}_t(x) \neq h^{(z)}_{t+1}(x)) \le \Delta(1-2\eta)^{-1}.
    \end{align*}
    This implies that for every event $A \subseteq \R^d \times\{\pm 1\}$, we have
    \begin{align*}
       \abs{ \Pr_{x \sim D^{(i)}}(A) - \Pr_{x \sim D^{(i+1)}}(A)} & = \abs{\Pr_{(x,y) \sim D^{(i)}}(A,h^{(z)}_t(x) \neq h^{(z)}_{t+1}(x)) - \Pr_{(x,y) \sim D^{(i+1)}}(A,h^{(z)}_t(x) \neq h^{(z)}_{t+1}(x))} \\
       & \le (1-2\eta)\Delta(1-2\eta)^{-1} = \Delta,
    \end{align*}
which implies that $\dtv(D^{(i)},D^{(i+1)})\le \Delta$.
    

    For $i \in [d]$ and $j=T-m+t$, we define $X^{(i)}_j = \Ind\{x^{(j)} = g e_i, g>(1-\Delta td^{-1}(1-2\eta)^{-1})\}$. We have 
    \begin{align*}
        \E \sum_{t=1}^m\sum_{i=1}^d X^{(i)}_t  \le d m^2 \Delta d^{-1}(1-2\eta)^{-1} \le  d/(400(1-2\eta)^2),
    \end{align*}
    where $c$ is a sufficiently small constant. By Markov's inequality, this implies that with probability at least $1/2$, there must be at least $d/2$ indices $i$ such that $\sum_{t=1}^m X_t^{(i)} \le 1/(100(1-2\eta)^2)$. Given this happens, we know from \Cref{lm coin} that any algorithm $\A'$ fails to predict $z_i$ for $d/8$ of the indices with probability at least $1/2$. 

    We now show that any algorithm $\A$ that outputs a hypothesis $h$ with error $\err^{(T)}(h)\le \eta + \sqrt{d\Delta/(1-2\eta)}/100$, gives an algorithm that correctly predicts $z_i$. Since 
    \begin{align*}
        \err^{(T)}(h) = \eta + \Pr_{x\sim D_x}(h(x) \neq h^{(z)}_m(x))(1-2\eta),
    \end{align*}
    we know that $\Pr_{x\sim D_x}(h(x) \neq h^{(z)}_m(x)) \le (1-2\eta)^{-1}\sqrt{d\Delta/(1-2\eta)}/100$. This implies that for those $d/2$ indices $i$ such that $\sum_{t=1}^m X_t^{(i)} \le 1/(100(1-2\eta)^2)$, there must be at least $3d/8$ of the indices $i$ such that 
    \begin{align*}
        \Pr_{x\sim D_x}(h(x)= h^{(z)}_m(x) \mid x = g e_i, g>1-(1-2\eta)^{-1}\sqrt{d\Delta/(1-2\eta)})>1/2.
    \end{align*}f
    Now we define $\A'$ by predicting $z_i$ as the majority of $h^{(z)}_m(x)$ given $g e_i, g>1-(1-2\eta)^{-1}\sqrt{d\Delta/(1-2\eta)})$. This implies that $\A'$ must succeed in predicting $3d/8$ of the $z_i$s, yielding a contradiction.
    %
\end{proof}

\subsection{Statistical Excess Error for Learning Drifting $\gamma$-Margin Halfspaces}\label{app info-gamma}

\begin{theorem}\label{th inf lb}
    For every $T>0$ and $\Delta \in (0,1)$, there is a family of instances of learning  $\gamma$-margin halfspaces with $\eta$-RCN such that provided  $(1-2\eta)^2>d\Delta$, there is no learning algorithm $\A$ that can achieve error $\err^{(T)}(\A) \le \opt_T+o(\sqrt{\Delta}/\gamma)$, with probability $1/2$ for every instance in the family.
\end{theorem}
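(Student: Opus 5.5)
We adapt the proof of \Cref{th inf lb new} by replacing the dimension $d$ with $k:=\Theta(1/\gamma^2)$, the number of orthogonal directions a $\gamma$-margin halfspace can control. This yields excess error $\sqrt{k\Delta/(1-2\eta)}=\Theta(\sqrt{\Delta}/\gamma)$ when $1-2\eta$ is treated as a constant. The one new requirement is a margin of $\gamma$, and the construction must supply it.

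\textbf{Marginal.} Work in $\R^{k+1}$ with $k=\lfloor 1/(4\gamma^2)\rfloor$ and write $s_i=\sqrt{1-a^2}$ for a fixed constant $a\in(0,1)$. Choose a grid $0<g_1<\dots<g_M\le 1$ and let $x=\sqrt{1-a^2}\,e_i+a\,g\,e_{k+1}$, where $i$ is uniform on $[k]$ and $g$ is uniform on the grid.

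\textbf{Halfspaces.} For $z\in\{0,1\}^k$ and a threshold $\tau_t$, define
\[
w^{(z)}_t \propto \sum_i (2z_i-1)\,\gamma' e_i + \tau_t\,e_{k+1},
\]
with $\gamma'\approx 1/\sqrt{k}$ so that $\norm{w}=O(1)$ after normalization. Here $\tau_t$ is chosen so that the label of the $i$-th spike flips from $-1$ to $2z_i-1$ on the top portion $g\ge 1-\Delta t(1-2\eta)^{-1}$. Under this choice, $w\cdot x=\gamma'\sqrt{1-a^2}(2z_i-1)+\tau_t a g$.

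\textbf{Margin.} We use a discrete grid with spacing of order $\gamma'$ and set $\tau_t$ between grid points so that $\abs{w\cdot x}\ge\Omega(1/\sqrt{k})=\Omega(\gamma)$ everywhere. One workable variant avoids the threshold-on-$g$ scaling issue altogether. Place each spike at a finite set of points $x_{i,\ell}=(e_i+\ell\text{-dependent offset})$ with $\Theta(1/\gamma)$ orthogonal blocks. In each block $i$ there is a point of mass $p_t=\Delta t/((1-2\eta)k)$ whose label is $2z_i-1$, while all other points carry a fixed label $+1$. Growth of $p_t$ can be realized by moving mass from a fixed "$+1$" point to the informative point. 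That transfer is a marginal change, not a label change, so it still costs at most $\Delta$ in TV per step. Both points have margin $\gamma$ with respect to the target $w^{(z)}=\gamma\sum_i(2z_i-1)e_i+\sqrt{1-k\gamma^2}\,e_{k+1}$, provided the informative points are $e_i$ and the default points are $e_{k+1}$-heavy. This variant sidesteps the grid issue entirely and is the one to use.

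\textbf{Steps.} First, with $m=\sqrt{k/(\Delta(1-2\eta))}/20$, verify as in \Cref{th inf lb new} that consecutive distributions are within TV $\Delta$. Second, compute that the expected number of informative samples over the last $m$ rounds is at most $m^2\Delta/(1-2\eta)\le k/(400(1-2\eta)^2)$. By Markov's inequality, with probability $\ge 1/2$ at least $k/2$ blocks receive at most $O((1-2\eta)^{-2})$ informative observations. Each such observation is a coin with bias $1/2\pm(1/2-\eta)$, so \Cref{lm coin} with $\epsilon=(1-2\eta)/2$ shows that a constant fraction of those $z_i$ cannot be guessed.

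Third, reduce learning to guessing. By RCN,
\[
\err^{(T)}(h)-\eta=(1-2\eta)\Pr[h\ne h^{(z)}_m].
\]
Each block contributes mass $p_m=\Theta(\sqrt{\Delta/((1-2\eta)k)})$, so misclassifying a constant fraction of blocks costs excess error
\[
\Omega\bigl(k\,p_m(1-2\eta)\bigr)=\Omega\bigl(\sqrt{k\Delta(1-2\eta)}\bigr)=\Omega(\sqrt{\Delta}/\gamma)
\]
for constant $\eta$. Reading off $h$ on each informative point then gives a guess for $z_i$, which yields the contradiction.

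\textbf{Main obstacle.} The delicate step is reconciling the margin constraint with normalization. The informative points $e_i$ must have margin $\gamma$, which forces $\abs{w_i}\ge\gamma$ across $k\approx 1/\gamma^2$ coordinates. That is exactly why $k$ cannot exceed $O(1/\gamma^2)$, so we must check that $k\gamma^2\le 1/2$ still leaves room for the default coordinate. The condition $(1-2\eta)^2>d\Delta$ ensures $p_m\le 1$ and that the TV accounting is valid.
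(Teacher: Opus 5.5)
Your final variant is essentially the paper's proof. The target halfspace is fixed, $k=\Theta(1/\gamma^2)$ orthogonal informative points carry labels $2z_i-1$, and a default point carries a fixed label. The drift consists only of moving marginal mass from the default point onto the informative points, and the counting/Markov argument plus \Cref{lm coin} then finish as in the paper. Your explicit vector $w^{(z)}=\gamma\sum_i(2z_i-1)e_i+\sqrt{1-k\gamma^2}\,e_{k+1}$ has margin exactly $\gamma$ at $e_i$ and at least $1/\sqrt{2}$ at $e_{k+1}$. This is cleaner than the paper's choice: there, as written, $w^{(z)}\cdot x^{(i)}=z_i+1/2>0$ for both values of $z_i$, so a sign has to be flipped before the informative labels depend on $z$.

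One step fails as you parametrized it: the drift bound. With $p_t=\Delta t/((1-2\eta)k)$ per block, each round moves total mass $k(p_{t+1}-p_t)=\Delta/(1-2\eta)$ onto points whose $x$-support is disjoint from the default point. The total variation distance between consecutive distributions equals this full amount, which exceeds $\Delta$. The factor $(1-2\eta)^{-1}$ is legitimate only in label-flip constructions such as \Cref{th inf lb new} (or your discarded grid variant), where relabeling mass $\delta$ at fixed $x$ costs only $(1-2\eta)\delta$ under RCN. You correctly observed that your transfer is a marginal change, but kept the factor anyway.

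The fix is the paper's parametrization: $p_t=\Delta t/k$ (total informative mass $\Delta t$) and $m=\sqrt{k/\Delta}/(20(1-2\eta))$. Then:
\begin{itemize}
\item the expected number of informative samples is at most $\Delta m^2\le k/(400(1-2\eta)^2)$, so Markov and \Cref{lm coin} apply as before;
\item the hypothesis $(1-2\eta)^2>k\Delta$ guarantees $kp_m=\Delta m\le 1/20$;
\item misclassifying a constant fraction of blocks costs $(1-2\eta)\cdot\Theta(kp_m)=\Theta((1-2\eta)\Delta m)=\Theta(\sqrt{k\Delta})=\Theta(\sqrt{\Delta}/\gamma)$, with no loss in $1-2\eta$.
\end{itemize}
This also cleans up the inconsistent powers of $1-2\eta$ in your expressions for $p_m$ and the final excess error. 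As written, $kp_m(1-2\eta)=\Delta m=\Theta(\sqrt{k\Delta/(1-2\eta)})$, not $\sqrt{k\Delta(1-2\eta)}$. Finally, the ``$\Theta(1/\gamma)$ orthogonal blocks'' in your variant should read $\Theta(1/\gamma^2)$.
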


\begin{proof}[Proof of \Cref{th inf lb}]
    For $\gamma \in (0,1)$ small enough, we choose $d = \Theta(1/\gamma^2)$.
    Define $x^{(0)}=e_{d+1} -\sum_{i=1}^{d} e_i$ and $x^{(i)} = e_i-(e_{d+1}/2) \in \R^{d+1}$. For $z \in \{0,1\}^d$, we define halfspace $h^{(z)}(x) = \sign(w^{(z)}\cdot x), w^{(z)}= (- e_{d+1}+\sum_{i=1}^d z_i e_i)$. We first show that any halfspace $h^{(z)}(x)$ has $\gamma$-margin with any
    marginal distribution over $X=\{x^{(i)}\}_{i=0}^d$. 
    For $x^{(0)}$, we have $(- e_{d+1}+\sum_{i=1}^d z_i e_i)\cdot x^{(0)} = -\norm{z}_1-1$, which implies 
    \begin{align*}
        \frac{\abs{w^{(z)}\cdot x^{(0)}}}{\norm{w^{(z)}}_2 \norm{x^{(0)}}_2} = \frac{\norm{z}_1+1}{\norm{w^{(z)}}_2 \norm{x^{(0)}}_2} \ge \sqrt{\frac{\norm{z}_1+1}{d}} = \Omega(\gamma).
    \end{align*}
    For $x^{(i)}$, we have 
    \begin{align*}
        \frac{\abs{w^{(z)}\cdot x^{(i)}}}{\norm{w^{(z)}}_2 \norm{x^{(0)}}_2} \ge  \frac{1/2}{\norm{w^{(z)}}_2 \norm{x^{(0)}}_2} \ge \frac{1/2}{\sqrt{d+1}} = \Omega(1/\gamma).
    \end{align*}
    We next design the family of sequences of drifted distributions $D^{(1)},\dots,D^{(T)}$. For any $z \in \{0, 1\}^d$, we fix the target halfspace to be $h^{(z)}$. Choose $m=\sqrt{d/\Delta}(1-2\eta)^{-1}$. For $t\le T-m$, we define the marginal distribution $(D_t)_x$ to be the Dirac function at point $x^{(0)}$. For $t = T-m+j$, $j \in [m]$, choose $(D^{(t)})_x = (1-p_t)\delta(x^{(0)})+\sum_{i=1}^d (p_t/d)\delta(x^{(i)})$, where $p_t = \Delta j$.
    We notice that $\dtv(D_t,D_{t+1})\le \Delta$.

    Let $z\sim \{0,1\}^d$ be chosen uniformly at random, we will show that no algorithm in expectation can achieve $O(\sqrt{d\Delta})$ error. By the choice of $z$, we know that $h^{(z)}(x^{(i)})=1$ with probability $1/2$ for every $i \in [d]$. By the design of $D_t$, we know that $D^{(t)}_x(x^{(i)}) \le D^{(T)}_x(x^{(i)}) = \sqrt{\Delta/d}(1-2\eta)^{-1}$.
    Let $X^{(i)}_t = \Ind\{x_t = x^{(i)}\}$, $x_t \sim D^{(t)}_x$
    This implies
    \begin{align*}
       \E \sum_{t=1}^T\sum_{i=1}^d X^{(i)}_t = \E \sum_{t=T-m}^T\sum_{i=1}^d X^{(i)}_t \le dm \sqrt{\Delta/d}(1-2\eta)^{-1} \le c d/(1-2\eta)^2
    \end{align*}
    for some $c<1/100$.
    This implies that there must be $d/2$, $i \in [d]$ such that $\sum_{t=1}^T X^{(i)}_t \le c/(1-2\eta)^2$. Since the distribution of the label of $x^{(i)}$ is independent of the marginal distribution, by \Cref{lm coin},
    we know that for these $x^{(i)}$, we cannot tell if the ground truth label is $+1$ or $-1$, which implies that the prediction error is at least $\Omega(\sqrt{d\Delta})$.
%
\end{proof}

\section{Omitted Proofs from \Cref{sec efficient}}\label{app efficient}

\subsection{Proof of \Cref{lm potential decreasing}}\label{app potential decreasing}

We restate the fact below. 

\begin{fact}[Restatement of \Cref{lm potential decreasing}]
    Let $t>0$ be any time step and let
    $S = \{(x^{(i)},y^{(i)})\}_{i=1}^m$ be such that  $(x^{(i)},y^{(i)})\sim D^{(t+i)}$. For $T \in [m]$ and $w^* \in \s^{d-1}$, \Cref{alg dirft subroutine} satisfies 
    \begin{align*}
        \frac{1}{T}\sum_{i=1}^T \E_{(x,y)\sim D^{(t+i)}} g(w^{(i)};x,y) \cdot (w^{(i)}-w^*) \le R(T,\mu,\gamma),
    \end{align*}
    where $R(T,\mu,\gamma)=1/(2T\mu) + 2\mu/\gamma^2 - \sum_{i=1}^T\xi_i/T$, $\xi_i: = \left(g(w^{(i)};x,y) - \E_{(x,y)\sim D^{(t+i)}} g(w^{(i)};x,y)\right) \cdot(w^{(i)}-w^*) $.
\end{fact}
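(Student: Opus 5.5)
This is the standard online projected gradient descent regret argument, with $w^*$ used as a fixed comparator. Write $g_i := g(w^{(i)};x^{(i)},y^{(i)})$ for the stochastic gradient actually used at step $i$. Write $\bar g_i := \E_{(x,y)\sim D^{(t+i)}} g(w^{(i)};x,y)$ for its conditional mean. Since $w^{(i)}$ depends only on the first $i-1$ samples, $\bar g_i$ is well defined given the past. By definition of $\xi_i$,
\[
\bar g_i\cdot(w^{(i)}-w^*) = g_i\cdot(w^{(i)}-w^*) - \xi_i .
\]
It therefore suffices to show
\[
\frac1T\sum_{i=1}^T g_i\cdot(w^{(i)}-w^*) \le \frac{1}{2T\mu}+\frac{2\mu}{\gamma^2},
\]
and then subtract $\frac1T\sum_i \xi_i$ from both sides.

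\textbf{Step 1 (one-step inequality).} Since $w^*\in\s^{d-1}\subseteq\B(1)$ and projection onto the convex set $\B(1)$ is nonexpansive,
\[
\norm{w^{(i+1)}-w^*}^2 \le \norm{w^{(i)}-\mu g_i - w^*}^2 = \norm{w^{(i)}-w^*}^2 - 2\mu\, g_i\cdot(w^{(i)}-w^*) + \mu^2\norm{g_i}^2 .
\]
Rearranging gives
\[
g_i\cdot(w^{(i)}-w^*) \le \frac{\norm{w^{(i)}-w^*}^2-\norm{w^{(i+1)}-w^*}^2}{2\mu} + \frac{\mu}{2}\norm{g_i}^2 .
\]

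\textbf{Step 2 (gradient bound).} We have $\abs{(1-2\eta)\sign(w\cdot x)-y}\le 2-2\eta\le 2$, $\norm{x}=1$, and $\max\{\abs{w\cdot x},\gamma\}\ge\gamma$. Hence $\norm{g_i}\le 2/\gamma$ deterministically, so $\norm{g_i}^2\le 4/\gamma^2$. This is the only property of $g$ the argument uses.

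\textbf{Step 3 (telescoping).} Summing Step 1 over $i=1,\dots,T$ telescopes the distance terms, leaving $\norm{w^{(1)}-w^*}^2/(2\mu)$ plus $T\mu\cdot 2/\gamma^2$. Dividing by $T$ gives the average bound $\norm{w^{(1)}-w^*}^2/(2T\mu)+2\mu/\gamma^2$. Finally, substitute $g_i\cdot(w^{(i)}-w^*)=\bar g_i\cdot(w^{(i)}-w^*)+\xi_i$.

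The only point requiring care is the initial distance term. The claimed constant $1/(2T\mu)$ needs $\norm{w^{(1)}-w^*}^2\le 1$. For an arbitrary unit vector $w^*$ and $w^{(0)}=e_1$, we only have $\norm{w^{(1)}-w^*}^2\le 4$. Since the paper's statement is up to constants and its later use only needs $O(1/(T\mu))$, I would either absorb the factor $4$ into the constant, or start the iterates at $w^{(0)}=0$ so that the distance is at most $1$. Nothing else is delicate: there is no expectation over the randomness of $w^{(i)}$, only the deterministic per-step identity defining $\xi_i$.
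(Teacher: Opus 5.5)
Your argument is correct and is essentially the paper's proof: nonexpansiveness of $\proj_{\B(1)}$, writing $g_i = \bar g_i + (g_i-\bar g_i)$ to produce $\xi_i$, telescoping, and the deterministic bound $\norm{g}^2\le 4/\gamma^2$. Your caveat about the initial term is also right: the paper bounds $\norm{w^{(0)}-w^*}^2$ by $1$, which fails for a general $w^*\in\s^{d-1}$ when $w^{(0)}=e_1$. The correct bound is $4$, so the constant $1/(2T\mu)$ should be $2/(T\mu)$, a constant-factor change that is harmless in the later application.
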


\begin{proof}[Proof of \Cref{lm potential decreasing}]
    By the nonexpansivity of the projection operator, we have 
    \begin{align}\label{eq one step}
        \norm{w^{(i+1)}-w^*}^2 & \le \norm{w^{(i)}-w^* - \mu g(w^{(i)};x,y)}^2 \notag \\
        & \le \norm{w^{(i)}-w^* }^2 + \mu^2 \norm{g(w^{(i)};x,y)}^2 - 2 \mu g(w^{(i)};x,y)  \cdot (w^{(i)}-w^*) \notag \\
        &  = \norm{w^{(i)}-w^* }^2 + \mu^2 \norm{g(w^{(i)};x,y)}^2 - 2 \mu \left( g(w^{(i)};x,y)-\E_{(x,y)\sim D^{(t)}} g(w^{(i)};x,y) \right)  \cdot (w^{(i)}-w^*) \notag \\
        & - 2 \mu \E_{(x,y)\sim D^{(t+i)}} g(w^{(i)};x,y)  \cdot (w^{(i)}-w^*) \notag \\
        &  = \norm{w^{(i)}-w^* }^2 + \mu^2 \norm{g(w^{(i)};x,y)}^2 -2\mu \xi_t - 2 \mu \E_{(x,y)\sim D^{(t+i)}} g(w^{(i)};x,y)  \cdot (w^{(i)}-w^*).
    \end{align}
Rearranging and taking the average of \eqref{eq one step} over $[T]$, we have 
\begin{align*}
    \frac{1}{T}\sum_{i=1}^T \E_{(x,y)\sim D^{(t+i)}} g(w^{(i)};x,y) \cdot (w^{(i)}-w^*) & \le \frac{\norm{w^{(0)}-w^*}^2}{2T\mu} + \frac{\mu}{2T}\sum_{i = 1}^T\norm{g(w^{(i)};x,y)}^2 - \frac{1}{T}\sum_{i=1}^T\xi_i \\
    & \le \frac{1}{2T\mu} + \frac{2\mu}{\gamma^2} - \frac{1}{T}\sum_{i=1}^T\xi_i,
\end{align*}
where we used the fact that $\norm{g_i(w^{(i)};x,y)}^2 \le 4/\gamma^2$. 
%
\end{proof}

\subsection{Omitted Proofs for \Cref{lm gradient to error}}\label{app gradient to error}

We restate the lemma below. 

\begin{lemma}[Restatement of \Cref{lm gradient to error}]
Let $i,m \in \N$ and let $w^* \in \s^{d-1}$ be a vector that realizes $D^{(i+m)}$. There exists a function $F_i: \s^{d-1} \times \s^{d-1} \times \s^{d-1} \to \R$ such that
for any $x \in \s^{d-1}$ and for any $w \in \s^{d-1}$,
    \begin{align*}
        \E_{y\sim D^{(i)}\mid x} g(w;x,y) \cdot (w-w^*) 
        \;\ge 2 (\err^{(i)}(w,x) - \eta) - F_i(w^{*},w,x).
    \end{align*}
    Additionally, $\abs{F_i} \le 1/\gamma$, and for every $w \in \s^{d-1}$, $\E_{x \sim D^{(i)}_x} F_i(w^{*},w,x) \le O(\Delta m/\gamma)$.
\end{lemma}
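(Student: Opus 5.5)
The plan is to compute $\E_{y\sim D^{(i)}\mid x}[g(w;x,y)\cdot(w-w^*)]$ pointwise in $x$ and compare it with $2(\err^{(i)}(w,x)-\eta)$. The function $F_i$ will be a multiple of $\gamma^{-1}$ times the indicator of a small ``bad'' set of $x$.

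Write $s=\sign(w\cdot x)$, $s_i=\sign((w^*)^{(i)}\cdot x)$ and $s^*=\sign(w^*\cdot x)$. Since $D^{(i)}$ is realized by $((w^*)^{(i)},\eta_i)$, we have $\E[y\mid x]=(1-2\eta_i(x))s_i$. Hence
\[
\E_{y\mid x}[g\cdot(w-w^*)]=\big((1-2\eta)s-(1-2\eta_i(x))s_i\big)\,\frac{w\cdot x-w^*\cdot x}{\max\{|w\cdot x|,\gamma\}} .
\]
The bad set will be
\[
B=\{x:\ s^*\neq s_i\}\cup\{x:\ |w^*\cdot x|<\gamma\}.
\]
\Cref{cl disagreement region} gives $\Pr_{D^{(i)}_x}[s^*\ne s_i]=O(\Delta m/(1-2\eta))$. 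For the second part, $w^*$ has margin $\gamma$ under $D^{(i+m)}_x$ and $\dtv(D^{(i)},D^{(i+m)})\le m\Delta$, so $\Pr_{D^{(i)}_x}[|w^*\cdot x|<\gamma]\le m\Delta$. On $B$ we set $F_i=C/\gamma$, since the left-hand side is trivially at least $-O(1/\gamma)$ and the right-hand side is at most $O(1)$. Off $B$ we set $F_i=0$. This gives $\E F_i=O(\Delta m/\gamma)$, treating $1-2\eta$ as a constant. If the stated constant in $|F_i|\le 1/\gamma$ matters, the normalization can be adjusted.

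It then remains to verify the inequality with $F_i=0$ for $x\notin B$, that is, when $s^*=s_i$ and $|w^*\cdot x|\ge\gamma$. I split into four cases.

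\textbf{Case $|w\cdot x|\ge\gamma$ and $s=s_i$.} Here $\err^{(i)}(w,x)=\eta_i(x)$, and the expression equals $2(\eta_i-\eta)\big(1-s\,w^*\cdot x/|w\cdot x|\big)$. Because $s^*=s$, the term $-2(\eta_i-\eta)s\,w^*\cdot x/|w\cdot x|$ is nonnegative, using $\eta_i\le\eta$. So the expression is at least $2(\eta_i-\eta)$.

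\textbf{Case $|w\cdot x|\ge\gamma$ and $s\neq s_i$.} Here $\err=1-\eta_i$, and the expression is $(2-2\eta-2\eta_i)\big(1+|w^*\cdot x|/|w\cdot x|\big)\ge 2(\err-\eta)$.

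\textbf{Case $|w\cdot x|<\gamma$ and $s=s_i$.} We get $2(\eta-\eta_i)\big(|w^*\cdot x|-|w\cdot x|\big)/\gamma\ge 0\ge 2(\eta_i-\eta)$, using the margin $|w^*\cdot x|\ge\gamma>|w\cdot x|$.

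\textbf{Case $|w\cdot x|<\gamma$ and $s\ne s_i$.} We get $(2-2\eta-2\eta_i)\big(|w\cdot x|+|w^*\cdot x|\big)/\gamma\ge 2-2\eta-2\eta_i$, again by the margin.

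The main obstacle is handling the drift. The key insight is to compare against the later vector $w^*$ rather than $(w^*)^{(i)}$. The only geometric facts needed are then $s^*=s_i$ and the margin of $w^*$, and both fail only on a set of $D^{(i)}_x$-mass $O(m\Delta)$, controlled by \Cref{cl disagreement region} and the TV bound. In this argument I do not expect to need \Cref{cl rate change}, because $\eta_i\le\eta$ is used directly. If a sharper variant comparing against $\eta_{i+m}$ were required, I would bring in \Cref{cl rate change} to control the noise-rate change term on the agreement event $\mathcal E$.
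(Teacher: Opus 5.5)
Your argument is correct, and it is a cleaner route than the paper's. The paper never applies $\eta_i(x)\le\eta$ directly. Instead it adds and subtracts $1-2\eta_{i+m}(x)$ and discards $(1-2\eta)\phi(w\cdot x,w^*\cdot x)-(1-2\eta_{i+m}(x))\le 0$. This leaves a correction $2(\eta_i(x)-\eta_{i+m}(x))\,|w^*\cdot x|/\max\{|w\cdot x|,\gamma\}$ even where $\sign(w^*\cdot x)=\sign((w^*)^{(i)}\cdot x)$. That is why the paper needs \Cref{cl rate change}, a $w$-dependent four-piece $F_i$ built from $f_i,G_i,U_i,V_i$, and a separate optimality argument for $U_i$.

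You observe instead that on the set where $w^*$ agrees with $(w^*)^{(i)}$ and has margin $\gamma$, the pointwise inequality already holds with no correction. So the whole drift error is $O(1/\gamma)\cdot\Pr_{D^{(i)}_x}[B]$, controlled by \Cref{cl disagreement region} and the TV bound alone, and your $F_i$ does not even depend on $w$.

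Neither of your caveats puts you behind the paper. The paper too only establishes $|F_i|=O(1/\gamma)$. Its bounds on $f_i$ and $G_i$ also pass through \Cref{cl disagreement region}, so they carry the same $(1-2\eta)^{-1}$. The factor $1-2\eta$ written in its bound for $f_i$ is not available, since $(1-2\eta_{i+m}(x))+(1-2\eta_i(x))\ge 2(1-2\eta)$.

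If you want absolute constants, replace $C/\gamma$ on $\{s^*\neq s_i\}$ by $2\gamma^{-1}[(1-2\eta_i(x))+(1-2\eta_{i+m}(x))]$. This weight is at least both $2(\eta-\eta_i(x))$ and $2-2\eta-2\eta_i(x)$, so it still dominates the deficit there. Both weighted disagreement masses are then $O(m\Delta)$ by the optimality argument from the proof of \Cref{cl small margin VU}: use optimality of $w^*$ at time $i+m$, optimality of $(w^*)^{(i)}$ at time $i$, and one change of measure.
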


\begin{proof}[Proof of \Cref{lm gradient to error}]
The proof of  \Cref{lm gradient to error} is divided into two cases based on $\abs{w \cdot x}$.
\paragraph{Case 1: Large Margin}    

    In the first case, $\abs{w \cdot x} \ge \gamma$ and we establish the following claim.

\begin{claim}\label{cl large margin ex}
    Let $i,m \in \N$ and let $(w^*)^{(i)} \in \s^{d-1}$ be a vector that realizes $D^{(i)}$. Denote by $f_i(x,w^*,x) = \left((1-2\eta_{m+i}(x))-(1-2\eta_i(x))\phi(w^* \cdot x, (w^*)^{(i)} \cdot x)\right) \abs{w^* \cdot x}/ \abs{w\cdot x}$. Then, for every $x \in \s^{d-1}$ such that $\abs{w \cdot x} \ge \gamma$, $\E_{y\sim D^{(i)}\mid x} g(w;x,y) \cdot (w-w^*) \ge 2 (\err^{(i)}(w,x) - \eta) - f_i(w^{*},w,x)$.
\end{claim}

We defer the proof of \Cref{cl large margin ex} to \Cref{app proof large margin ex}.
Since $\abs{w \cdot x} \ge \gamma$, we know that $\abs{f_i(w^*,w,x)} \le 1/\gamma$. We next upper bound the expectation of $f_i(w^*,w,x)$. Formally, we have the following claim, whose proof is deferred to \Cref{app proof large margin f}.
\begin{claim}\label{cl large margin f}
For every $w \in \s^{d-1}$, it holds $\E_{(x,y) \sim D^{(i)}} f_i(w^*,w,x) \Ind \{\abs{w\cdot x}\ge \gamma\} = O(m\Delta/\gamma).$ 
\end{claim}

To prove \Cref{cl large margin f}, we establish the following two claims that will be used frequently. We defer the proofs of \Cref{cl disagreement region} and \Cref{cl rate change} to \Cref{app proof disagreement region} and \Cref{app proof rate change}.

    


\paragraph{Case 2: Small Margin}
We next consider the case where $\abs{w \cdot x} < \gamma$. 
We will discuss two cases. In the first case, we have $\sign(w \cdot x) = \sign((w^*)^{(i)} \cdot x)$, which implies that $\err^{(i)}(w,x) - \eta = \eta_i(x) - \eta \le 0$.
In this case, we establish the following claim.
\begin{claim}\label{cl small margin A}
    Let $i,m \in \N$ and let $(w^*)^{(i)} \in \s^{d-1}$ be a vector that realizes $D^{(i)}$. Denote by $G_i(w^*,w,x) = \left((1-2\eta_{m+i}(x))-(1-2\eta_i(x))\phi((w^*)^{(i)} \cdot x,w^* \cdot x)   \right) \abs{w^* \cdot x}/ \gamma$, then for every $x \in \s^{d-1}$ such that $\abs{w \cdot x} < \gamma$ and $\sign(w \cdot x) = \sign((w^*)^{(i)} \cdot x)$, we have $\E_{y\sim D^{(i)}\mid x} g(w;x,y) \cdot (w-w^*) \ge 2 (\err^{(i)}(w,x) - \eta) - G_i(w^{*},w,x)$.
\end{claim}

Notice that $\abs{G_i(w^*,w,x)} \le 1/\gamma$ always holds. We next upper bound the expectation of $G_i(w^*,w,x)$ with the following claim.

\begin{claim}\label{cl small margin G}
For every $w \in \s^{d-1}$, it holds $\E_{(x,y) \sim D^{(i)}} G_i(w^*,w,x) \Ind\{\abs{w\cdot x}< \gamma,\sign(w \cdot x) = \sign((w^*)^{(i)} \cdot x)\} = O(m\Delta/\gamma).$ 
\end{claim}
We defer the proof of \Cref{cl small margin A} and \Cref{cl small margin G} to \Cref{app proof small margin A} and \Cref{app proof small margin G}.

Next, we consider the case, where $\sign(w \cdot x) \neq \sign((w^*)^{(i)} \cdot x)$, which implies that $\err^{(i)}(w,x) - \eta = 1- \eta_i(x) - \eta \ge 0$. We give the following claim in this case.
\begin{claim}\label{cl small margin D}
     Let $i,m \in \N$ and let $(w^*)^{(i)} \in \s^{d-1}$ be a vector that realizes $D^{(i)}$ and $w^* \in \s^{d-1}$ be a vector that realizes $D^{(i+m)}$ . Denote by 
     \begin{align*}
    V_i(w^*,x ) & = (1-\eta-\eta_i(x))\Ind\{\phi(w^* \cdot x,(w^*)^{(i)} \cdot x) = 1,\abs{w^* \cdot x} < \gamma\}(\gamma-\abs{w^* \cdot x})/\gamma, \\
    U_i(w^*,x) & = (1-\eta-\eta_i(x))\Ind\{\phi(w^* \cdot x,(w^*)^{(i)} \cdot x) = -1\} (\abs{w^* \cdot x}+\gamma)/\gamma.
\end{align*}
Then for every $x \in \s^{d-1}$ such that $\abs{w \cdot x} < \gamma$ and $\sign(w \cdot x) \neq \sign((w^*)^{(i)} \cdot x)$, we have $\E_{y\sim D^{(i)}\mid x} g(w;x,y) \cdot (w-w^*) \ge 2 (\err^{(i)}(w,x) - \eta) - V_i(w^*,x )-U_i(w^*,x)$.
\end{claim}

Since $\abs{V_i},\abs{U_i} = O(1/\gamma)$ always hold, we thus upper bound the expectation of $V_i$ and $U_i$. We make the following claim.

\begin{claim}\label{cl small margin VU}
     Let $i,m \in \N$ and let $(w^*)^{(i)} \in \s^{d-1}$ be a vector that realizes $D^{(i)}$ and $w^* \in \s^{d-1}$ be a vector that realizes $D^{(i+m)}$. Then 
     \begin{align*}
         &\E_{(x,y)\sim D^{(i)}} V_i(w^*,x) \Ind\{\abs{w \cdot x} < \gamma,\sign(w \cdot x) \neq \sign((w^*)^{(i)} \cdot x)\} = O(m\Delta)\\
         &\E_{(x,y)\sim D^{(i)}} U_i(w^*,x) \Ind\{\abs{w \cdot x} < \gamma,\sign(w \cdot x) \neq \sign((w^*)^{(i)} \cdot x)\} = O(m\Delta)
     \end{align*}
\end{claim}

We defer the proof of \Cref{cl small margin D} and \Cref{cl small margin VU} to \Cref{app proof small margin D} and \Cref{app proof small margin VU} and conclude the proof of \Cref{lm gradient to error} using the claims we have established so far. Define
\begin{align*}
    F_i(w^*,w,x) &:= f_i(w^*,w,x)\Ind\{\abs{w \cdot x} \ge \gamma\}+G_i(w^*,w,x)\Ind\{\abs{w \cdot x} < \gamma,\sign(w \cdot x)= \sign((w^*)^{(i)} \cdot x)\} \\
    &+ (U_i(w^*,x)+V_i(w^*,x))\Ind\{\abs{w \cdot x} < \gamma,\sign(w \cdot x) \neq \sign((w^*)^{(i)} \cdot x)\}.
\end{align*}
By \Cref{cl large margin ex}, \Cref{cl small margin A}, and \Cref{cl small margin D}, we know that for every $w \in \s^{d-1}$ and for every $x \in \s^{d-1}$, we have
\begin{align*}
        \E_{y\sim D^{(i)}\mid x} g(w;x,y) \cdot (w-w^*) \ge 2 (\err^{(i)}(w,x) - \eta) - F_i(w^{*},w,x).   
\end{align*}
Since $\abs{f_i},\abs{G_i},\abs{U_i},\abs{V_i} = O(1/\gamma)$, we know that $\abs{F_i} = O(1/\gamma)$. Furthermore, by \Cref{cl large margin f}, \Cref{cl small margin G}, and \Cref{cl small margin VU}, we have 
\begin{align*}
    \E_{x\sim D^{(i)}_x} F_i(w^*,w,x)& = \E_{x\sim D^{(i)}_x} f_i(w^*,w,x)\Ind\{\abs{w \cdot x} \ge \gamma\}\\
    &+ \E_{x\sim D^{(i)}_x} G_i(w^*,w,x)\Ind\{\abs{w \cdot x} < \gamma,\sign(w \cdot x)= \sign((w^*)^{(i)} \cdot x)\} \\
    & + \E_{x\sim D^{(i)}_x}(U_i(w^*,x)+V_i(w^*,x))\Ind\{\abs{w \cdot x} < \gamma,\sign(w \cdot x) \neq \sign((w^*)^{(i)} \cdot x)\} \\
    & = O(m\Delta/\gamma).
\end{align*}
\end{proof}

\subsubsection{Proof of \Cref{cl large margin ex}}\label{app proof large margin ex}

\begin{proof}[Proof of \Cref{cl large margin ex}]
We start by bounding $\E_{y\sim D^{(i)}\mid x} g(w;x,y) \cdot w$.
    \begin{align*}
     \E_{y\sim D^{(i)}\mid x} g(w;x,y) \cdot w = & \left( ((1-2\eta)\sign(w\cdot x)-\E y(x)) x/\max\{\abs{w\cdot x},\gamma\}\right) \cdot w\\
       = &\sign(w \cdot x) \left((1-2\eta)\sign(w\cdot x)-(1-2\eta_i(x))\sign((w^*)^{(i)} \cdot x)\right) \\
       = &(1-2\eta)-(1-2\eta_i(x))\sign\left((w^*)^{(i)} \cdot x\right)\sign(w\cdot x) \\
       = & (1-2\eta)-(1-2\eta_i(x))\phi\left((w^*)^{(i)}\cdot x,w\cdot x\right).
    \end{align*}
We consider two cases. If $\sign\left((w^*)^{(i)} \cdot x\right)=\sign(w\cdot x)$, which implies $\phi\left((w^*)^{(i)}\cdot x,w\cdot x\right)=1$, 
then we have 
\begin{align*}
  \E_{y\sim D^{(i)}\mid x} g(w;x,y) \cdot w = 2(\eta_i(x) -\eta) = 2 (\err^{(i)}(w,x) - \eta) .
\end{align*}
If $\sign\left((w^*)^{(i)} \cdot x\right) \neq \sign(w\cdot x)$, which implies $\phi\left((w^*)^{(i)}\cdot x,w\cdot x\right)=-1$
then we have 
\begin{align*}
    \E_{y\sim D^{(i)}\mid x} g(w;x,y) \cdot w = 2 (1-\eta_i(x) -\eta) = 2 (\err^{(i)}(w,x) - \eta).
\end{align*}
Thus, it always holds that 
\begin{equation}
 \label{eq err margin}
    \E_{y\sim D^{(i)}\mid x} g(w;x,y) \cdot w = 2 (\err^{(i)}(w,x) - \eta).
\end{equation}
It remains to analyze the term with respect to $w^*$. We have 
\begin{align}\label{eq error wst}
 \E_{y\sim D^{(i)}\mid x} g(w;x,y) \cdot w^* = & \left( ((1-2\eta)\sign(w\cdot x)-\E y(x)) x/\abs{w\cdot x}\right) \cdot w^*  \notag \\
    = &\sign(w^* \cdot x) \left((1-2\eta)\sign(w\cdot x)-(1-2\eta_i(x))\sign((w^*)^{(i)} \cdot x)\right) \abs{w^* \cdot x}/ \abs{w\cdot x} \notag \\
    = & \left((1-2\eta)\phi(w\cdot x,w^* \cdot x)
    -(1-2\eta_{m+i}(x))\right) \abs{w^* \cdot x}/ \abs{w\cdot x} \notag \\
    + & \left((1-2\eta_{m+i}(x))-(1-2\eta_i(x)) \phi(w^* \cdot x, (w^*)^{(i)} \cdot x)\right)
     \abs{w^* \cdot x}/ \abs{w\cdot x} \notag \\
    & \le \left((1-2\eta_{m+i}(x))-(1-2\eta_i(x))\phi(w^* \cdot x, (w^*)^{(i)} \cdot x)\right) \abs{w^* \cdot x}/ \abs{w\cdot x}.
\end{align}
Here, the inequality holds because $(1-2\eta)\phi(w\cdot x,w^* \cdot x) \le (1-2\eta) \le (1-2\eta_i(x))$. Combine \eqref{eq err margin} and \eqref{eq error wst}, we have $\E_{y\sim D^{(i)}\mid x} g(w;x,y) \cdot (w-w^*) \ge 2 (\err^{(i)}(w,x) - \eta) - f_i(w^{*},w,x)$ since $\abs{w \cdot x} \ge \gamma$.
\end{proof}

\subsubsection{Proof of \Cref{cl large margin f}}\label{app proof large margin f}

\begin{proof}[Proof of \Cref{cl large margin f}]
     By taking the expectation over $x\sim D^{(i)}_x$, we have 
\begin{align}\label{eq large margin drift}
    &\E_{x \sim D^{(i)_x}}f_i(w^*,w,x) \Ind\{\abs{w\cdot x}\ge \gamma\} \notag \\
    =&\E_{x\sim D^{(i)}_x} \left((1-2\eta_{m+i}(x))-(1-2\eta_i(x))\phi(w^* \cdot x, (w^*)^{(i)} \cdot x) \right) \abs{w^* \cdot x}/ \abs{w\cdot x} \Ind\{\abs{w\cdot x}\ge \gamma\} \notag \\
  = &  \E_{x\sim D^{(i)}_x} \left((1-2\eta_{m+i}(x))+(1-2\eta_i(x))\right) \abs{w^* \cdot x}/ \abs{w\cdot x}\Ind\{\phi(w^* \cdot x, (w^*)^{(i)} \cdot x) = -1,\abs{w\cdot x}\ge \gamma\} \notag \\
  + &  \E_{x\sim D^{(i)}_x} \left(2\eta_i(x)-2\eta_{m+i}(x)\right) \abs{w^* \cdot x}/ \abs{w\cdot x}\Ind\{\phi(w^* \cdot x, (w^*)^{(i)} \cdot x) = 1,\abs{w\cdot x}\ge \gamma\} \notag \\
   \le &\; 2\Pr_{x\sim D^{(i)}_z}(\phi(w^* \cdot x, (w^*)^{(i)} \cdot x) = -1)(1-2\eta)/\gamma \notag \\ 
  + &\; 2\E_{x\sim D^{(i)}_x} \left(\eta_i(x)-\eta_{m+i}(x)\right) \Ind\{\phi(w^* \cdot x, (w^*)^{(i)} \cdot x) = 1, \eta_i(x)\ge \eta_{m+i}(x)\}/\gamma
\end{align}
By \Cref{cl disagreement region} and \Cref{cl rate change}, we know that $\Pr_{x\sim D^{(i)}_x}(\phi(w^* \cdot x, (w^*)^{(i)} \cdot x) = -1) \le O(\Delta m/(1-2\eta))$ and $\E_{x\sim D^{(i)}_x} \left(\eta_i(x)-\eta_{m+i}(x)\right) \Ind\{\phi(w^* \cdot x, (w^*)^{(i)} \cdot x) = 1, \eta_i(x)\ge \eta_{m+i}(x)\} \le O(\Delta m)$. Thus, by\eqref{eq large margin drift}, we have 
\begin{align*}
    \E_{(x,y) \sim D^{(i)}} f_i(w^*,w,x) \Ind\{\abs{w\cdot x}\ge \gamma\} \le O(m\Delta/\gamma).
\end{align*}
\end{proof}

\subsubsection{Proof of \Cref{cl disagreement region}}\label{app proof disagreement region}
\begin{proof}[Proof of \Cref{cl disagreement region}]
    Suppose that
$\Pr_{x\sim D^{(i)}_x}(\phi(w^* \cdot x, (w^*)^{(i)} \cdot x) = -1) \ge (1+\eta)\Delta m/ (1-2\eta) $, then
we have 
\begin{align*}
  & \left(\Pr_{(x,y) \sim D^{(i)}}- \Pr_{(x,y) \sim D^{(m+i)}}\right) \left(\phi(w^* \cdot x, (w^*)^{(i)} \cdot x) = -1, \sign(w^* \cdot x) \neq y \right)  \\
  & \ge (1-\eta) \Pr_{x\sim D^{(i)}_x}(\phi(w^* \cdot x, (w^*)^{(i)} \cdot x) = -1) - \eta \Pr_{x\sim D^{(m+1)}_x}(\phi(w^* \cdot x, (w^*)^{(i)} \cdot x) = -1) \\
  & > (1-\eta) (1+\eta)\Delta m/ (1-2\eta) -\eta (1+\eta)\Delta m/ (1-2\eta) -\eta \Delta m =\Delta m,
 \end{align*}
which contradicts with the fact that $\dtv(D^{(i)},D^{(j)}) \le m \Delta$. This implies $\Pr_{x\sim D^{(i)}_x}(\phi(w^* \cdot x, (w^*)^{(i)} \cdot x) = -1) \le O(\Delta m/(1-2\eta))$.
\end{proof}

\subsubsection{Proof of \Cref{cl rate change}}\label{app proof rate change}

\begin{proof}[Proof of \Cref{cl rate change}]
    For convenience, we denote by $A$ the set of $x$ such that $\phi(w^* \cdot x, (w^*)^{(i)} \cdot x) = 1, \eta_i(x)\ge \eta_{m+i}(x)$.
We have 
\begin{align*}
    &\E_{x\sim D^{(i)}_x} \left((\eta_i(x)-\eta_{m+i}(x)\right) \Ind\{A\}\\ 
    =&  
    \E_{x\sim D^{(i)}_x} \eta_i(x) \Ind\{A\} - \E_{x\sim D^{(m+i)}_x} \eta_{m+i}(x) \Ind\{A\} + \E_{x\sim D^{(m+i)}_x} \eta_{m+i}(x)\Ind\{A\}- \E_{x\sim D^{(i)}_i} \eta_{m+i}(x) \Ind\{A\} \notag \\
     =&  \Pr_{x \sim D^{(i)}}\left(x \in A, \sign(w^* \cdot x) \neq y\right)- \Pr_{x \sim D^{(i+m)}}\left(x \in A, \sign(w^* \cdot x) \neq y\right) + \int_x \eta_{m+i}(x)\Ind\{A\} (D^{(m+i)}(x)-D^{(i)}(x)) \notag \\
     \le &\;  m\Delta +  m\Delta \le 2m\Delta.
\end{align*}
\end{proof}

\subsubsection{Proof of \Cref{cl small margin A}}\label{app proof small margin A}

\begin{proof}[Proof of \Cref{cl small margin A}]
    we have
\begin{align*}
  &\E_{y\sim D^{(i)}\mid x} g(w;x,y) \cdot (w-w^*)   \\   =   & \left( ((1-2\eta)\sign(w\cdot x)-\E y(x)) x/\max\{\abs{w\cdot x},\gamma\} \right) \cdot (w-w^*)\\
       = &\left( ((1-2\eta)\sign(w\cdot x)-(1-2\eta_i(x))\sign((w^*)^{(i)}\cdot x) \right) (w-w^*)\cdot x/\gamma \\
       = &\; 2(\eta_i(x)-\eta)\abs{w\cdot x}/\gamma-\left( ((1-2\eta)\sign(w\cdot x)-(1-2\eta_i(x))\sign((w^*)^{(i)}\cdot x) \right) w^* \cdot x /\gamma  \\
       \ge &\; 2(\err^{(i)}(w,x) - \eta) - \left( ((1-2\eta)\sign(w\cdot x)-(1-2\eta_i(x))\sign((w^*)^{(i)}\cdot x) \right) w^* \cdot x /\gamma.
\end{align*}

We notice that 
\begin{align*}
    &\left( ((1-2\eta)\sign(w\cdot x)-(1-2\eta_i(x))\sign((w^*)^{(i)}\cdot x) \right) w^* \cdot x /\gamma \\
     = &\; \sign(w^* \cdot x) \left((1-2\eta)\sign(w\cdot x)-(1-2\eta_i(x))\sign((w^*)^{(i)} \cdot x)\right) \abs{w^* \cdot x}/ \gamma \\
     = & \left((1-2\eta) \phi(w\cdot x,w^* \cdot x)-(1-2\eta_{m+i}(x))\right) \abs{w^* \cdot x}/ \gamma \\
    + & \left((1-2\eta_{m+i}(x))-(1-2\eta_i(x))\phi((w^*)^{(i)} \cdot x,w^* \cdot x)   \right) \abs{w^* \cdot x}/ \gamma \\
     \le & \left((1-2\eta_{m+i}(x))-(1-2\eta_i(x))\phi((w^*)^{(i)} \cdot x,w^* \cdot x)   \right) \abs{w^* \cdot x}/ \gamma ,
\end{align*}
where the inequality follows $\left((1-2\eta) \phi(w\cdot x,w^* \cdot x)-(1-2\eta_{m+i}(x))\right) \abs{w^* \cdot x}/ \gamma \le 0$.
This implies that when $\abs{w \cdot x} < \gamma$ and $\sign(w \cdot x) = \sign((w^*)^{(i)} \cdot x)$, we have
\begin{align*}
     \E_{y\sim D^{(i)}\mid x} g(w;x,y) \cdot (w-w^*) \ge 2(\err^{(i)}(w,x) - \eta) - G_i(w^*,w,x).
\end{align*}
\end{proof}

\subsubsection{Proof of \Cref{cl small margin G}}\label{app proof small margin G}

\begin{proof}[Proof of \Cref{cl small margin G}]
    
To simplify the notation, we denote by $A$ the event where $\abs{w \cdot x} < \gamma$ and $\sign(w^t \cdot x) = \sign((w^*)^{(i)} \cdot x)$.
We have 
\begin{align*}
    & \E_{(x,y) \sim D^{(i)}} G_i(w^*,w,x) \Ind\{A\} \\
    = & \E_{(x,y) \sim D^{(i)}} \left((1-2\eta_{m+i}(x))-(1-2\eta_i(x))\phi((w^*)^{(i)} \cdot x,w^* \cdot x)   \right) \abs{w^* \cdot x}/ \gamma \Ind\{A\} \\
  = &  \E_{x\sim D^{(i)}} \left((1-2\eta_{m+i}(x))+(1-2\eta_i(x))\right) \abs{w^* \cdot x'}/ \gamma\Ind\{\phi(w^* \cdot x, (w^*)^{(i)} \cdot x) = -1,A\} \notag \\
  + &  \E_{x\sim D^{(i)}} \left((2\eta_i(x)-2\eta_{m+i}(x)\right) \abs{w^* \cdot x'}/ \gamma\Ind\{\phi(w^* \cdot x, (w^*)^{(i)} \cdot x) = 1,A\} \notag \\
   \le\; & 2\Pr_{x\sim D^{(i)}}(\phi(w^* \cdot x, (w^*)^{(i)} \cdot x) = -1)(1-2\eta)/\gamma \notag \\ 
  + &\; 2\E_{x\sim D^{(i)}} \left((\eta_i(x)-\eta_{m+i}(x)\right) \Ind\{\phi(w^* \cdot x, (w^*)^{(i)} \cdot x) = 1, \eta_i(x)\ge \eta_{m+i}(x))/\gamma\} \le O(m\Delta/\gamma).
\end{align*}
Here, the last inequality follows \Cref{cl disagreement region} and \Cref{cl rate change}.
\end{proof}

\subsubsection{Proof of \Cref{cl small margin D}}\label{app proof small margin D}

\begin{proof}[Proof of \Cref{cl small margin D}]
    We have
\begin{align*}
    &\E_{y\sim D^{(i)}} g(w;x,y) \cdot (w-w^*)   \\   =   & \left( ((1-2\eta)\sign(w\cdot x)-\E y(x)) x/\max\{\abs{w\cdot x},\gamma\}\right) \cdot (w-w^*)\\
       = &\left( ((1-2\eta)\sign(w\cdot x)-(1-2\eta_i(x))\sign((w^*)^{(i)}\cdot x) \right) (w-w^*)\cdot x/\gamma \\
       = &\; 2(1-\eta_i - \eta) \left(\abs{w\cdot x}/\gamma\right) - \sign(w^* \cdot x) \left((1-2\eta)\sign(w\cdot x)-(1-2\eta_i(x))\sign((w^*)^{(i)} \cdot x)\right) \abs{w^* \cdot x}/ \gamma \\
       \ge & - \sign(w^* \cdot x) \left((1-2\eta)\sign(w\cdot x)-(1-2\eta_i(x))\sign((w^*)^{(i)} \cdot x)\right) \abs{w^* \cdot x}/ \gamma \\
       = &\; 2 \phi(w^* \cdot x,(w^*)^{(i)} \cdot x)
        (1-\eta-\eta_i(x)) \abs{w^* \cdot x}/ \gamma
\end{align*}
Denote by $K_i(w^*,x):= \phi(w^* \cdot x,(w^*)^{(i)} \cdot x)
        (1-\eta-\eta_i(x)) \abs{w^* \cdot x}/ \gamma$ and we will show that

\begin{align*}
  K_i(w^*,x) &\ge    (\err^{(i)}(w,x)-\eta) - V_i(w^*,x )/2-U_i(w^*,x)/2.
\end{align*}
To see this, we expand $K_i(w^*,x)$ as follows
\begin{align*}
    K_i(w^*,x) =  &  (1-\eta-\eta_i(x))\Ind\{\phi(w^* \cdot x,(w^*)^{(i)} \cdot x) = 1,\abs{w^* \cdot x} \ge \gamma\} \abs{w^* \cdot x} / \gamma \\
       + &  (1-\eta-\eta_i(x))\Ind\{\phi(w^* \cdot x,(w^*)^{(i)} \cdot x) = 1,\abs{w^* \cdot x} < \gamma\} \abs{w^* \cdot x} / \gamma \\
       - &  (1-\eta-\eta_i(x))\Ind\{\phi(w^* \cdot x,(w^*)^{(i)} \cdot x) = -1\} \abs{w^* \cdot x}/\gamma.
\end{align*}
We analyze the three terms separately.
First, when $\abs{w^*\cdot x} \ge \gamma$, we have 
\begin{align}\label{eq small margin q1}
\Ind\{\phi(w^* \cdot x,(w^*)^{(i)} \cdot x) = 1,\abs{w^* \cdot x} \ge \gamma\} \abs{w^* \cdot x} / \gamma   \ge \Ind\{\phi(w^* \cdot x,(w^*)^{(i)} \cdot x) = 1,\abs{w^* \cdot x} \ge \gamma\}.
\end{align}
Next, we decompose
\begin{align}\label{eq small margin q2}
& \Ind\{\phi(w^* \cdot x,(w^*)^{(i)} \cdot x) = 1,\abs{w^* \cdot x} < \gamma\}\abs{w^*\cdot x}/\gamma \notag
\\ 
=\;&\Ind\{\phi(w^* \cdot x,(w^*)^{(i)} \cdot x)= 1, 
\abs{w^* \cdot x} < \gamma\}(1-(\gamma-\abs{w^*\cdot x})/\gamma). 
\end{align}
Since
\begin{align*}
    \Ind\{\phi(w^* \cdot x,(w^*)^{(i)} \cdot x) = 1,\abs{w^* \cdot x} < \gamma\} + \Ind\{\phi(w^* \cdot x,(w^*)^{(i)} \cdot x) = 1,\abs{w^* \cdot x} \ge \gamma\} + \Ind\{\phi(w^* \cdot x,(w^*)^{(i)} \cdot x) = -1\} =1,
\end{align*}
by \eqref{eq small margin q1} and \eqref{eq small margin q2}, we have
\begin{align*}
    K_i(w^*,x) \ge &  (1-\eta-\eta_i(x)) \\
             - & (1-\eta-\eta_i(x))\Ind\{\phi(w^* \cdot x,(w^*)^{(i)} \cdot x) = 1,\abs{w^* \cdot x} < \gamma\}(\gamma-\abs{w^* \cdot x})/\gamma \\
          - &  (1-\eta-\eta_i(x))\Ind\{\phi(w^* \cdot x,(w^*)^{(i)} \cdot x) = -1\} (\abs{w^* \cdot x}+\gamma)/\gamma \\
          = & (\err^{(i)}(w,x)) - V_i(w^*,x) - U_i(w^*,x).
\end{align*}

\end{proof}

\subsubsection{Proof of \Cref{cl small margin VU}}\label{app proof small margin VU}

\begin{proof}[Proof of \Cref{cl small margin VU}]
    To simplify the notation, we denote by $B$ the event where $\abs{w \cdot x} < \gamma$ and $\sign(w \cdot x) \neq \sign((w^*)^{(i)} \cdot x)$.
We have 
\begin{align*}
    \E_{(x,y)\sim D^{(i)}} V_i \Ind\{B\} & = \E_{(x,y)\sim D^{(i)}} (1-\eta-\eta_i(x))\Ind\{\phi(w^* \cdot x,(w^*)^{(i)} \cdot x) = 1,\abs{w^* \cdot x} < \gamma, B\}(\gamma-\abs{w^* \cdot x})/\gamma \\
    & \le  \E_{(x,y)\sim D^{(i)}} (1-\eta-\eta_i(x))\Ind\{\phi(w^* \cdot x,(w^*)^{(i)} \cdot x) = 1,\abs{w^* \cdot x} < \gamma, B\}\\
    & \le \Pr_{(x,y) \sim D^{(i)}}(\abs{w^* \cdot x} < \gamma) \le O(m\Delta).
\end{align*}

\begin{align*}
    \E_{(x,y)\sim D^{(i)}} U_i \Ind\{B\} & = (1-\eta-\eta_i(x))\Ind\{\phi(w^* \cdot x,(w^*)^{(i)} \cdot x) = -1, B\} (\abs{w^* \cdot x}+\gamma)/\gamma \\
    & \le  2\E_{(x,y)\sim D^{(i)}} (1-\eta-\eta_i(x))\Ind\{\phi(w^* \cdot x,(w^*)^{(i)} \cdot x) = -1,\abs{w^* \cdot x} < \gamma, B\}\\
    & \le 2\E_{(x,y)\sim D^{(i)}} (1-2\eta_i(x))\Ind\{\phi(w^* \cdot x,(w^*)^{(i)} \cdot x) = -1\}
\end{align*}
We notice that 
\begin{align*}
    \err^{(i)}(w^*) & = \E_{(x,y)\sim D^{(i)}} \eta_i(x) + \E_{(x,y)\sim D^{(i)}} (1-2\eta_i(x))\Ind\{\phi(w^* \cdot x,(w^*)^{(i)} \cdot x) = -1\} \\
    & = \err^{(i)}((w^*)^{(i)}) + \E_{(x,y)\sim D^{(i)}} (1-2\eta_i(x))\Ind\{\phi(w^* \cdot x,(w^*)^{(i)} \cdot x) = -1\}
\end{align*}
This implies that 
\begin{align*}
    \E_{(x,y)\sim D^{(i)}} (1-2\eta_i(x))\Ind\{\phi(w^* \cdot x,(w^*)^{(i)} \cdot x) = -1\} &= \err^{(i)}(w^*) - \err^{(i)}((w^*)^{(i)}) \\
    &\le \err^{(m+1)}(w^*) - \err^{(m+1)}((w^*)^{(i)}) + 2m\Delta \le 2m\Delta.
\end{align*}
Here, the first inequality follows the assumption on the drift rate, and the second inequality follows the optimality of $w^*$. This implies $\E_{(x,y)\sim D^{(i)}} U_i \Ind\{B\} \le O(m\Delta)$.
\end{proof}

\subsection{Proof of \Cref{lm label concentration}}\label{app label concentration}

We restate the lemma below. 

\begin{lemma}[Restatement of \Cref{lm label concentration}]
For $t,i>0$ and $w^* \in \s^{d-1}$, 
define $$\xi_i: = \left(g(w^{(i)};x,y) - \E_{(x,y)\sim D^{(t+i)}} g(w^{(i)};x,y)\right) \cdot(w^{(i)}-w^*) \;,$$ 
$(x,y)\sim D^{(t+i)}.$. For every $T>0$, with probability at least $1-\delta/2$, it holds $\sum_{i=1}^T \xi_i \le \gamma^{-1}\sqrt{T}\log(1/\delta).$
\end{lemma}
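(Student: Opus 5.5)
The plan is to treat $\{\xi_i\}_{i=1}^T$ as a martingale difference sequence and apply the Azuma--Hoeffding inequality. Let $\mathcal{F}_{i-1}$ be the $\sigma$-algebra generated by the first $i-1$ examples $(x^{(1)},y^{(1)}),\dots,(x^{(i-1)},y^{(i-1)})$ fed to \Cref{alg dirft subroutine}. The iterate $w^{(i)}$ is a deterministic function of these examples, so it is $\mathcal{F}_{i-1}$-measurable. The vector $w^*$ is fixed in advance: it is the vector realizing $D^{(t+T)}$ and does not depend on the sample.

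The $i$-th example $(x,y)\sim D^{(t+i)}$ is drawn independently of $\mathcal{F}_{i-1}$. Hence, conditioning on $\mathcal{F}_{i-1}$ and treating $w^{(i)}$ as fixed, we get $\E[\xi_i\mid\mathcal{F}_{i-1}]=0$. Here it is essential that the expectation in the definition of $\xi_i$ is taken over a fresh draw with $w^{(i)}$ held fixed, which is exactly how $\xi_i$ is defined. This makes $S_k=\sum_{i\le k}\xi_i$ a martingale.

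Next comes an almost-sure bound on the increments. We have $\norm{g(w;x,y)}\le \abs{(1-2\eta)\sign(w\cdot x)-y}\,\norm{x}/\max\{\abs{w\cdot x},\gamma\}\le 2/\gamma$, since $x\in\s^{d-1}$. After projection, $w^{(i)}$ lies in $\B(1)$, and $w^*\in\s^{d-1}$, so $\norm{w^{(i)}-w^*}\le 2$. Both $g$ and its conditional mean are bounded by $2/\gamma$, so $\abs{\xi_i}\le \norm{g-\E g}\,\norm{w^{(i)}-w^*}\le (4/\gamma)\cdot 2=8/\gamma$. The sharper pairing $\abs{g\cdot(w^{(i)}-w^*)}$ can be handled the same way; only the order $1/\gamma$ matters.

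Azuma--Hoeffding with increments bounded by $c=O(1/\gamma)$ then gives
\[
\Pr\Big[\sum_{i=1}^T\xi_i\ge s\Big]\le \exp\big(-s^2/(2Tc^2)\big).
\]
Setting $s=c\sqrt{2T\log(2/\delta)}$ yields, with probability at least $1-\delta/2$, $\sum_i\xi_i=O(\gamma^{-1}\sqrt{T\log(1/\delta)})$. This is at most $\gamma^{-1}\sqrt{T}\log(1/\delta)$ up to the absolute constant, which is absorbed into the $O(\cdot)$ used downstream, or removed by rescaling for $\delta$ small.

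The main point needing care is the measurability and independence step rather than the inequality itself. The paper's Hoeffding fact is stated for independent variables, but the $\xi_i$ are not independent, since $w^{(i)}$ depends on past samples. The proof must therefore invoke the martingale version (Azuma), and must note that $w^*$ is fixed rather than data-dependent. The non-identical distributions $D^{(t+i)}$ cause no issue, because each $\xi_i$ is centered with respect to its own round's distribution.
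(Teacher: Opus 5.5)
Your proposal is correct and follows essentially the paper's argument. The paper bounds $\Pr[\sum_i \xi_i \ge \alpha]$ with the exponential-moment (Chernoff) method, conditioning along the filtration and using that each $\xi_i$ is bounded and hence $O(1/\gamma)$-sub-Gaussian, which is the standard proof of the Azuma--Hoeffding inequality you invoke. Your bound is slightly sharper ($\sqrt{\log(1/\delta)}$ rather than $\log(1/\delta)$), and the paper treats the absolute constant just as loosely as you do, which is harmless since only the $O(\gamma^{-1}\sqrt{T})$ order is used downstream.
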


\begin{proof}[Proof of \Cref{lm label concentration}]

    By Markov's inequality, we have 
    \begin{align*}
        \Pr\left( \sum_{i=1}^T \xi_i \ge \alpha \right) & = \Pr\left( \exp\left( \sum_{i=1}^T \xi_i \right) \ge \exp( \alpha ) \right)  \le \E \exp\left( \sum_{i=1}^T \xi_i \right) \exp(-\alpha) \\
        & = \E \prod_{i=1}^T \exp(\xi_i \mid \mathcal{F}_i) \exp(-\alpha)  
         \le \exp(\sum_{t=1}^T\lambda_t^2/\gamma^2-\alpha) \le \delta.
    \end{align*}
Here, $\mathcal{F}_i, i \in [T]$ is the filtration.
In the last inequality, we use the fact that $\xi_t$ is $O(1/\gamma)$-subgaussian and $\alpha = \sqrt{T}\log(1/\delta)/\gamma$.
\end{proof}

\subsection{Omitted Details for Learning Drifting Halfspace under RCN}\label{app RCN}

We will establish the following result. 

\begin{theorem}[Restatement of \Cref{th rcn drift}]
    Consider the problem of learning drifting halfspaces with $\eta$-RCN. There is an algorithm $\A$ such that for any time step $T=\tilde\Omega(\Delta^{-2/3})$, $\A$ runs in $\poly(d,1/\gamma,\Delta)$ time and outputs a hypothesis $\hat{h}^{(T)}$ such that with probability at least $9/10$, $\err^{(T)}(\hat{h}^{(T)}) \le \opt_T+ \tilde O(\Delta^{1/3}/\gamma)$.
\end{theorem}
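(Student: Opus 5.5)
The plan is to reduce to \Cref{th massart drift} by guessing the effective noise rate within each epoch. First I would show that under RCN the optimal error is nearly constant inside an epoch. Fix an epoch of length $W$ and its last step $T'$, and let $\eta^* := \eta_{T'}$ be the RCN rate of $D^{(T')}$. For any step $t$ in the epoch, $\opt_t = \eta_t$, since the target halfspace is optimal under RCN with rate below $1/2$. Because $\dtv(D^{(t)},D^{(T')}) \le \Delta W$, we get $\abs{\eta_t - \eta^*} = \abs{\opt_t - \opt_{T'}} \le \Delta W$. Hence every $D^{(t)}$ in the epoch is realized by a halfspace with noise rate at most $\eta^* + \Delta W$, and $\opt_t \ge \eta^* - \Delta W$.

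Next I would rerun the Massart analysis with the parameter $\eta' := \eta^* + \Delta W$ in place of $\eta$. Nothing in \Cref{lm potential decreasing}, \Cref{lm gradient to error}, \Cref{lm label concentration}, or the selection step uses $\eta$ beyond the pointwise bound $\eta_t(x) \le \eta$ inside the window. Checking \Cref{cl large margin ex}, the only inequality needed is $(1-2\eta') \le (1-2\eta_t(x))$, which holds for every $t$ in the window. So running \textsc{DriftPerceptron} with input $\eta'$ gives a hypothesis with error at most
\[
\eta' + \tilde O(\Delta^{1/3}/\gamma) \le \opt_t + 2\Delta W + \tilde O(\Delta^{1/3}/\gamma).
\]
Since $\Delta W = \tilde O(\Delta^{1/3})$, this is $\opt_t + \tilde O(\Delta^{1/3}/\gamma)$.

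Since $\eta^*$ is unknown, I would grid it. Take candidates $\eta_k = k\Delta^{1/3}$ for $k = 0,\dots,\lceil 1/(2\Delta^{1/3}) \rceil$, which is a polynomial number of values. In each epoch, run \textsc{DriftPerceptron} on $S_1$ once for each $\eta_k$, producing trajectories of candidate halfspaces. Then pool all the candidate halfspaces across all $k$ and all iterates, and select the one with minimal empirical error on $S_2$.

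The candidate $\eta_k$ lying in $[\eta^*+\Delta W,\; \eta^*+\Delta W+\Delta^{1/3}]$ satisfies the Massart hypothesis with only an extra $\Delta^{1/3}$ of slack. So its trajectory contains an iterate with error $\opt + \tilde O(\Delta^{1/3}/\gamma)$. The selection argument from the proof of \Cref{th massart drift} then carries over: Hoeffding plus a union bound over $\poly(1/\Delta)$ candidates costs only a log factor in $W$, and the drift between $S_2$ and the epoch's end adds at most $\Delta W$. Errors then transfer to the next epoch's steps with another $\Delta W$ loss, exactly as before.

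The main obstacle is verifying that the regret-to-error lemma tolerates using a parameter $\eta'$ that is not the true global bound. In particular, the claims bounding $U_i$ and $V_i$ use $1-\eta-\eta_i(x) \ge 0$ and \Cref{cl disagreement region}. The $1-2\eta$ factor in \Cref{cl disagreement region} should use the true rate, which is fine because $\eta < 1/2$ is fixed. I would also double-check that a slightly wrong $\eta'$ (too large by $\Delta^{1/3}$) only shifts the bound additively, since the left side of \eqref{eq err margin} becomes $2(\err - \eta')$.
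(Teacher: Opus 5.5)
Your proposal is correct and follows essentially the same route as the paper. Both arguments bound the variation of $\opt_t$ (which equals the RCN rate) within an epoch by $O(\Delta W)$, grid over the noise parameter at resolution $\tilde\Theta(\Delta^{1/3})$, run \textsc{DriftPerceptron} once per candidate, and select the hypothesis with smallest empirical error on $S_2$. Your explicit check that the Massart analysis only needs the pointwise bound $\eta_t(x)\le\eta'$ inside the window is a detail the paper leaves implicit when it invokes \Cref{th massart drift} with the ``correct'' $\eta$.
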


\begin{proof}[Proof of \Cref{th rcn drift}]
 To begin with, we show that within a certain window size $W$, the change of $\opt_t$ is small. Let $(w^*)^t$ and $(w^*)^{(t+1)}$ be the unit vectors that realize $D^{(t)}$ and $D^{(t+1)}$. Then we have 
 \begin{align*}
     \err^{(t+1)}((w^*)^t) & = (1-\opt_{t+1})\Pr_{(x,y)\sim D^{(t+1)}}((w^*)^t\cdot x)((w^*)^{t+1}\cdot x)<0) \\
     & + \opt_{t+1}\Pr_{(x,y)\sim D^{(t+1)}}((w^*)^t\cdot x)((w^*)^{t+1}\cdot x)>0) \\
     & = \opt_{t+1} + (1-2\opt_{t+1})\Pr_{(x,y)\sim D^{(t+1)}}((w^*)^t\cdot x)((w^*)^{t+1}\cdot x)<0) \\
     & \ge \opt_{t+1}.
 \end{align*}
This implies that $\opt_{t+1} \le \opt_t + \Delta$, otherwise $\err^{(t+1)}((w^*)^t) \ge \err^{(t)}((w^*)^t)+\Delta$, which gives a contradiction.
By symmetry, we also have $\opt_{t} \le \opt_{t+1} + \Delta$. This implies $\abs{\opt_t-\opt_{t+1}} \le \Delta$.
  
 This implies that for any $t>0$ and $T=\tilde O(\Delta^{-2/3})$, there is some $\eta$ such that for every $i \in {T}$, $\eta-\Delta T\le \opt_{t+i} \le \eta$. This implies that by repeating \Cref{alg dirft massart} with $\eta= \Delta T j/2$ for $j=1,\dots,2(\Delta T)^{-1}$, we are able to get such a desired $\eta$. By \Cref{th massart drift}, with the correct choice of $\eta$, we are able to get error $\err^{(t)}(\hat{h}^{(t)}) \le \eta + \Tilde{O}(\Delta^{1/3}/\gamma) \le \opt_t + \Tilde{O}(\Delta^{1/3}/\gamma).$
\end{proof}

\begin{algorithm}[h]
		\caption{\textsc{DriftedRCN} (Learning Drifting Halfspaces with Random Classification  Noise)}\label{alg dirft RCN}
		\begin{algorithmic} [1]
\State \textbf{Input:} $\eta \in (0,1/2):$ noise level, $\gamma \in (0,1):$ margin parameter, $\Delta>0$ drift rate
\State \textbf{Output:} Predicted label $\hat{y}^{(t)} \in \{\pm 1\}$ for $x^{(t)}$, $t\ge 1$
\State Let $W =\tilde\Theta (\Delta^{-2/3}), i=1$, $\hat{h}^{0}(x) \equiv 1$
\For{$t=1,2,\dots$}
\State Output hypothesis $h^{(i-1)}$ and receive $(x^{(t)},y^{(t)})$.
\State $S^{(i)} \gets S^{(i)} \cup \{(x^{(t)}, y^{(t)})\}$
\If{$t = iW$}
\For{$j \in [(\Delta W)^{-1}]$}
\State $\eta_j=\Delta W j/2$, $h^{(i)}_j \gets \textsc{DriftPerceptron}(\eta_j,\gamma,S^{(i)},\gamma/W)$
\EndFor
\State $h^{(i)} \gets \min \hat{\err}(h^{(i)}_j)$, $S^{(i)} \gets \emptyset$
\State $i \gets i+1$
\EndIf
\EndFor
\end{algorithmic}
\end{algorithm}

\subsection{Proof of \Cref{th realizable drift}}\label{app realizable}

We restate the theorem below. 

\begin{theorem}[Restatement of \Cref{th realizable drift}]
    Consider the problem of learning drifting halfspaces. There is an algorithm $\A$ such that for any time step $T=\tilde\Omega((\gamma\Delta)^{-1/2})$, $\A$ runs in $\poly(d,1/\gamma,\Delta)$ time and outputs a hypothesis $\hat{h}^{(T)}$ such that with probability at least $9/10$, $\err^{(T)}(\hat{h}^{(T)}) = \tilde O(\sqrt{\Delta}\gamma^{-3/2})$.
\end{theorem}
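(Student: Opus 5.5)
We mirror the structure of \Cref{alg dirft massart}, with three changes: the epoch length becomes $W=\tilde\Theta((\gamma\Delta)^{-1/2})$, the gradient becomes the unweighted perceptron vector $g(w;x,y)=(\sign(w\cdot x)-y)x$, and the step size is $\mu=\Theta(\gamma)$. Each epoch is split in two halves, as in \Cref{alg dirft subroutine}. On $S_1$ we run projected gradient descent starting from $w^{(0)}=e_1$. On $S_2$ we pick the iterate with the smallest empirical error. The epoch-to-epoch transfer ($j\Delta\le W\Delta$ added error) and the Hoeffding-plus-union-bound selection step are copied verbatim from the proof of \Cref{th massart drift}. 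Both cost $O(W\Delta)+\tilde O(W^{-1/2})$, which is fine provided we refine the selection step with Bernstein's inequality to get $\tilde O(\sqrt{\mathrm{err}/W}+1/W)$ when errors are small. So the work is bounding the average error of the trajectory.

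\textbf{Regret.} Let $w^*$ realize $D^{(t+T)}$ with $T=W/2$, normalized to have margin $\gamma$. Repeating the computation in \Cref{lm potential decreasing}, but now keeping the $\norm{g}^2$ term instead of bounding it by $4/\gamma^2$, gives
\begin{align*}
\sum_{j=1}^T \E\, g(w^{(j)})\cdot(w^{(j)}-w^*) \le \frac{1}{2\mu}+\frac{\mu}{2}\sum_{j=1}^T\norm{g(w^{(j)};x_j,y_j)}^2-\sum_j\xi_j .
\end{align*}
Here $\norm{g}^2=4\Ind\{\sign(w\cdot x)\ne y\}$, and its conditional expectation is $4\,\err^{(t+j)}(w^{(j)})$.

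\textbf{Regret to error (realizable analogue of \Cref{lm gradient to error}).} For each $x$, write $y=\sign((w^*)^{(i)}\cdot x)$. If $w$ classifies $x$ correctly, then $g=0$. If $w$ errs, then $g\cdot w=2|w\cdot x|\ge0$ and $-g\cdot w^*=2\,\sign(w^*\cdot x)\,y\,|w^*\cdot x|$. This quantity is at least $2\gamma$ when $w^*$ agrees with $(w^*)^{(i)}$ on $x$, and at least $-2$ otherwise. Hence
\begin{align*}
\E\, g(w)\cdot(w-w^*)\ge 2\gamma\,\err^{(i)}(w)-(2+2\gamma)\Pr_{x\sim D^{(i)}_x}[\phi(w^*\cdot x,(w^*)^{(i)}\cdot x)=-1].
\end{align*}
The last probability is at most $(T-j)\Delta$ by \Cref{cl disagreement region} with $\eta=0$, since it is a disagreement between realizable distributions $m=T-j$ steps apart.

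\textbf{Combining.} Summing over $j$ gives
\begin{align*}
2\gamma\sum_j\err_j\le \frac{1}{2\mu}+2\mu\sum_j\err_j+O(T^2\Delta)+\Big|\sum_j\xi_j\Big|+\mu\,\Big|\sum_j(\norm{g_j}^2-4\err_j)\Big|.
\end{align*}
With $\mu=\gamma/2$, the $2\mu\sum\err$ term is absorbed into the left-hand side, since $2\gamma-\gamma=\gamma$ remains. The martingale terms are where we expect the main obstacle. The analysis of \Cref{lm label concentration} only yields $\sqrt{T}/\gamma$, which would ruin the rate. Instead we apply a Freedman/Bernstein martingale inequality. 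The increments of both martingales are bounded by $O(1)$, because $|g\cdot(w-w^*)|\le 2\cdot 2$ and $\norm{g}^2\le4$. Their conditional variances are at most $O(\err_j)$, because $g=0$ whenever $w^{(j)}$ is correct. This gives deviations $\tilde O\big(\sqrt{\sum_j\err_j}+1\big)$. Solving the resulting quadratic inequality in $E:=\sum_j\err_j$ yields
\begin{align*}
\gamma E\le O(1/\gamma)+O(T^2\Delta)+\tilde O(\sqrt{E}),
\end{align*}
so $E/T=\tilde O(1/(\gamma^2T)+T\Delta/\gamma)$. This is balanced by $T=(\gamma\Delta)^{-1/2}$, giving average error $\tilde O(\sqrt{\Delta}\gamma^{-3/2})$. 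Some iterate then achieves this error, and the selection step, whose cost is $O(W\Delta)+\tilde O(1/(\gamma^2 W)+\ldots)$ and is dominated by the same quantity, finishes the proof. Throughout we must check that $w^*$ can be taken of norm at most $1$ inside the projection ball with margin $\gamma$. This holds by the margin assumption on $D^{(t+T)}$, since $x\in\s^{d-1}$.
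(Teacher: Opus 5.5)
Your proposal is correct and follows essentially the paper's argument. It uses the same algorithm and parameters ($W=\tilde\Theta((\gamma\Delta)^{-1/2})$, the unweighted perceptron gradient, $\mu=\Theta(\gamma)$). It uses $\norm{g}^2=4\,\Ind\{\text{mistake}\}$ and the margin of $w^*$ to lower-bound the expected progress by $\gamma\,\err$ minus an $O(T\Delta)$ drift term, and it uses a Bernstein/Freedman bound whose variance is proportional to the error. The paper instead packages everything into one per-step quantity $I_i=\Ind\{h_{w^{(i)}}(x)\neq y\}(2\mu\, y\,x\cdot w^*-\mu^2)$, after discarding the nonnegative $-y\,x\cdot w^{(i)}$ part, rather than using your two martingales; this difference is cosmetic.

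One small slip needs fixing. The margin $|w^*\cdot x|\ge\gamma$ holds only $D^{(t+T)}_x$-almost surely, not under $D^{(i)}_x$. So your claim that $-g\cdot w^*\ge 2\gamma$ whenever $w^*$ agrees with $(w^*)^{(i)}$ fails on $\{|w^*\cdot x|<\gamma\}$. As the paper does, split instead on the event $y\,(w^*\cdot x)<\gamma$. This event is null under $D^{(t+T)}$, hence has $D^{(i)}$-mass at most $(T-j)\Delta$, so it only adds another $O(T^2\Delta)$ after summing.
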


\begin{algorithm}[h]
		\caption{\textsc{DriftedHalfspace} (Learning Drifting Realizable Halfspaces)}\label{alg dirft realizavle}
		\begin{algorithmic} [1]
\State \textbf{Input:} $\gamma \in (0,1):$ margin parameter, $\Delta>0$ drift rate
\State \textbf{Output:} Predicted label $\hat{y}^{(t)} \in \{\pm 1\}$ for $x^{(t)}$, $t\ge 1$
\State Let $W = \tilde\Theta( (\gamma\Delta)^{-1/2}), i=1$, $\hat{h}^{0}(x) \equiv 1$
\For{$t=1,2,\dots$}
\State Output hypothesis $h^{(i-1)}$ and receive $(x^{(t)},y^{(t)})$
\State $S^{(i)} \gets S^{(i)} \cup \{(x^{(t)}, y^{(t)})\}$
\If{$t = iW$}
\State $h^{(i)} \gets \textsc{RealizablePerceptron}(\gamma,S^{(i)},\gamma)$, $S^{(i)} \gets \emptyset$
\State $i \gets i+1$
\EndIf
\EndFor

\end{algorithmic}
\end{algorithm}

\begin{algorithm}[h]
		\caption{\textsc{RealizablePerceptron} (Subrountine for Learning a Single Halfspace over a Dataset)}\label{alg dirft subroutine realizable}
		\begin{algorithmic} [1]
\State \textbf{Input:}  $\gamma \in (0,1):$ margin parameter, 
$S = \{(x^{(i)},y^{(i)})\}_{i=1}^m$ a dataset of $2m$ labeled examples,$\mu>0$ step size,
\State \textbf{Output:} $\hat{h} = \sign(\hat{w}\cdot x): \R^d \to \{\pm 1\}$
\State $w^{(0)} = e_1$
\State Split $S = S_1 \cup S_2$, where $S_1 = \{(x^{(i)},y^{(i)})\}_{i=1}^{m/2}, S_2 = S\setminus S_1$
\For{$i=1,2,\dots,m$}
\State $g(w^{(i)},x,y) \gets (\sign(w^{(i)}\cdot x)-y) x$
\State $w^{(i+1)} = \proj_{\B(1)}(w^{(i)} - \mu g_t(w^{(i)},x,y))$, $\hat{h}_{i+1}(x) = \sign(w^{(i+1)} \cdot x)$
\EndFor
\State Return $\hat{h} = \argmin_{\hat{h}_{i}} \hat{\err}(\hat{h}_{i})$, where $\hat{\err}(\hat{h}_{i}) = \frac{1}{\card{S_2}}\sum_{(x,y)\in S_2}\Ind\{\hat{h}_{i}(x) \neq y\}$.

\end{algorithmic}
\end{algorithm}

\begin{proof}[Proof of \Cref{th realizable drift}]
We will show that for $t=iW$, $\err^{(t)}(h^{(i)})\le \tilde{O}(\Delta^{1/2}/\gamma^{3/2})$.
Suppose this is correct, then for $t=iW+j$, $j \le W$, we have 
\begin{align*}
    \err^{(t)}(h^{(i)}) &= \Pr_{(x,y)\sim D^{(t)}}(h^{(i)}(x) \neq y)
    \le \Pr_{(x,y)\sim D^{(iW)}}(h^{(i)}(x) \neq y) + j\Delta \\
    & \le  \Tilde{O}(\Delta^{1/2}/\gamma^{3/2}) + \Delta^{1/2}/\gamma = \Tilde{O}(\Delta^{1/2}/\gamma^{3/2}).
\end{align*}
To do this, we will analyze \Cref{alg dirft subroutine realizable} with $t=iW$. Let $w^{(i)}$ be the vector in the $i$th round of update in \Cref{alg dirft subroutine realizable} and let $w^*$ be the unit vector that realizes $D^{(t+T)}$, where $T=W/2$.
By the contraction of the projection operator, we have 
    \begin{align*}
        \norm{w^{(i+1)}-w^*}^2 & \le \norm{w^{(i)}-w^* - \mu g(w^{(i)};x,y)}^2 \notag \\
        & \le \norm{w^{(i)}-w^* }^2 + \mu^2 \norm{g(w^{(i)};x,y)}^2 - 2 \mu g(w^{(i)};x,y)  \cdot (w^{(i)}-w^*) \notag \\
        &  = \norm{w^{(i)}-w^* }^2 + \mu^2\Ind\{h_{w^{(i)}}(x)\neq y\}
         + 2 \mu \Ind\{h_{w^{(i)}}(x)\neq y\}  yx\cdot (w^{(i)}-w^*) 
    \end{align*}
Rearrange the inequality we obtain that
\begin{align*}
   \Ind\{h_{w^{(i)}}(x)\neq y\} \left( 2 \mu   yx\cdot(w^*-w^{(i)})   - \mu^2\right) \le \norm{w^{(i)}-w^* }^2-\norm{w^{(i+1)}-w^* }^2,
\end{align*}
which implies
\begin{align*}
    \Ind\{h_{w^{(i)}}(x)\neq y\} \left( 2 \mu   yx\cdot w^*   - \mu^2 \right) \le \norm{w^{(i)}-w^* }^2-\norm{w^{(i+1)}-w^* }^2
\end{align*}
Denote by $I_i:=\Ind\{h_{w^{(i)}}(x)\neq y\} \left( 2 \mu   yx\cdot w^* -\mu^2\right)$ for $i \in [T]$.
We notice that for every $w^{(i)} \in \s^{d-1}$,
\begin{align*}
    \E_{(x,y)\sim D^{(t+i)}} I_i & = \E_{(x,y)\sim D^{(t+i)}} \Ind\{h_{w^{(i)}}(x)\neq y\}\Ind\{yx\cdot w^* \ge \gamma\} \left( 2 \mu   yx\cdot w^* -\mu^2\right) \\
    & + \E_{(x,y)\sim D^{(t+i)}} \Ind\{h_{w^{(i)}}(x)\neq y\}\Ind\{yx\cdot w^* < \gamma\} \left( 2 \mu   yx\cdot w^* -\mu^2\right) \\
    & \ge \E_{(x,y)\sim D^{(t+i)}} \Ind\{h_{w^{(i)}}(x)\neq y\}\Ind\{yx\cdot w^* \ge \gamma\} \mu(2\gamma-\mu) \\
    & + \E_{(x,y)\sim D^{(t+i)}} \Ind\{h_{w^{(i)}}(x)\neq y\}\Ind\{yx\cdot w^* < \gamma\} \left( 2 \mu   yx\cdot w^* -\mu^2\right) \\
    & = \E_{(x,y)\sim D^{(t+i)}} \Ind\{h_{w^{(i)}}(x)\neq y\} \mu(2\gamma-\mu) \\
    & - \E_{(x,y)\sim D^{(t+i)}} \Ind\{h_{w^{(i)}}(x)\neq y\}\Ind\{yx\cdot w^* < \gamma\} \mu(2\gamma-\mu) \\
    & + \E_{(x,y)\sim D^{(t+i)}} \Ind\{h_{w^{(i)}}(x)\neq y\}\Ind\{yx\cdot w^* < \gamma\} \left( 2 \mu   yx\cdot w^* -\mu^2\right)
\end{align*}
We upper bound the three terms separately. For the first term, we have
\begin{align*}
    \E_{(x,y)\sim D^{(t+i)}} \Ind\{h_{w^{(i)}}(x)\neq y\} \mu(2\gamma-\mu) \ge \mu \gamma \err^{(t+i)}(w^{(i)}),
\end{align*}
when $\mu\le \gamma$.
For the second term, according to the drift rate assumption, we have 
\begin{align*}
    \E_{(x,y)\sim D^{(t+i)}} \Ind\{h_{w^{(i)}}(x)\neq y\}\Ind\{yx\cdot w^* < \gamma\}\mu(2\gamma-\mu) \le 2\Delta W \mu\gamma
\end{align*}
For the third term, we have 
\begin{align*}
    \E_{(x,y)\sim D^{(t+i)}} \Ind\{h_{w^{(i)}}(x)\neq y\}\Ind\{yx\cdot w^* < \gamma\} \left( 2 \mu   yx\cdot w^* -\mu^2\right) \ge - \Delta W\mu.
\end{align*}
This implies $\E_{(x,y)\sim D^{(t+i)}} I_i \ge \mu \gamma \err^{(t+i)}(w^{(i)})-2\Delta W \mu$.

By taking a summation for $i\in [T]$, we have
\begin{align}\label{eq realizable q1}
  \frac{1}{T}  \sum_{i=1}^T  \err^{(t+i)}(w^{(i)})   \le \frac{4}{\mu\gamma T} + \frac{1}{\mu \gamma T}  \sum_{i=1}^T (\E_{(x,y)\sim D^{(t+i)}} I_i - I_i) + 2 \frac{\Delta T}{\gamma} \le 2\Delta^{1/2}\gamma^{-2/3} + \frac{1}{\mu \gamma T}\sum_{i=1}^T (\E_{(x,y)\sim D^{(t+i)}} I_i - I_i).
\end{align}
Here, the second inequality follows the choice of $\mu=\gamma$ and $T=(\gamma \Delta)^{-1/2}$. It remains to analyze the term $\frac{1}{\mu \gamma T}\sum_{i=1}^T (\E_{(x,y)\sim D^{(t+i)}} I_i - I_i)$. To do this, we upper bound the variance of $\E_{(x,y)\sim D^{(t+i)}} I_i - I_i$, which is at most
We have 
\begin{align*}
    \E_{(x,y)\sim D^{(t+i)}} I^2_i  & = \E_{(x,y)\sim D^{(t+i)}} \Ind\{h_{w^{(i)}}(x)\neq y\}\Ind\{yx\cdot w^* \ge \gamma\} \left( 2 \mu   yx\cdot w^* -\mu^2\right)^2 \\
    & + \E_{(x,y)\sim D^{(t+i)}} \Ind\{h_{w^{(i)}}(x)\neq y\}\Ind\{yx\cdot w^* < \gamma\} \left( 2 \mu   yx\cdot w^* -\mu^2\right)^2 \\
    & \le \mu^2 \err^{(t+i)}(w^{(i)}).
\end{align*}
Thus, by Bernstein's inequality, we have with probability at least $1-\delta$,
\begin{align}\label{eq realizable q2}
   \sum_{i=1}^T \E_{(x,y)\sim D^{(t+i)}} I_i - I_i \le O\left(\log(1/\delta)+\sqrt{\sum_{i=1}^T \mu^2 \err^{(t+i)}(w^{(i)})} \right).
\end{align}
Combining \eqref{eq realizable q1} and \eqref{eq realizable q2}, we have with probability at least $1-\delta$,
$\frac{1}{T}  \sum_{i=1}^T  \err^{(t+i)}(w^{(i)}) \le \tilde{O}(\Delta^{1/2}\gamma^{-2/3})$.
It remains to show that we are able to select a good hypothesis by looking at the empirical error. 

For each $j \in [T]$, we consider the halfspace $\hat{h}_j = h_{w^{(j)}}$. By the assumption on the drift rate, we have $\abs{\err^{((i+1)W)}(\hat{h}_j)-\err^{(t+\ell)}(\hat{h}_j)} \le \Delta W = \Delta^{1/2}\gamma^{-1/2}$ for every $\ell \in [W]$, which implies that 
\begin{align*}
    \abs{\err^{((i+1)W)}(\hat{h}_j)-\frac{2}{W}\sum_{\ell=1}^{W/2}\err^{(t+\ell+W/2)}(\hat{h}_j)} \le \Delta W.
\end{align*}

By Hoeffding's inequality, we have 
\begin{align*}
    \Pr\left(\hat{\err}(\hat{h}_j)-\frac{2}{W}\sum_{\ell=1}^{W/2}\err^{(t+\ell+W/2)}(\hat{h}_j)> \err^{((i+1)W)}(\hat{h}_j)\right)\le \delta.
\end{align*}
Thus, with probability at least $1-\delta$, we have $\abs{\hat{\err}(\hat{h}_j)-\err^{((i+1)W)}(\hat{h}_j)}\le \err^{((i+1)W)}(\hat{h}_j)$.
Since there is some $j^* \in [T]$ such that $\err^{(t+j^*)}(w^{(j^*)}) \le  \tilde{O}(\Delta^{1/2}\gamma^{-3/2})$, we know that $\err^{((i+1)W)}(w^{(j^*)}) \le \tilde{O}(\Delta^{1/2}\gamma^{-3/2})$. This implies the selected hypothesis that minimizes $\hat{\err}(h_j)$ must have $\err^{((i+1)W)}(\hat{h}) \le \tilde{O}(\Delta^{1/2}\gamma^{-3/2})$.
\end{proof}

\section{Omitted Proofs from \Cref{sec ldp hardness}}\label{app hard}

\subsection{Proof of \Cref{lm valid}}\label{app valid}

\begin{lemma}\label{lm valid}
    The instance constructed in \Cref{def hard instance} is a valid instance of the trajectory testing problem defined in \Cref{def testing} with $\eta = 1/3$.
\end{lemma}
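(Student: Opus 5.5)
To prove \Cref{lm valid}, I will check the requirements of \Cref{def testing} one at a time, for both hypotheses of \Cref{def hard instance}.

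\textbf{Null hypothesis.} This part is immediate. All $D^{(0)}_i$ coincide: the marginal is uniform on $\{\pm1\}^d$ and $y$ is independent of $x$. Note that \Cref{def hard instance} sets $\Pr[y=1]=1-\eta$ while \Cref{def testing} writes $\eta$. I will treat this as the label convention $y\mapsto -y$ (equivalently, the constant $+1$ halfspace with $\eta$-RCN), which does not affect validity.

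\textbf{Alternative hypothesis, rounds $i\le T-m$.} Each $D^{(v)}_i$ is the constant halfspace $+1$ with RCN of rate $\eta$. After homogenization by an extra coordinate, this is a margin halfspace.

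\textbf{Alternative hypothesis, rounds $i=T-m+j$, $j\in[m]$.}

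\emph{Existence of the threshold.} I will first show that a threshold $t_i$ with $\Pr_x[v\cdot x+t_i<0]=\Delta j$ exists. Since $v\cdot x$ takes only values in $\{-d,-d+2,\dots,d\}$, exact probabilities are not available. I plan to fix this by randomizing over a single level set, or by reading "$=\Delta j$" as "up to an $O(\Delta)$ rounding", and to say so explicitly.

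\emph{Margin.} Because $d$ is odd, choosing $t_i$ an even integer gives $|v\cdot x+t_i|\ge 1$. The norm is $\|(v,t_i)\|\le O(d)$, since $\Delta j\le \Delta m=\gamma^{-1/6}\Delta^{1/3}$ is small and so $|t_i|\le d$. Also $\|x\|=\sqrt d$. Together these give normalized margin $\Omega(1/d^{3/2})$ in the raw form. The paper's embedding into $2d$ dimensions is meant to give margin $\Theta(1/d)$ under an $\ell_\infty$ normalization. I will follow that remark and state the margin in its normalization, so that $\gamma=\Theta(1/d)$ up to the stated convention.

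\emph{RCN with rate $\eta_i\le\eta$.} The main computation is to verify that the label distribution is exactly RCN with rate $\eta_i=(\eta-\Delta j)/(1-2\Delta j)$. I will choose the conditional law so that $\Pr[y\ne h^{(v)}_i(x)\mid x]=\eta_i$ on both sides of the halfspace. Then the overall $\Pr[y=-1]$ equals
\[
\eta_i(1-\Delta j)+(1-\eta_i)\Delta j=\eta_i(1-2\Delta j)+\Delta j=\eta .
\]
So the label marginal matches $H_0$, which is the intended design. Finally, $\eta_i<\eta$ holds if and only if $\eta-\Delta j<\eta-2\eta\Delta j$, i.e. if and only if $2\eta<1$. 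This is true for $\eta=1/3$.

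\emph{Drift bound (the main obstacle).} The remaining step is to bound $\dtv(D_i,D_{i+1})\le\Delta$, including at the transition $i=T-m$. Both distributions share the uniform marginal, so the distance is $\E_x|P_i(y=1\mid x)-P_{i+1}(y=1\mid x)|$.

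\begin{itemize}
\item On points where the label of $h$ does not change, $|\eta_i-\eta_{i+1}|=O(\Delta)$, because $\eta_i$ is smooth in $\Delta j$ with derivative $(2\eta-1)/(1-2\Delta j)^2$, of magnitude about $1/3$.
\item On points that flip, the flipped set has mass exactly $\Delta$ (the threshold steps in between), and the conditional probability changes by at most $1-2\eta_i\le 1$.
\end{itemize}

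The total is roughly $\Delta(1-2\eta)+\Delta\cdot\tfrac13(1-2\Delta j)\approx\Delta$ with $\eta=1/3$, so the constant is tight. If it slightly exceeds $\Delta$, I will rescale the schedule to $p_j=c\Delta j$ with a constant $c<1$. This does not affect \Cref{th ldp} beyond constants, and I will note this adjustment.
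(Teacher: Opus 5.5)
Your drift computation is the right one, but it assumes that consecutive targets $h^{(v)}_i,h^{(v)}_{i+1}$ differ on a set of mass exactly $\Delta$. Neither of your patches for the threshold step supplies this, so there is a genuine gap here.

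Since $v\cdot x$ is integer-valued, $\{x: v\cdot x+t<0\}$ is a union of level sets $\{v\cdot x=s\}$. Any change of threshold therefore flips whole level sets. If $\{v\cdot x<a\}$ has mass $\epsilon<1/2$, the next level set $\{v\cdot x=a\}$ has mass $\Omega(\epsilon/\sqrt d)$. This follows from the tail ratio $\binom{d}{k-1}/\binom{d}{k}=k/(d-k+1)$, and from the local CLT in the bulk; in the tail the mass is of order $\epsilon\sqrt{\log(1/\epsilon)/d}$.

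Now $\{h^{(v)}_i=-1\}$ must grow to mass $\approx\Delta m$, so some round flips mass $\Omega(\Delta m/\sqrt d)$. Every RCN rate is at most $\eta=1/3$, so on flipped points $|\Pr_i[y=1\mid x]-\Pr_{i+1}[y=1\mid x]|=1-\eta_i-\eta_{i+1}\ge 1/3$. That round therefore has $\dtv(D_i,D_{i+1})=\Omega(\Delta m/\sqrt d)$.

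With $m=d^{1/6}\Delta^{-2/3}$ one has $m/\sqrt d=\Theta((\Delta m)^{-2})$. So this is $\Omega(\Delta\,(\Delta m)^{-2})\gg\Delta$ throughout the regime $\Delta m=o(1)$, which is where the $\Delta^{1/3}$ bound is meant to apply. Hence an ``$O(\Delta)$ rounding'' is not available. Rounding to achievable quantiles violates the drift constraint however you reschedule or re-tune the $\eta_i$.

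Randomizing inside a level set fails for a different reason. Per-sample randomization puts $\Pr[y=-1\mid x]$ strictly between $\eta_i$ and $1-\eta_i$ on that set, which is not RCN. A fixed random subset of a level set is, with high probability, not cut out by any halfspace at all.

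The paper's proof does not close this step either. It asserts $\Pr[v\cdot x=c]\le\binom{d}{d/2}2^{-d}\le 2^{-O(d)}$ and concludes that every level set has mass at most $\Delta/100$. That is exactly the fine granularity your argument needs. But $\binom{d}{\lfloor d/2\rfloor}2^{-d}=\Theta(d^{-1/2})$, and the estimate above shows that the level sets actually crossed by the schedule exceed $\Delta$. So a more careful write-up will not fix this. The construction itself (the marginal, or how far the threshold travels along a fixed $v$) has to change for the lemma to hold.

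Two smaller points:
\begin{enumerate}
\item Margins here are $\ell_2$ on $\s^{d-1}$, not $\ell_\infty$. The padding remark works in $\ell_2$ if you spread the threshold: take $\bar w=(v,t_i/d,\dots,t_i/d)$ and $\bar x=(x,1,\dots,1)$. Then $\bar w\cdot\bar x=v\cdot x+t_i$, $\|\bar w\|^2\le 2d$ (using your $|t_i|\le d$), and $\|\bar x\|^2=2d$, so the normalized margin is at least $1/(2d)$.
\item Given exact flip sets, your drift bound has slack rather than being tight. Exactly, $\dtv(D_i,D_{i+1})=(1-\Delta)|\eta_i-\eta_{i+1}|+\Delta(1-\eta_i-\eta_{i+1})=2(1-2\eta)\Delta(1+O(\Delta m))=\tfrac23\Delta(1+O(\Delta m))$. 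So no rescaling is needed; the paper's decomposition only yields $4\Delta$.
\end{enumerate}
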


\begin{proof}[Proof of \Cref{lm valid}]
    To show the construction in \Cref{def hard instance} is a valid instance, we will need to show that for each $v$ and $i \in [T]$, the defined halfspaces $h^{(v)}_i$ has margin $\gamma$ and for every $i \in [T]$, $\dtv(D^{(v)}_i,D^{(v)}_{i+1}) \le \Delta$.

    We start with analyzing the margin. Let $h^{(v)}_i = \sign(v \cdot x + t_i),$ since $v \in \{\pm 1\}^dz$, $x \in \{\pm 1\}$ and $d$ is odd, we have $ \abs{v \cdot x} \ge 1$. Since $(D^{(v)}_i)_x$ is the uniform distribution over $\{\pm 1\}^d$, we know that for any choice of $t_i$, there is an integer $t_i'$ such that for every $x \in \{\pm 1\}^d$, $\abs{v\cdot x+t_i'} \ge 1$, and 
    \begin{align*}
        \Pr_{D^{(v)}_i} \left(\sign(v \cdot x + t_i) = -1\right) = \Pr_{D^{(v)}_i} \left(\sign(v \cdot x + t_i') = -1\right)
    \end{align*}
This implies that by setting $t_i = t_i'$, halfspace $\sign(v \cdot x + t_i')$ has a margin $\Omega(1/d) = \Omega(\gamma)$.

It remains to show that for every $i \ge T-m$, we have $\dtv(D^{(v)}_i,D^{(v)}_{i+1}) \le \Delta$. Let $A \subseteq \{\pm 1\}^d \times \{\pm 1\}$ be any event. We have 
\begin{align*}
    \Pr_{D^{(v)}_i}(A) & = \Pr_{D^{(v)}_i}(A, h^{(v)}_i(x)\neq h^{(v)}_i(x)) + \Pr_{D^{(v)}_{i}}(A, h^{(v)}_i(x) = h^{(v)}_i(x)) \\
    & = \Pr_{D^{(v)}_i}(A, h^{(v)}_i(x)\neq h^{(v)}_i(x)) + \Pr_{D^{(v)}_{i}}(A, h^{(v)}_i(x) = h^{(v)}_i(x) = y(x)) + \Pr_{D^{(v)}_{i}}(A, h^{(v)}_i(x) = h^{(v)}_i(x) \neq y(x)).
\end{align*}
This implies that 
\begin{align*}
    \abs{\Pr_{D^{(v)}_i}(A) - \Pr_{D^{(v)}_{i+1}}(A)} & \le \abs{\Pr_{D^{(v)}_i}-\Pr_{D^{(v)}_{i+1}}(A, h^{(v)}_i(x)\neq h^{(v)}_i(x))} + \abs{\Pr_{D^{(v)}_i}-\Pr_{D^{(v)}_{i+1}}(A, h^{(v)}_i(x) = h^{(v)}_i(x) = y(x))} \\
    & +  \abs{\Pr_{D^{(v)}_i}-\Pr_{D^{(v)}_{i+1}}(A, h^{(v)}_i(x) = h^{(v)}_i(x) \neq y(x))} \\
    & \le \abs{\Pr_{D^{(v)}_i}-\Pr_{D^{(v)}_{i+1}}( h^{(v)}_i(x)\neq h^{(v)}_i(x))} + \abs{\Pr_{D^{(v)}_i}-\Pr_{D^{(v)}_{i+1}}( h^{(v)}_i(x) = h^{(v)}_i(x) = y(x))} \\
    & + \abs{\Pr_{D^{(v)}_i}-\Pr_{D^{(v)}_{i+1}}(A, h^{(v)}_i(x) = h^{(v)}_i(x) \neq y(x))} \\
    & \le \Delta + 2((\eta-\Delta (j+1))/(1-2\Delta (j+1)) - (\eta-\Delta j)/(1-2\Delta j) ) \\
    & \le 4\Delta.
\end{align*}
To conclude the proof of \Cref{lm valid}, it remains to show that we can actually make 
$\Pr_{D^{(v)}_i}(h^{(v)}_i(x) = -1) = \Delta j$ by showing that changing the threshold $t$ will not make $\Pr_{D^{(v)}_i}(h^{(v)}_i(x) = -1)$ dramatically change. Since $v \in \{\pm 1\}^d$ and $x \in \{\pm 1\}^d$, we know that $v\cot x \in [-d,d]$ is an integer. Since $(D^{(v)}_i)_x$ is a uniform distribution over $\{\pm 1\}$, we know that for every integer $c$,
\begin{align*}
    \Pr_{(D^{(v)}_i)_x}(v \cdot x = c) \le \binom{d}{d/2}/2^d \le 2^{-O(d)}.
\end{align*}
This implies that as long as $\gamma<O(\log^{-1}(1/\Delta))$, we can make $\Pr_{(D^{(v)}_i)_x}(v \cdot x = c) \le \Delta/100$. This implies that we are able to choose $h^{(v)}_i$ such that $\Pr_{D^{(v)}_i}(h^{(v)}_i(x) = -1) = \Delta j$.
\end{proof}





\subsection{Proof of \Cref{lm pair correlation}}\label{app pair correlation}

\begin{lemma}[Restatement of \Cref{lm pair correlation}]
Under the construction of \Cref{def hard instance}, we have 
   \begin{align*}
\langle (\bar{D^{(u)}}^{\le \ell,k}), (\bar{D^{(v)}}^{\le \ell,k})\rangle_{D^{(0)}} -1 =  \sum_{A \subseteq [m], A\neq \emptyset, \card{A} \le k} \prod_{j\in [A]} \langle   (\bar{D^{(u)}_{T-m+j}})^{\le \ell}(z_i)) ,   (\bar{D^{(v)}_{T-m+j}})^{\le \ell}(z_i)  \rangle_{D_i^{(0)}}-1
   \end{align*}
\end{lemma}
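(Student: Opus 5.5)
Since $D^{(0)}$ and every $D^{(v)}$ are product distributions over $T$ coordinates, the plan is to expand the likelihood ratio in a tensor-product orthonormal basis and compute the projected inner product coordinate-wise.

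\textbf{Step 1: factor the likelihood ratio.} The null $D^{(0)}=\bigotimes_{i=1}^T D^{(0)}_i$ is a product, and so is each alternative $D^{(v)}=\bigotimes_i D^{(v)}_i$, because the samples $z_i$ are independent. Hence
\[
\bar{D^{(v)}}(z)=\prod_{i=1}^T \bar{D^{(v)}_i}(z_i).
\]
For $i\le T-m$ we have $D^{(v)}_i=D^{(0)}_i$, so $\bar{D^{(v)}_i}\equiv 1$. For each $i$, fix an orthonormal basis $\{\psi_{i,a}\}_{a\ge 0}$ of $L^2(D^{(0)}_i)$ with $\psi_{i,0}=1$, chosen so that the functions of degree at most $\ell$ form an initial segment $a\in\mathcal{P}_\ell$. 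This is possible by Gram--Schmidt on monomials ordered by degree.

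\textbf{Step 2: the sample-wise degree projection is a truncation in the product basis.} The products $\Psi_{\mathbf a}(z)=\prod_i\psi_{i,a_i}$ form an orthonormal basis of $L^2(D^{(0)})$. The space $\mathcal{C}_{\ell,k}$ is exactly the span of those $\Psi_{\mathbf a}$ with every $a_i\in\mathcal{P}_\ell$ and $|\{i:a_i\neq 0\}|\le k$. A monomial of degree at most $\ell$ in each $z_i$, supported on $S$, expands into such basis elements supported on subsets of $S$, so the two spans agree. Therefore $f^{\le\ell,k}$ keeps exactly these coefficients. Write $\bar{D^{(v)}_i}=1+\sum_{a\ne0}c^{(v)}_{i,a}\psi_{i,a}$; the constant coefficient is $1$ because $\E_{D^{(0)}_i}\bar{D^{(v)}_i}=1$. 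Expanding the product gives
\[
\bar{D^{(v)}}^{\le\ell,k}=\sum_{S\subseteq[T],\,|S|\le k}\ \prod_{i\in S}\bigl((\bar{D^{(v)}_i})^{\le\ell}-1\bigr)(z_i).
\]
Here $(\bar{D^{(v)}_i})^{\le\ell}-1$ is the degree-$\le\ell$ part minus the constant. It is orthogonal to constants and vanishes for $i\le T-m$, so only $S\subseteq\{T-m+1,\dots,T\}$ survive.

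\textbf{Step 3: compute the inner product.} Terms with different $S$ are orthogonal under $D^{(0)}$, since each factor has mean zero in its own coordinate. Terms with the same $S$ factorize by independence. This yields
\[
\langle\bar{D^{(u)}}^{\le\ell,k},\bar{D^{(v)}}^{\le\ell,k}\rangle_{D^{(0)}}=\sum_{|S|\le k}\prod_{i\in S}\Bigl\langle(\bar{D^{(u)}_i})^{\le\ell}-1,\,(\bar{D^{(v)}_i})^{\le\ell}-1\Bigr\rangle_{D^{(0)}_i}.
\]
Each factor equals $\langle(\bar{D^{(u)}_i})^{\le\ell},(\bar{D^{(v)}_i})^{\le\ell}\rangle-1=I_j(u,v)$, because the cross terms with $1$ each contribute $1$. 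The term $S=\emptyset$ contributes $1$, which cancels the $-1$. Reindexing $i=T-m+j$ gives the claimed sum over nonempty $A\subseteq[m]$ with $|A|\le k$.

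\textbf{Main obstacle.} The delicate point is Step 2, namely justifying that projecting onto $\mathcal{C}_{\ell,k}$ coincides with coordinate-wise degree-$\ell$ truncation followed by keeping at most $k$ nonconstant factors. The key is that the per-coordinate polynomial filtrations are compatible with the tensor basis, so that $\mathcal{C}_{\ell,k}$ is spanned by a subset of an orthonormal basis. The remaining steps are bookkeeping.
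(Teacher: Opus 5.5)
Your proof is correct and follows the same route as the paper. Both factor $\bar{D^{(v)}}$ over samples, note that the first $T-m$ factors are identically $1$, expand the product into subset products of $(\bar{D^{(v)}_i})^{\le \ell}-1$, and then use orthogonality across distinct subsets together with independence within a subset. Your tensor-product orthonormal basis argument makes rigorous a step the paper only asserts: that terms containing a $(\cdot)^{>\ell}$ factor, or more than $k$ nonconstant factors, project to zero in $\mathcal{C}_{\ell,k}$.
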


\begin{proof}[Proof of \Cref{lm pair correlation}]
Since $D^{(u)}_i$
According to the construction of $D^{(0)}$ and $\mathcal{D}$, for every $u \in \{\pm 1\}^d$, 
we have $\bar{D^{(u)}} = \prod_{i=1}^T \bar{D^{(u)}_i}(z_i) = \prod_{i=T-m+1}^T \bar{D^{(u)}_i}(z_i)$. 
This implies that 
\begin{align*}
    \bar{D^{(u)}}^{\le \ell,k}(z) &  = \left(\prod_{i=T-m+1}^T \bar{D^{(u)}_i}(z_i)\right)^{\le \ell,k} = \left(\prod_{j=1}^m \bar{D^{(u)}_{T-m+j}}(z_i)\right)^{\le \ell,k}\\
    &= \left( \prod_{j=1}^m 1 + (\bar{D^{(u)}_{T-m+j}})^{\le \ell}(z_i)-1 + (\bar{D^{(u)}_{T-m+j}})^{> \ell}(z_i) \right)^{\le \ell,k} \\
    & = \sum_{A \subseteq [m],B \subseteq [m]\setminus A} \left( \prod_{j\in [A]} ((\bar{D^{(u)}_{T-m+j}})^{\le \ell}(z_i)-1) \prod_{j\in [m]\setminus (A\cup B)} (\bar{D^{(u)}_{T-m+j}})^{> \ell}(z_i) \right)^{\le \ell,k} \\
    & = \sum_{A \subseteq [m]} \left( \prod_{j\in [A]} ((\bar{D^{(u)}_{T-m+j}})^{\le \ell}(z_i)-1) \right)^{\le \ell,k} \\
    & = \sum_{A \subseteq [m], \card{A}\le k} \left( \prod_{j\in [A]} ((\bar{D^{(u)}_{T-m+j}})^{\le \ell}(z_i)-1) \right)^{\le \ell,k}
\end{align*}
Here, the second last equation follows $\left( \prod_{j\in [A]} ((\bar{D^{(u)}_{T-m+j}})^{\le \ell}(z_i)-1) \prod_{j\in [m]\setminus (A\cup B)} (\bar{D^{(u)}_{T-m+j}})^{> \ell}(z_i) \right)^{\le \ell,k} = 0$, when $[m]\setminus (A \cup B) \neq \emptyset$. In the last equation, we use the fact that $\left( \prod_{j\in [A]} ((\bar{D^{(u)}_{T-m+j}})^{\le \ell}(z_i)-1) \right)^{\le \ell,k} = 0$, when $\card{A}>k$. Furthermore, let $A,B \subseteq [m]$ and $A \neq B$, by independence, we have
\begin{align*}
    \langle \left( \prod_{j\in [A]} ((\bar{D^{(u)}_{T-m+j}})^{\le \ell,k}(z_i)-1) \right)^{\le \ell}, \left( \prod_{j\in [B]} ((\bar{D^{(v)}_{T-m+j}})^{\le \ell}(z_i)-1) \right)^{\le \ell,k} \rangle_{D^{(0)}} = 0
\end{align*}

This implies
\begin{align*}
    &\langle (\bar{D^{(u)}}^{\le \ell,k}), (\bar{D^{(v)}}^{\le \ell,k})\rangle_{D^{(0)}} -1\\
    =& \langle \sum_{A \subseteq [m], \card{A} \le k} \left( \prod_{j\in [A]} ((\bar{D^{(u)}_{T-m+j}})^{\le \ell}(z_i)-1) \right)^{\le \ell,k}, \sum_{A \subseteq [m]} \left( \prod_{j\in [A]} ((\bar{D^{(v)}_{T-m+j}})^{\le \ell}(z_i)-1) \right)^{\le \ell,k}\rangle_{D^{(0)}}-1 \\
     =& \sum_{A \subseteq [m], \card{A} \le k} \langle \left( \prod_{j\in [A]} ((\bar{D^{(u)}_{T-m+j}})^{\le \ell}(z_i)-1) \right)^{\le \ell,k}, \left( \prod_{j\in [A]} ((\bar{D^{(v)}_{T-m+j}})^{\le \ell}(z_i)-1) \right)^{\le \ell,k} \rangle_{D^{(0)}}-1 \\
     =& \sum_{A \subseteq [m], \card{A} \le k} \langle  \prod_{j\in [A]} ((\bar{D^{(u)}_{T-m+j}})^{\le \ell}(z_i)-1) , \prod_{j\in [A]} ((\bar{D^{(v)}_{T-m+j}})^{\le \ell}(z_i)-1)  \rangle_{D^{(0)}}-1 \\
     = & \sum_{A \subseteq [m], A\neq \emptyset, \card{A} \le k} \prod_{j\in [A]} \langle   ((\bar{D^{(u)}_{T-m+j}})^{\le \ell}(z_i)-1) ,   ((\bar{D^{(v)}_{T-m+j}})^{\le \ell}(z_i)-1)  \rangle_{D_i^{(0)}} \\
     = & \sum_{A \subseteq [m], A\neq \emptyset, \card{A} \le k} \prod_{j\in [A]} \langle   (\bar{D^{(u)}_{T-m+j}})^{\le \ell}(z_i)) ,   (\bar{D^{(v)}_{T-m+j}})^{\le \ell}(z_i)  \rangle_{D_i^{(0)}}-1.
\end{align*}
Here, the third equation follows the fact that $\prod_{j\in [A]} ((\bar{D^{(u)}_{T-m+j}})^{\le \ell}(z_i)-1) \in \mathcal{C}_{\ell, k}$ and the last equation follows the independence of each $z_i$.
\end{proof}

With \Cref{lm pair correlation}, it remains to bound the pairwise correlation.

\subsection{Proof of \Cref{lm pairwise}}\label{app pairwise}

We restate the lemma below. 

\begin{lemma}[Restatement of \Cref{lm pairwise}]
Let $c \in (0,1/2)$ be a constant. For $i = T-m+j, j \in [m]$ and $v,u \in \{\pm 1\}^d$ such that $\abs{v\cdot u}\le d^{1-c}$, we have
\begin{align*}
    \langle   \bar{D^{(u)}_{T-m+j}}(z_i) ,   \bar{D^{(v)}_{T-m+j}}(z_i) \rangle_{D_i^{(0)}}-1 \le \tilde{O}(d^{-(1/2-c)}(\Delta j)^2).
\end{align*}
\end{lemma}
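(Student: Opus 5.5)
We compute the relative density explicitly and reduce the inner product to a correlation of two threshold functions under the uniform measure on the hypercube. Fix $j$ and write $p=\Delta j$. Under $D^{(0)}_i$ the pair $(x,y)$ has $x$ uniform and $y$ independent of $x$ with $\Pr[y=1]=1-\eta$. Under $D^{(u)}_i$ we have $\Pr[y=1\mid x]=1-\eta$ when $h^{(u)}_i(x)=1$. When $h^{(u)}_i(x)=-1$ we have $\Pr[y=1\mid x]=\eta'$, where $\eta'=(\eta-p)/(1-2p)$ is the noise rate. The label flip rate was chosen so that the marginal of $y$ is close to the null marginal. The first step is to write
\begin{align*}
\bar{D^{(u)}_i}(x,y)=1+a(y)\,\Ind\{h^{(u)}_i(x)=-1\}+b(y)\,\Ind\{h^{(u)}_i(x)=1\},
\end{align*}
where $a(y)=\Pr_{D^{(u)}}[y\mid h=-1]/\Pr_{D^{(0)}}[y]-1$, and $b(y)$ is defined analogously. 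Here $a$ is of constant size and $b=O(p)$ for $\eta=1/3$. I would rather use the cleaner form $\bar{D^{(u)}_i}=1+\psi(y)\,(\Ind\{h^{(u)}(x)=-1\}-p)+c(y)$, where $\E_{D^{(0)}}\psi(y)=0$ and $c(y)=O(p)$ captures the global label-marginal shift. Since $x$ and $y$ are independent under the null, the inner product factorizes over the label and the covariate.

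Expanding $\langle\bar{D^{(u)}_i},\bar{D^{(v)}_i}\rangle_{D^{(0)}}-1$, the cross terms between $c(y)$ and the centered indicators vanish, because $\E_x[\Ind\{h^{(u)}=-1\}-p]=0$. The term $\E_y[c(y)^2]$ is $O(p^2)$, and I expect the choice of $\eta'$ to make $c$ vanish or be of lower order. The main term is $\E_y[\psi(y)^2]\cdot\mathrm{Cov}_x(\Ind\{u\cdot x<-t\},\Ind\{v\cdot x<-t\})$, and $\E_y[\psi(y)^2]=O(1)$. The self-correlation bound, the second claim of the lemma, follows immediately: for $u=v$ the covariance is $p(1-p)\le p=\Delta j$.

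The main obstacle is the cross-covariance. We need
\begin{align*}
\Pr_x[u\cdot x<-t,\ v\cdot x<-t]-p^2=\tilde O\big(d^{-(1/2-c)}p^2\big)\quad\text{when }\abs{u\cdot v}\le d^{1-c}.
\end{align*}
The plan is a Hermite/Fourier tail argument. Write $f_u=\Ind\{u\cdot x<-t\}-p$. Since the pair $(u\cdot x,v\cdot x)/\sqrt d$ is approximately a bivariate Gaussian with correlation $\rho=u\cdot v/d\le d^{-c}$, the covariance is $\sum_{k\ge1}\rho^k\hat f(k)^2$ in the Gaussian limit. The threshold satisfies $t\approx\sqrt{d\cdot2\log(1/p)}$, so $\hat f(k)^2\lesssim p^2\,\mathrm{polylog}(1/p)^k$. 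The covariance is therefore $O(\rho\, p^2\log(1/p))$ in the Gaussian model; equivalently, the bivariate Gaussian tail obeys $\Pr[\text{both}]\le p^2e^{O(\rho\log(1/p))}$. We transfer from Gaussian to hypercube via the Krawtchouk expansion of the indicator of a Hamming-weight level set, or via a local CLT with relative error. This transfer is where the error $d^{-1/2}$ enters and, combined with $\rho\le d^{-c}$, yields $\tilde O(d^{-(1/2-c)})$ relative accuracy. The logs are absorbed since $\gamma<\log^{-1}(1/\Delta)$ makes $\log(1/p)=O(d)$. The relative local CLT in the tail, needed because $p$ may be exponentially small, is the step I expect to require the most care; I would try a direct ratio bound on binomial coefficients first.
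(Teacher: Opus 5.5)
\paragraph{Reduction step.} Your reduction is the same as the paper's once the conditional law is written correctly. On $\{h^{(u)}_i=1\}$ one has $\Pr[y=1\mid x]=1-\eta_i$, not $1-\eta$. Then $\Pr_{D^{(u)}_i}[y=-1]=\eta_i+(1-2\eta_i)\Delta j=\eta$ exactly, so your $c(y)\equiv 0$, $\psi(-1)=(1-2\eta_i)/\eta$, $\psi(1)=-(1-2\eta_i)/(1-\eta)$. The quantity then equals $(1-2\eta_i)^2\bigl(\frac1\eta+\frac1{1-\eta}\bigr)\mathrm{Cov}_x\bigl(\Ind\{h^{(u)}_i=-1\},\Ind\{h^{(v)}_i=-1\}\bigr)$, which is the paper's formula. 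You need to verify this cancellation rather than ``expect'' it. A leftover $c(y)=\Theta(p)$ would add $\E[c(y)^2]=\Theta(p^2)$, which already exceeds the target.

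\paragraph{The gap: the covariance step.} In the Gaussian model you invoke, every term $\rho^k\hat f(k)^2$ is nonnegative for $\rho>0$. So the covariance is at least $\rho\hat f(1)^2\approx\rho p^2t^2\approx 2\rho p^2\log(1/p)$, using the Mills ratio with normalized threshold $t$. Since $\rho=u\cdot v/d$ may be as large as $d^{-c}$, the most your route can give is $\tilde O(d^{-c}p^2)$.

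Your sentence that the $d^{-1/2}$ hypercube-to-Gaussian error, ``combined with $\rho\le d^{-c}$'', yields $d^{-(1/2-c)}$ is not a valid step. The transfer error \emph{adds} to the Gaussian covariance, and nothing in your argument produces the factor $d^{c}$ in $d^{-(1/2-c)}=d^{-1/2}\cdot d^{c}$. For $c<1/4$ the target $d^{-(1/2-c)}$ is polynomially smaller than $d^{-c}$. So it cannot come out of this argument under the hypothesis $\abs{u\cdot v}\le d^{1-c}$ as written; in the Gaussian model the covariance is $\Omega(d^{-c}p^2\log(1/p))$ when $u\cdot v=d^{1-c}$.

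What actually drives the paper's proof is the property of the set $S$ in \Cref{def hard instance}: $\abs{u\cdot v}\le d^{1/2+c}$, i.e.\ $\abs{u\cdot v}/d\le d^{-(1/2-c)}$. The paper plugs this, with $\eps=\Delta j$, into \Cref{fact correlation} (Lemma 3.5 of \cite{diakonikolas2023information}), namely $\E[\Ind\{f_v=1\}\Ind\{f_u=1\}]\le C\log^2(d/\eps)\eps^2\abs{v\cdot u}/d+\eps^2$, and is done. Under that hypothesis your $\tilde O(\rho p^2)$ gives the claim directly, with no exponent juggling.

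\paragraph{The deferred estimate.} The step you postpone is a relative-error comparison of the joint hypercube tail with the Gaussian one, when $p$ can be as small as $\Delta$. That is precisely the content of the cited lemma, so as it stands your proposal does not establish the key estimate; the paper imports it rather than proving it.

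If you carry it out yourself, replacing $p^2e^{O(\rho\log(1/p))}$ by $p^2\bigl(1+O(\rho\log(1/p))\bigr)$ requires $\rho\log(1/p)=O(1)$. This is a condition tying $\Delta$ to $\rho$ (cf.\ the assumption $\Delta>2^{-1/\gamma^{c}}$ in \Cref{th ldp}). It does not follow from $\log(1/p)=O(d)$, which is the justification you give.
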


\begin{proof}[Proof of \Cref{lm pairwise}]
Recall that in \Cref{def hard instance}, for every $i = T-m+j, j \in [m]$ and for every $u \in \{\pm 1\}^d$, $D^{^{(u)}}$ is realized by halfspaces $h^{(u)}_i = \sign(v\cdot x +t_i)$ with RCN $\eta_i = (\eta-\Delta j)/(1-2\Delta j)$. In particular, $\Pr_{D^{(v)}_i}(h^{(v)}_i(x) = -1) = \Delta j$.
This implies that 
\begin{align*}
    \Pr_{D^{(v)}_i}(y(x) = -1) & = (1-\eta_i)\Pr_{D^{(v)}_i}(h^{(v)}_i(x) = -1) + \eta_i \Pr_{D^{(v)}_i}(h^{(v)}_i(x) = +1) \\
    & = \eta_i + (1-2\eta_i)\Pr_{D^{(v)}_i}(h^{(v)}_i(x) = -1) \\
    & = \frac{\eta-\Delta j}{1-2\Delta j} + (1-2\frac{\eta-\Delta j}{1-2\Delta j})\Delta j = \eta. 
\end{align*}
Since $D^{(0)}_i$ is a product distribution over $\{\pm 1\}^d \times \{\pm 1\}$ and $(D^{(0)}_i)_x$ is the uniform distribution over $\{\pm 1\}^d$, we have 
\begin{align*}
   & \langle   \bar{D^{(u)}_{i}}(z_i) ,   \bar{D^{(v)}_{i}}(z_i) \rangle_{D_i^{(0)}}-1 \\
    = & \sum_{y \in \{\pm 1\}}\sum_{x \in \{\pm 1\}^d} D^{(u)}_{i}(x,y)D^{(v)}_{i}(x,y)/D^{(0)}_i(x,y)-1 \\
     = & \eta (\sum_{x \in \{\pm 1\}^d} \frac{D^{(u)}_{i}(x)\mid_{y=-1}D^{(v)}_{i}(x)\mid_{y=-1}}{D^{(0)}_i(x)\mid_{y=-1})}-1)
     + (1-\eta)(\sum_{x \in \{\pm 1\}^d} \frac{D^{(u)}_{i}(x)\mid_{y=1}D^{(v)}_{i}(x)\mid_{y=1}}{D^{(0)}_i(x)\mid_{y=1})}-1) \\
      = & \eta (\sum_{x \in \{\pm 1\}^d} \frac{D^{(u)}_{i}(x)\mid_{y=-1}D^{(v)}_{i}(x)\mid_{y=-1}}{D^{(0)}_i(x)}-1)
     + (1-\eta)(\sum_{x \in \{\pm 1\}^d} \frac{D^{(u)}_{i}(x)\mid_{y=1}D^{(v)}_{i}(x)\mid_{y=1}}{D^{(0)}_i(x)}-1)
\end{align*}
Notice that 
\begin{align*}
    D^{(v)}_{i}(x)\mid_{y=-1} & = \left(\Ind\{h^{(v)}_i(x) = 1\} \eta_i D^{(0)}_i(x) + \Ind\{h^{(v)}_i(x) = -1\} (1-\eta_i) D^{(0)}_i(x)\right)/\eta \\
    & = \left( \eta_i + \Ind\{h^{(v)}_i(x) = -1\} (1-2\eta_i)\right)D^{(0)}_i(x)/\eta \\
    D^{(v)}_{i}(x)\mid_{y=1} & = \left(\Ind\{h^{(v)}_i(x) = -1\} \eta_i D^{(0)}_i(x) + \Ind\{h^{(v)}_i(x) = 1\} (1-\eta_i) D^{(0)}_i(x)\right)/(1-\eta) \\
    & = \left( 1-\eta_i - \Ind\{h^{(v)}_i(x) = -1\} (1-2\eta_i)\right)D^{(0)}_i(x)/(1-\eta).
\end{align*}
This implies that 
\begin{align*}
   &\langle   \bar{D^{(u)}_{i}}(x)\mid_{y=-1} ,   \bar{D^{(v)}_{i}}(x)\mid_{y=-1} \rangle_{D_i^{(0)}}-1\\
    = & \langle  \eta_i + \Ind\{h^{(v)}_i(x) = -1\} (1-2\eta_i)  ,  \eta_i + \Ind\{h^{(v)}_i(x) = -1\} (1-2\eta_i) \rangle_{D_i^{(0)}}/\eta^2-1 \\
    = & (\eta_i^2+ 2\eta_i(1-2\eta_i)\Pr_{D^{(v)}_i}(h^{(v)}_i(x) = -1) + (1-2\eta_i)^2 \E_{D^{(0)}_i}(\Ind\{h^{(v)}_i(x) = -1\}\Ind\{h^{(u)}_i(x) = -1 \}))/\eta^2 - 1 \\
    = & \left( \frac{1-2\eta_i}{\eta} \right)^2 \E_{D_i^{(0)}}\left((\Ind\{h^{(v)}_i(x) = -1\}-\Delta j)(\Ind\{h^{(u)}_i(x) = -1\}-\Delta j)\right)
\end{align*}
and
\begin{align*}
   &\langle   \bar{D^{(u)}_{i}}(x)\mid_{y=1} ,   \bar{D^{(v)}_{i}}(x)\mid_{y=1} \rangle_{D_i^{(0)}}-1\\
    = & \langle  1-\eta_i - \Ind\{h^{(v)}_i(x) = -1\} (1-2\eta_i)  ,  1-\eta_i - \Ind\{h^{(v)}_i(x) = -1\} (1-2\eta_i) \rangle_{D_i^{(0)}}/(1-\eta)^2-1 \\
    = & ((1-\eta_i)^2- 2(1-\eta_i)(1-2\eta_i)\Pr_{D^{(v)}_i}(h^{(v)}_i(x) = -1) + (1-2\eta_i)^2 \\
    &\E_{D^{(0)}_i}(\Ind\{h^{(v)}_i(x) = -1\}\Ind\{h^{(u)}_i(x) = -1 \}))/(1-\eta)^2 - 1 \\
    = & \left( \frac{1-2\eta_i}{1-\eta} \right)^2 \E_{D_i^{(0)}}\left((\Ind\{h^{(v)}_i(x) = -1\}-\Delta j)(\Ind\{h^{(u)}_i(x) = -1\}-\Delta j)\right).
\end{align*}
This gives
\begin{align*}
    \langle   \bar{D^{(u)}_{i}}(z_i) ,   \bar{D^{(v)}_{i}}(z_i) \rangle_{D_i^{(0)}}-1 & = (1-2\eta_i)^2 \E_{D_i^{(0)}}\left((\Ind\{h^{(v)}_i(x) = -1\}-\Delta j)(\Ind\{h^{(u)}_i(x) = -1\}-\Delta j)\right)(\frac{1}{\eta}+\frac{1}{1-\eta}) \\
    & \le O\left( \E_{D_i^{(0)}}\left((\Ind\{h^{(v)}_i(x) = -1\}(\Ind\{h^{(u)}_i(x) = -1\}-(\Delta j)^2)\right) \right).
\end{align*}

\begin{fact}[Lemma 3.5 in \cite{diakonikolas2023information}]\label{fact correlation}
For $v,u \in \{\pm 1\}^d$ such that $\abs{v\cdot u}\le d^{1-c}$ and  halfspace $f_v(x),f_u(x)$ satisfies  $\Pr_{x\sim \{\pm 1\}^d}(f_v(x) = 1) = \Pr_{x\sim \{\pm 1\}^d}(f_v(x) = 1) = \eps$, $\eps \in (0,1)$ small enough, there is some absolute constant $C>0$ such that $\E_{x\sim \{\pm 1\}^d} \Ind\{f_v(x)= 1\}\Ind\{f_u(x)= 1\} \le C \log^2(d/\esp)\eps^2 \abs{v\cdot u}/d+\epsilon^2$.
\end{fact}
Applying \Cref{fact correlation} to $-h^{(v)}_i,-h^{(u)}_i$ with $\epsilon = \Delta j$, we have 
\begin{align*}
    \E_{D_i^{(0)}}\left((\Ind\{h^{(v)}_i(x) = -1\}(\Ind\{h^{(u)}_i(x) = -1\}-(\Delta j)^2)\right) \le \tilde{O}(d^{-(1/2-c)}(\Delta j)^2) \;.
\end{align*}
%
\end{proof}

\subsection{Proof of \Cref{th ldp}}\label{app ldp}

We restate the theorem below. 

\begin{theorem}[Restatement of \Cref{th ldp}]
    Let $c \in (0,1/2)$ be a constant. For $\eta=1/3$ and $\Delta> 2^{-1/\gamma^c}$, there is no polynomial $p(z)$ with degree less than $O(\gamma^{-c/4})$ that is a $1$-distinguisher for the  trajectory testing problem defined in \Cref{def testing}, with instance constructed by \Cref{def hard instance}.
\end{theorem}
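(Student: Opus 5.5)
The plan is to go through \Cref{fact likelihood} with $\ell=\infty$ and $k$ equal to the degree bound. Every polynomial of degree at most $k$ lies in $\mathcal{C}_{\infty,k}$. So it suffices to show
\[
\norm{\E_{D\sim\mathcal{D}}(\bar{D}^{\le\infty,k})-1}_{D^{(0)}}^2=\E_{u,v\sim S}\big[\langle \bar{D^{(u)}}^{\le\infty,k},\bar{D^{(v)}}^{\le\infty,k}\rangle_{D^{(0)}}\big]-1\le 1 .
\]
Rescaling $p$ to unit variance under $D^{(0)}$ then rules out $1$-distinguishers. By \Cref{lm pair correlation} (with $\ell=\infty$, where the projections act trivially on single-sample factors), the left side equals
\[
\sum_{\emptyset\neq A\subseteq[m],\,|A|\le k}\E_{u,v}\prod_{j\in A}I_j(u,v).
\]
It therefore remains to bound $\E_{u,v}\prod_{j\in A}I_j(u,v)$ by $(1/(100m))^{|A|}$ for each such $A$. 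Summing over $A$ then gives $\sum_{s\ge1}\binom{m}{s}(100m)^{-s}\le e^{1/100}-1\le 1$.

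To bound each term, I split on whether $u=v$. Since $u,v$ are drawn uniformly from $S$ with $|S|=2^{d^c}$, we have $\Pr[u=v]=2^{-d^c}$. On this event \Cref{lm pairwise} gives $I_j(u,u)=O(\Delta j)\le O(\Delta m)\le 1$, since $\Delta m=\gamma^{-1/6}\Delta^{1/3}$ is small when $\Delta\ll\gamma^{1/2}$; otherwise I still use the trivial bound $O(\Delta m)^{|A|}$. The contribution is then at most $2^{-d^c}O(\Delta m)^{|A|}$. The hypothesis $\Delta>2^{-1/\gamma^c}$ together with $d=\Theta(1/\gamma)$ gives $2^{-d^c}\lesssim\Delta$. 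With $m=\gamma^{-1/6}\Delta^{-2/3}$ and $|A|\le\gamma^{-c/4}$, I must check that $2^{-d^c}\le(100m)^{-|A|}$. Since $\log(100m)=O(\log(1/\Delta)+\log(1/\gamma))$, this needs $d^c\gtrsim|A|\log m$. This is where the degree cap $\gamma^{-c/4}$ and the lower bound on $\Delta$ interact, and I expect to tighten constants or the exponent here.

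For $u\neq v$, $|u\cdot v|\le d^{1/2+c}\le d^{1-c'}$ for a suitable constant $c'$, so \Cref{lm pairwise} gives
\[
|I_j(u,v)|\le\tilde O\big(d^{-(1/2-c')}(\Delta j)^2\big)\le\tilde O\big(\gamma^{1/2-c'}(\Delta m)^2\big).
\]
Plugging in $m$, $(\Delta m)^2=\gamma^{-1/3}\Delta^{2/3}$, so $m|I_j|\le\tilde O(\gamma^{1/2-c'-1/3-1/6}\Delta^{2/3-2/3})=\tilde O(\gamma^{-c'})$. This is where I expect the main obstacle. The exponents in $m=\gamma^{-1/6}\Delta^{-2/3}$ are tuned so that $\Delta$ cancels, leaving only a $\gamma$-power and polylog factors. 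I need this to be at most $1/100$, so I would sharpen the correlation bound by using the actual inner product $|u\cdot v|\le d^{1/2+c}$ in \Cref{fact correlation}. That gives $|I_j|\lesssim\log^2(d/\Delta)(\Delta j)^2 d^{-1/2+c}$, a factor $d^{-1/2+c}=\gamma^{1/2-c}$. With the polylog absorbed via $\gamma<\log^{-1}(1/\Delta)$, the per-step quantity becomes $m\,|I_j|\le \gamma^{\Omega(1)}\le 1/100$ for small $\gamma$. If the exponent is too tight, I would reduce $m$ by a constant or polylog factor, which only changes the \Cref{thm:lb-inf} bound by lower-order terms.

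Combining the two cases, each term is at most $(1/(100m))^{|A|}$. The binomial sum then yields the desired bound $\le 1$, and \Cref{fact likelihood} concludes that no polynomial of degree below $O(\gamma^{-c/4})$ is a $1$-distinguisher.
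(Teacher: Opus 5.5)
The step you flag as the main obstacle is a genuine gap, and your proposed repair rests on an exponent slip. Your overall reduction matches the paper's route: \Cref{fact likelihood} with $\ell=\infty$, \Cref{lm pair correlation}, the split into $u=v$ and $u\neq v$, the per-term target $(1/(100m))^{\abs{A}}$, and the binomial sum.

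\textbf{The $u\neq v$ case.} Plugging $\abs{u\cdot v}\le d^{1/2+c}$ into \Cref{fact correlation} is not a sharpening of \Cref{lm pairwise}. It is exactly what produces that lemma's conclusion $\tilde O(d^{-(1/2-c)}(\Delta j)^2)$. With $m=\gamma^{-1/6}\Delta^{-2/3}$ you have $m(\Delta m)^2=\Delta^2m^3=\gamma^{-1/2}\asymp d^{1/2}$. Hence $m\abs{I_j(u,v)}\lesssim \log^2(d/\Delta)\, d^{1/2}\, d^{-1/2+c}\asymp \gamma^{-c}\log^2(d/\Delta)$, which diverges rather than being $\gamma^{\Omega(1)}$. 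Since the loss is polynomial, shrinking $m$ by a constant or polylog cannot rescue a worst-case-over-pairs argument. You would need $m\lesssim \gamma^{-1/6+c/3}\Delta^{-2/3}/\mathrm{polylog}$, which is a different instance from \Cref{def hard instance}.

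\textbf{The $u=v$ case.} You leave this open, and it fails in the stated range. The hypothesis $\Delta>2^{-1/\gamma^c}$ permits $\log m\asymp\gamma^{-c}\asymp d^c$, so your requirement $d^c\gtrsim\abs{A}\log m$ is violated for $\abs{A}$ near $\gamma^{-c/4}$. You would need something like $\log(1/\Delta)=o(\gamma^{-3c/4})$, or a smaller degree cap.

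\textbf{The paper's proof has the same holes.} Its final display asserts $(d^{-1/6+c}\Delta^{2/3})^{\abs{A}}\le (d^{-1/6}\Delta^{2/3})^{\abs{A}}=(1/(100m))^{\abs{A}}$. This is false for $c>0$, and it also drops the polylog and the factor $100$. It bounds the diagonal term without justification as well.

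\textbf{At this $m$, no argument can give your per-term bound once $\Delta m\to 0$.} The statement itself fails there, as the following degree-$4$ distinguisher shows.
\begin{itemize}
\item The region $\{h^{(v)}_{T-m+j}=-1\}$ has mass $\Delta j$ and lies in the tail $v\cdot x\lesssim -\theta_j\sqrt d$, where $\theta_j\approx\sqrt{2\ln(1/(\Delta j))}$.
\item Write $(x_j,y_j)$ for the sample at time $T-m+j$. By coordinate symmetry, $\beta_j:=\E_{D^{(v)}_{T-m+j}}[(y-1+2\eta)x]$ is a positive multiple of $v$, with $\alpha_j:=\norm{\beta_j}\approx\frac23\theta_j\Delta j$.
\item Take $p(z)=\sum_{j<j'}\alpha_j\alpha_{j'}(y_j-1+2\eta)(y_{j'}-1+2\eta)\,x_j\cdot x_{j'}$, a degree-$4$ polynomial.
\item Under $H_0$, $p$ has mean $0$ and variance $16\eta^2(1-\eta)^2 d\sum_{j<j'}\alpha_j^2\alpha_{j'}^2$.
\item Under every $D^{(v)}$, $p$ has mean $\sum_{j<j'}\alpha_j^2\alpha_{j'}^2$.
\item Its normalized advantage is therefore of order $\sum_j\alpha_j^2/\sqrt d\asymp \ln(1/(\Delta m))\,\Delta^2m^3/\sqrt d\asymp\ln(1/(\Delta m))$. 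This exceeds $1$ once $\Delta m$ is below an absolute constant.
\end{itemize}
Consequently the $\abs{A}=2$ terms alone sum to $\Omega(\ln^2(1/(\Delta m)))$. Your bound would make them at most $\binom m2(100m)^{-2}<10^{-4}$.

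\textbf{What a correct version needs.} You must shrink $m$ by at least a $\ln^{1/3}(1/(\Delta m))$ factor. Then you must either average over $u,v$, using moments of $u\cdot v/\sqrt d$ for a random $S$ instead of the worst case $\abs{u\cdot v}\le d^{1/2+c}$, or accept the polynomially smaller $m$ above. You must also strengthen the hypothesis on $\Delta$ to control the diagonal term.
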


\begin{proof}[Proof of \Cref{th ldp}]
    By \Cref{fact likelihood}, to show there is no polynomial with degree $k=O(d^{c/4})$ that is a $O(1)$-distinguisher for $D^{(0)}$ and $D$, 
    it suffices to upper bound $\norm{\E_{D\sim \mathcal{D}}(\bar{D}^{\le \infty,k})-1}_{D^{(0)}}$ for $D^{(0)}$ and $\mathcal{D}$ constructed in \Cref{def hard instance}. We have 
    \begin{align*}
        \norm{\E_{D\sim \mathcal{D}}(\bar{D}^{\le\infty, k})-1}_{D^{(0)}}^2 & = \langle \E_{D ^{(u)}\sim \mathcal{D}}(\bar{D^{(u)}}^{\le\infty, k})-1,\E_{D ^{(v)}\sim \mathcal{D}}(\bar{D^{(v)}}^{\le\infty, k})-1\rangle_{D^{(0)}} \\
        & = \langle \E_{D ^{(u)}\sim \mathcal{D}}(\bar{D^{(u)}}^{\le\infty, k}),\E_{D ^{(v)}\sim \mathcal{D}}(\bar{D^{(v)}}^{\le\infty, k})\rangle_{D^{(0)}} -1 \\
        & = \E_{D ^{(u)},D ^{(v)}\sim \mathcal{D}} \langle (\bar{D^{(u)}}^{\le \infty,k}), (\bar{D^{(v)}}^{\le\infty, k})\rangle_{D^{(0)}} -1,
    \end{align*}
where in the second equality, we use the fact that $\langle \bar{D^{(u)}}^{\le\infty, k},1\rangle_{D^{(0)}} = 1$. By \Cref{lm pair correlation}, we have 
\begin{align*}
   &\E_{D ^{(u)},D ^{(v)}\sim \mathcal{D}} \langle (\bar{D^{(u)}}^{\le \infty,k}), (\bar{D^{(v)}}^{\le \infty,k})\rangle_{D^{(0)}} -1 \\
   = &  \sum_{A \subseteq [m], A\neq \emptyset, \card{A} \le k} \E_{D ^{(u)},D ^{(v)}\sim \mathcal{D}} \prod_{j\in [A]} \left( \langle   \bar{D^{(u)}_{T-m+j}}(z_i)) ,   \bar{D^{(v)}_{T-m+j}}(z_i)  \rangle_{D_i^{(0)}}-1\right).
\end{align*}
Thus, to conclude the proof, it remain to upper bound 
\begin{align*}
    \E_{D ^{(u)},D ^{(v)}\sim \mathcal{D}} \prod_{j\in [A]} \left( \langle   \bar{D^{(u)}_{T-m+j}}(z_i)) ,   \bar{D^{(v)}_{T-m+j}}(z_i)  \rangle_{D_i^{(0)}}-1\right).
\end{align*}
We have 
\begin{align*}
    &\E_{D ^{(u)},D ^{(v)}\sim \mathcal{D}} \prod_{j\in [A]} \left( \langle   \bar{D^{(u)}_{T-m+j}}(z_i)) ,   \bar{D^{(v)}_{T-m+j}}(z_i)  \rangle_{D_i^{(0)}}-1\right) \\
    = & \frac{1}{\card{S}} \prod_{j\in [A]} (\Delta j) + (1-\frac{1}{\card{S}}) \prod_{j\in [A]}(d^{-(1/2-c)}(\Delta j)^2) \\
    \le & 2^{-\Omega(d^c)}(\Delta m)^{\card{A}} + (d^{-(1/2-c)}(\Delta m)^2)^{\card{A}} \\
    \le & 2^{-\Omega(d^c)}(d^{1/6}\Delta^{1/3})^{\card{A}} + (d^{-(1/2-c)\card{A}})(d^{1/6}\Delta^{1/3})^{2\card{A}}
\end{align*}
Here, the inequality follows the choice of $m = d^{1/6}\Delta^{-2/3}$. When $\card{A} \le O(d^{c/4})$, we have 
\begin{align*}
    &2^{-\Omega(d^c)}(d^{1/6}\Delta^{1/3})^{\card{A}} + (d^{-(1/2-c)\card{A}})(d^{1/6}\Delta^{1/3})^{2\card{A}}\\
    =& 2^{-\Omega(d^c)}(d^{1/6}\Delta^{1/3})^{\card{A}} +(d^{-1/6+c}\Delta^{2/3})^{\card{A}} \le (d^{-1/6}\Delta^{2/3})^{\card{A}} = (1/100m)^{\card{A}}.
\end{align*}
This implies that by the choice of $k=d^{c/4}$,
\begin{align*}
    &\E_{D ^{(u)},D ^{(v)}\sim \mathcal{D}} \langle (\bar{D^{(u)}}^{\le \infty,k}), (\bar{D^{(v)}}^{\le \infty,k})\rangle_{D^{(0)}} -1 \\
   = &  \sum_{A \subseteq [m], A\neq \emptyset, \card{A} \le k} \E_{D ^{(u)},D ^{(v)}\sim \mathcal{D}} \prod_{j\in [A]} \left( \langle   \bar{D^{(u)}_{T-m+j}}(z_i)) ,   \bar{D^{(v)}_{T-m+j}}(z_i)  \rangle_{D_i^{(0)}}-1\right) \le \sum_{\ell =1}^{k}\binom{m}{\ell}(1/(100m))^\ell \le 1.
\end{align*}

\end{proof}

\subsection{From Testing to Learning}\label{app test-learn}

\begin{lemma}\label{lm test-learn}
    Let $\A$ be any algorithm such that for any instance of learning $\gamma$-margin halfspaces under $\eta$-RCN it achieves error $\opt_i+ \Delta^{1/3}\gamma^{-1/6}$ for $i \ge T$. Then $\A$ can be used to efficiently solve the instance of trajectory testing problem defined in \Cref{def hard instance}.
\end{lemma}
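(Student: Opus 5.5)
Given a learner $\A$ with the guarantee of \Cref{lm test-learn}, I will build a tester for the instance of \Cref{def hard instance} as follows. Feed the first $T-1$ samples $z_1,\dots,z_{T-1}$ of the trajectory to $\A$ online and obtain its round-$T$ hypothesis $\hat h$. Use the last sample, or a few extra held-out samples if the testing problem allows them, to estimate $\Pr[\hat h(x)=-1]$ under the uniform distribution over $\{\pm 1\}^d$. This marginal is known and is the same under $H_0$ and $H_1$, so the learner's output can be evaluated by sampling $x$ ourselves, with no labels needed. Output $H_1$ if this estimate exceeds a threshold of order $\Delta m$, and $H_0$ otherwise. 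The running time is that of $\A$ plus $\poly(1/(\Delta m))$ evaluations of $\hat h$.

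For correctness I first compute $\opt_T$ in both cases. Under $H_0$ the labels are independent of $x$ with $\Pr[y=1]=1-\eta$ and $\eta=1/3$. The all-ones halfspace is then optimal with $\opt_T=\eta$, and any $h$ has error $\eta+(1-2\eta)\Pr[h(x)=-1]$. So the guarantee $\err^{(T)}(\hat h)\le \opt_T+\Delta^{1/3}\gamma^{-1/6}$ forces $\Pr[\hat h=-1]\le 3\Delta^{1/3}\gamma^{-1/6}$.

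Under $H_1$ with direction $v$, round $T$ has ground truth $h^{(v)}_T$ with $\Pr[h^{(v)}_T=-1]=\Delta m$ and RCN rate $\eta_T=(\eta-\Delta m)/(1-2\Delta m)$. Hence $\opt_T=\eta_T$, and any $h$ has error $\eta_T+(1-2\eta_T)\Pr[h\ne h^{(v)}_T]$. The learner's guarantee gives $\Pr[\hat h\ne h^{(v)}_T]\le 3\Delta^{1/3}\gamma^{-1/6}$, and therefore $\Pr[\hat h=-1]\ge \Delta m-3\Delta^{1/3}\gamma^{-1/6}$.

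I also check that the input is legitimate for $\A$. Both hypotheses are valid drifting $\gamma$-margin RCN instances, by \Cref{lm valid} and by treating the constant-label case as a degenerate halfspace with threshold.

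The main obstacle is making the two ranges of $\Pr[\hat h=-1]$ separate with room to spare. With $m=\gamma^{-1/6}\Delta^{-2/3}$ we get $\Delta m=\gamma^{-1/6}\Delta^{1/3}$, which is the same order as the excess error $\Delta^{1/3}\gamma^{-1/6}$. Separation therefore needs the constants right. I would fix this by taking the learner's excess error to be $c\,\Delta^{1/3}\gamma^{-1/6}$ for a small constant $c$ (say $c=1/20$), or equivalently by enlarging $m$ by a constant factor in \Cref{def hard instance}; neither change affects the asymptotics of \Cref{th ldp}. The ranges then become $\Pr[\hat h=-1]\le 3c\,\Delta m$ under $H_0$ and $\Pr[\hat h=-1]\ge(1-3c)\Delta m$ under $H_1$. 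Setting the threshold at $\Delta m/2$, a Chernoff bound with $O(1/(\Delta m))$ samples of $x$ distinguishes the cases with constant probability.

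Finally I account for the failure probabilities. The learner succeeds with probability $9/10$ and the estimation step fails with small probability, so their union gives constant advantage. Since the learner only needs to run up to step $T\ge m$, the condition $T\ge\gamma^{-1/6}\Delta^{-2/3}$ in \Cref{thm:lb-inf} matches.
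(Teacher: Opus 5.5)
Your proposal is correct and follows essentially the same route as the paper: write the error under RCN as $\opt_T+(1-2\opt_T)\Pr[\hat h\neq h^*]$ under each hypothesis, and threshold $\Pr[\hat h=-1]$, which is close to $\Delta m$ under $H_1$ and a small fraction of $\Delta m$ under $H_0$. The constant-factor problem you flag is real, and the paper handles it the same way you do, by implicitly taking the excess error to be $d^{1/6}\Delta^{1/3}/100$ rather than $\Delta^{1/3}\gamma^{-1/6}$; your estimation of $\Pr[\hat h=-1]$ from self-generated uniform samples, and your accounting of failure probabilities, fill in steps the paper leaves implicit.
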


\begin{proof}[Proof of \Cref{lm test-learn}]
    We notice that by the choice of $m = d^{1/6}\Delta^{-2/3}$ and the design of $D^{(v)}_i$, we have $\Pr_{D^{(v)}_T}(h^{(v)}_T(x) = -1) = \Delta m = d^{1/6}\Delta^{1/3}$. Now, let $\hat{h}_T$ be the hypothesis output by $\A$ in the $T$-th round under $H_1$. We have 
    \begin{align*}
        \err_T(\hat{h}_T) = \opt_T + (1-2\opt_T) \Pr_{D^{(v)}_T}(h^{(v)}_T(x) \neq \hat{h}_T) \le \opt_T + d^{1/6}\Delta^{1/3}/100.
    \end{align*}
This implies $\Pr_{D^{(v)}_T}(h^{(v)}_T(x) \neq \hat{h}_T) \le d^{1/6}\Delta^{1/3}/20$ and 
\begin{align*}
 \Pr_{D^{(v)}_T}( \hat{h}_T \neq 1) \ge \Pr_{D^{(v)}_T}(h^{(v)}_T(x) \neq 1) -  \Pr_{D^{(v)}_T}(h^{(v)}_T(x) \neq \hat{h}_T) \ge 19 d^{1/6}\Delta^{1/3}/20.  
\end{align*}

Similarly, under $H_0$, we have
\begin{align*}
        \err_T(\hat{h}_T) = \eta + (1-2\eta) \Pr_{D^{(0)}_T}(\hat{h}_T \neq 1) \le \eta + d^{1/6}\Delta^{1/3}/100,
    \end{align*}
which implies that $\Pr_{D^{(v)}_T}( \hat{h}_T \neq 1) \le 3d^{1/6}\Delta^{1/3}/100$. This implies by running $\A$ over $z = (z_1,z_2,\dots,z_T)\sim D$ and checking the probability of $\Pr_{D^{(v)}_T}( \hat{h}_T \neq 1)$, we are able to distinguish $H_0$ and $H_1$.

\end{proof}

To conclude \Cref{thm:lb-inf}, we make use of the
low-degree polynomial hardness conjecture. Roughly speaking, the low-degree polynomial hardness conjecture says that for a natural high-dimensional statistical testing problem of size $n$, if there is no polynomial of degree $O(\log n)$ that can solve the testing problem, then there is no polynomial-time algorithm that can solve it either.
Originally stated in \cite{Hopkins-thesis}, variants of the conjecture have been used to give evidence of hardness for many problems, such as \cite{ding2021average,arpino2023statistical,ding2024subexponential}. We summarize the conjecture informally below.

\begin{conjecture}
    Let $\mathcal{P}_n$ and $\mathcal{Q}_n$ be two sequences of natural distributions of size $n$, if there is some $\eps>0$ such that there is no $(\log n)^{1+\eps}$-degree polynomial as $n \to \infty$, which can distinguish $\mathcal{P}_n$ and $\mathcal{Q}_n$, then there is no polynomial time algorithm that can distinguish $\mathcal{P}_n$ and $\mathcal{Q}_n$ with probability $1-o(1)$.
\end{conjecture}

By \Cref{th ldp}, we know that there is no polynomial of degree less than $O(\gamma^{-1/8})$ that can solve the trajectory testing problem and thus there is no polynomial time algorithm that can distinguish the trajectory testing problem. Thus, \Cref{lm test-learn} implies that there is no polynomial-time algorithm that can  solve the learning problem with error $\opt_T+\Delta^{1/3}\gamma^{-1/6}$.



\end{document}


\end{document}